\definecolor{bubbles}{rgb}{0.91, 1.0, 1.0}
\definecolor{bluegray}{rgb}{0.4, 0.6, 0.8}
\definecolor{electriclime}{rgb}{0.8, 1.0, 0.0}
\definecolor{malachite}{rgb}{0.04, 0.85, 0.32}
\definecolor{darkred}{rgb}{0.55, 0.0, 0.0}
\definecolor{darkblue}{rgb}{0.0, 0.0, 0.55}
\definecolor{darkgreen}{rgb}{0.0, 0.2, 0.13}
\definecolor{darkorchid}{rgb}{0.6, 0.2, 0.8}
\newcommand{\osq}{\mathsf{OSQ}}
\newcommand{\gso}{\boldsymbol{\mathsf{S}}}
\newcommand{\W}{\mathbf{W}}
\newcommand{\gph}{\mathsf{G}}
\newcommand{\MPNN}{\text{MPNN}}
\newcommand{\mixing}{\mathsf{mix}}
\newcommand{\h}{\mathbf{h}}
\newcommand{\x}{\mathbf{x}}
\newcommand{\final}{y^{(m)}_\mathsf{G}}
\newcommand{\OMEga}{\boldsymbol{\Omega}}
\newcommand{\mpass}{\boldsymbol{\mathsf{A}}}
\newcommand{\mess}{\psi}
\newcommand{\specomega}{\omega}
\newcommand{\specw}{\mathsf{w}}
\newcommand{\ER}{\mathsf{R}}
\newcommand{\DELta}{\boldsymbol{\Delta}}
\def\1{\bm{1}}
\DeclareMathAlphabet{\mathsfit}{\encodingdefault}{\sfdefault}{m}{sl}
\SetMathAlphabet{\mathsfit}{bold}{\encodingdefault}{\sfdefault}{bx}{n}
\newcommand{\R}{\mathbb{R}}
\newtheorem{theorem}{Theorem}[section]
\newtheorem{corollary}[theorem]{Corollary}
\newtheorem{definition}[theorem]{Definition}
\newcommand{\fref}[1] {Fig.~\ref{#1}}
\newcommand{\Tref}[1]{Table~\ref{#1}}
\newcommand\sbullet[1][.5]{\mathbin{\vcenter{\hbox{\scalebox{#1}{$\bullet$}}}}}
\definecolor{plt_green}{HTML}{008000}
\definecolor{plt_yellow}{HTML}{FFA500}
\definecolor{plt_red}{HTML}{FF0000}
\title{How does over-squashing affect the power of GNNs?}
\newcommand\blfootnote[1]{%
  \begingroup
  \renewcommand\thefootnote{}\footnote{#1}%
  \addtocounter{footnote}{-1}%
  \endgroup
}
\author{\name {Francesco Di Giovanni}$^\ast$ \email  francesco.di.giovanni@cs.ox.ac.uk \\
      \addr 
      University of Oxford
      \AND
      \name T. Konstantin Rusch$^\ast$ \email tkrusch@mit.edu \\
      \addr Massachusetts Institute of Technology
      \AND
      \name Michael M. Bronstein \\ 
      \addr University of Oxford 
      \AND
      \name Andreea Deac \\
    \addr Universit\'e de Montr\'eal 
    \AND
    \name Marc Lackenby \\
    \addr University of Oxford
     \AND 
     Siddhartha Mishra \\
    \addr ETH Z\"urich
    \AND Petar Veli\v{c}kovi\'{c}\\
    \addr Google DeepMind} 
\begin{document}

\maketitle

\begin{abstract}
Graph Neural Networks\blfootnote{$^\ast$ Equal contribution.}  (GNNs) are the state-of-the-art model for machine learning on graph-structured data. 
The most popular class of GNNs operate by exchanging information between adjacent nodes, and are known as Message Passing Neural Networks (MPNNs). 
While understanding the expressive power of MPNNs is a key
question, existing results typically consider settings with uninformative node features. In this paper, we provide a rigorous analysis to determine which function classes of node features can be learned by an MPNN of
a given capacity. We do so by measuring  the level of \emph{pairwise interactions} between nodes that MPNNs allow for. This measure provides a novel quantitative characterization of the so-called over-squashing effect, which is observed to occur when a large volume of messages is aggregated into fixed-size vectors.
Using our measure, we prove that, to guarantee sufficient communication between pairs of nodes, the capacity of the $\MPNN$ must be large enough, depending on properties of the input graph structure, such as commute times. For many relevant scenarios, our analysis results in impossibility statements in practice, showing that \emph{over-squashing hinders the expressive power of $\MPNN$s}. Our theory also holds for geometric graphs and hence extends to equivariant MPNNs on point clouds. We validate our analysis through extensive controlled experiments and ablation studies. 
\end{abstract}

\section{Introduction}
Graphs describe the relational structure for a large variety of natural and artificial systems, making learning on graphs imperative in many contexts \citep{velivckovic2023everything,dezoort2023graph,williamson2023deep}. To this end, Graph Neural Networks (GNNs) \citep{gori2005new,scarselli2008graph} have emerged as a widely popular framework for graph machine learning, with plentiful success stories in science  \citep{bapst2020unveiling,davies2021advancing,blundell2021towards,stokes2020deep,Liu2023} and technology \citep{monti2019fake,mirhoseini2021graph,derrow2021eta}. 

Given an underlying graph and features, defined on its nodes (and edges), as inputs, a GNN learns parametric functions from data. Due to the ubiquity of GNNs, characterizing their \textbf{expressive power}, i.e., which class of functions a GNN is able to learn, is a problem of great interest. In this context, most available results in literature on the universality of GNNs pertain to impractical higher-order tensors \citep{maron2019universality, keriven2019universal} or unique node identifiers that may break the symmetries of the problem \citep{Loukas2020What}. In particular, these results do not necessarily apply to 
Message Passing Neural Networks ($\MPNN$s) \citep{gilmer2017neural}, which have emerged as the most popular class of GNN models in recent years. Concerning expressivity results for $\MPNN$s, the most general available characterization is due to \cite{xu2018how} and \cite{morris2019weisfeiler}, who proved that $\MPNN$s are, at most, as powerful as the Weisfeiler-Leman graph isomorphism test \citep{weisfeiler1968reduction} in distinguishing graphs \emph{without any features}. This brings us to an important question:
 
\begin{figure}
    \centering
    \includegraphics[width=0.6\linewidth]{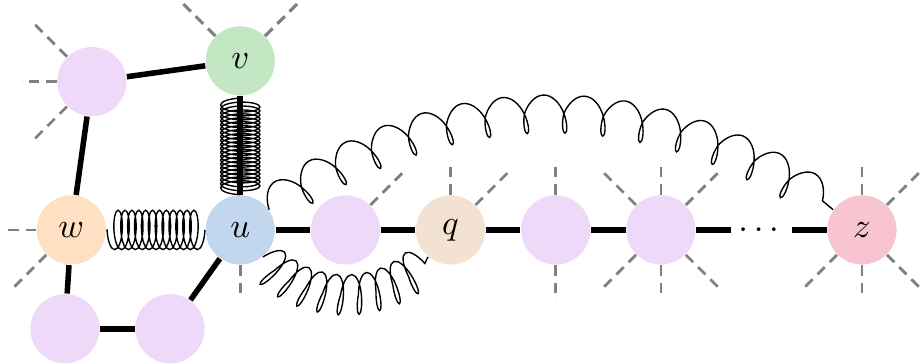}
    \caption{We study the power of  $\MPNN$s in terms of the mixing they induce among features and show that this is 
    affected by the model (via norm of the weights and depth) and the graph topology (via commute times). 
    For the given graph, the $\MPNN$ learns stronger  mixing (tight springs) for nodes $v,u$ and $u,w$ since their commute time is small, while nodes $u,q$ and $u,z$, with high commute-time, have weak mixing (loose springs). We characterize over-squashing as the inverse of the mixing induced by an $\MPNN$ and hence relate it to its power. In fact, the $\MPNN$ might require an impractical depth to solve tasks on the given graph that depend on high-mixing of features assigned to $u,z$.}
    \label{fig:diagram}
\end{figure}

\begin{center} 
Which classes of functions can $\MPNN$s of a given capacity learn, \emph{if node features are specified}?  

\end{center}
\noindent  
\citet{razin2022ability} address this question by characterizing the separation rank of $\MPNN$s; however, their analysis only covers \emph{unconventional} architectures that do not correspond to $\MPNN$ models used in practice. 
In contrast, \citet{alon2020bottleneck} investigate this question  
{\em empirically}, by observing that $\MPNN$s fail to solve tasks which involve {\em long-range interactions} among nodes.
This limitation was ascribed to a phenomenon termed as \textbf{over-squashing}, which loosely entails messages being ‘squashed’ into
fixed-size vectors %in an $\MPNN$ 
when the receptive field of a node grows too fast. 
This effect was formalized in \cite{topping2021understanding, di2023over, black2023understanding}, who showed that the Jacobian of the nodes features is affected by topological properties of the graph%input 
, such as curvature and effective resistance.
However, all the aforementioned papers ignore the specifics of the \emph{task} at hand, i.e., the underlying \emph{function} that the $\MPNN$ seeks to learn, %This means we are unable to use them to characterise which functions can be learnt by $\MPNN$s, 
leading us to the following question: 
\begin{center}
    How does {\em over-squashing affect the expressive power} of $\MPNN$s?
    Can we {\em measure} it? 
\end{center}

{\bf What about geometric graphs?} In many scientific applications, data come as graphs embedded in Euclidean space. Since popular architectures resort to the message-passing paradigm \citep{gasteiger2021gemnet, dauparas2022robust, batzner20223}, the expressive power of such models has been rephrased in the language of the WL test, once extended to account for the extra geometric information \citep{joshi2023expressive}.
%Understanding the expressive power of these models is hence relevant to assess the applicability thereof, and in this regard we already have extensions of the WL formalism to account for the extra geometric information \citep{joshi2023expressive}. 
Nonetheless, the questions raised above are even more pressing for these tasks, where the graph is typically derived from a point cloud using a cutoff radius, while the features also contain information about the positions in 3D space. In fact, for such problems where the features may carry more valuable information than the 2D graph structure, %where the graph is mostly a computational template to reduce the memory cost and restrict the messages to `nearby' points, 
we argue that proposing new ways to assess the power of message-passing other than (variants of) the WL test, is crucial. To this aim, in our paper we study generic message-passing equations with no assumptions on the nature of the features, meaning that they may also {\bf include additional positional information} if the dataset is a point cloud embedded in Euclidean space.  

\textbf{Contributions.} Our main goal %in this paper 
is to % address the above question and %quantitatively 
show how over-squashing can be understood as the {\em misalignment between the task and the graph-topology}, thereby {\em limiting the classes of functions that MPNNs of practical size can learn} (see \Cref{fig:diagram}). %To this end, 
We start by measuring the extent to which an $\MPNN$ allows any pair of nodes in the %underlying 
graph to interact (via {\bf mixing} their features). With this measure as a tool, we characterize which functions of node features can be learned by an $\MPNN$ and how the model architecture and parameters, as well as the topology of the graph, affect the expressive power. More concretely,

\begin{itemize}%[leftmargin=*]

\item We introduce a novel metric of expressive power that is based on the Hessian of the function learned by an MPNN and measures the ability of a model to mix features associated with different nodes. This approach complements the current repertoire of tools at disposal to assess the power of GNNs, and is of interest independent of the over-squashing phenomenon.

\item We then prove upper bounds on the Hessian of %graph- learn higher-olevel 
functions learned by $\MPNN$s, hence deriving limitations on the power of MPNNs to mix features   %to 
% and show how %that 
% their ability %of an $\MPNN$ 
% to mix features 
(i.e., model interactions) according to the novel metric mentioned above. %depends on \emph{both} its \emph{architecture} -- via norm of the weights and depth -- and the graph \emph{topology} -- via commute times. \textcolor{blue}{
As far as we know, this is the first theoretical result stating limitations of MPNNs to learn functions {\em and their derivatives}.

\item We characterize over-squashing as %in terms of 
the reciprocal of %the measure of 
the maximal mixing that $\MPNN$s can induce among nodes: {\em the higher this measure, the smaller %more restricted 
the class of functions $\MPNN$s can learn}. Differently from the WL-test, which depends on the separation of node representations based on the computational trees, our metric quantifies the ability of an MPNN to mix information associated with different nodes and the cost required to do so in terms of size of the weights and depth. %In fact, our results also apply even when powerful structural (positional) features are provided.

\item  We prove that the weights and depth must be sufficiently large -- depending on the topology -- 
 to ensure mixing. For some tasks, %if the weights have unit norm, then 
\emph{the depth must exceed the highest commute time on the graph}, resulting in \emph{impossibility} statements. Our results %complement the analysis on the limitations of MPNNs to distinguish graphs, by 
show that MPNNs of practical size, fail to learn functions with strong mixing among nodes at high commute time.

\item We illustrate our theoretical results with controlled experiments that verify our analysis, by highlighting the impact of the architecture (depth), of the topology (commute time), and of the underlying task (the level of mixing required).

\end{itemize}

Thus, we present a novel paradigm for deriving rigorous results in relating two key notions in graph machine learning, that of GNN expressive power and over-squashing. % and addressing a key conceptual and theoretical issue for GNNs. 

\section{The Message-Passing paradigm}\label{sec:2}

\textbf{Definitions on graphs.} We denote a graph by $\gph = (\mathsf{V},\mathsf{E})$, where $\mathsf{V}$ is the set of $n$ nodes while $\mathsf{E}$ are the edges. %, and we write $u\sim v$ when $u$ and $v$ are connected by an edge. 
We assume that $\gph$ is {\em undirected}, {\em connected} and non-bipartite and define the $n\times n$ adjacency matrix $\mathbf{A}$ as $A_{vu} = 1$ if $(v,u)\in\mathsf{E}$ and zero  otherwise. We let $\mathbf{D}$ be the diagonal degree matrix with $D_{vv} := d_v$ and use $d_{\mathrm{max}}$ and $d_{\mathrm{min}}$ to denote the maximal and minimal degrees. %We also denote the normalized graph-Laplacian by $\DELta = \mathbf{I} - \mathbf{D}^{-1/2}\mathbf{A}\mathbf{D}^{-1/2}$. 
Since we are interested in the over-squashing phenomenon, which affects the propagation of information, we need to quantify distances on $\gph$. We let $d_\gph(v,u)$ be the length of the shortest path connecting nodes $v$ and $u$ (geodesic distance).  While $d_\gph$ describes how far two nodes $u,v$ are in $\gph$, it does not account for how many different routes they can use to communicate. In fact, we will see below that the over-squashing of nodes $v,u$ and, more generally, the mixing induced by MPNNs among the features associated with $v,u$, can be better quantified by their \textbf{\em commute time} $\tau(v,u)$, equal to the expected number of steps for a random walk to start at $v$, reach $u$, and then come back to $v$.

\textbf{The MPNN-class.} For most problems, graphs are equipped with features $\{\x_v\}_{v\in\mathsf{V}}\subset\R^{d}$, whose matrix representation is $\mathbf{X}\in\R^{n\times d}$. % so that each node $v\in\mathsf{V}$ has a feature $\mathbf{x}_v$.
To study the interactions induced by a GNN among pairs of features, we focus  on {\em graph-level} tasks -- in Section \ref{app:sub:node_level} of the Appendix, we extend the discussion and our main theoretical results to {\em node-level} tasks. The goal then is to predict a function  $\mathbf{X}\mapsto y_\gph(\mathbf{X})$, where we assume that the graph $\gph$ is fixed and thus $y_\gph:\R^{n\times d}\rightarrow \R$ is a function of the node features. $\MPNN$s  define a family of parametric functions through iterative local updates of the node features: the feature of node $v$ at layer $t$ is derived as
\begin{equation}\label{eq:MPNN_multiset}
    \h_v^{(t)} = f^{(t)}\left(\h_v^{(t-1)}, g^{(t)}\left(\{\!\!\{\h_{u}^{(t-1)},\,\, (v,u)\in\mathsf{E} \}\!\!\}\right) \right), \quad \h_v^{(0)} = \mathbf{x}_v
\end{equation}
% \quad \h_v^{(0)} = \mathsf{ENC}(\mathbf{x}_v)
\noindent where $f^{(t)},g^{(t)}$ are learnable functions, $g^{(t)}$ is invariant to permutations and $\{\{\cdot\}\}$ denotes a multiset -- meaning terms can be repeated. %the ordering of the multi-set $\{\!\!\{\h_u^{(t-1)}\}\!\!\}$. 
Specifically, we study a class of $\MPNN$s of the following form,

\begin{equation}\label{eq:mpnn_message_functions}
    \h_v^{(t)} = \sigma\Big(\OMEga^{(t)}\h_v^{(t-1)} + \W^{(t)}\sum_{u}\mathsf{A}_{vu}\mess^{(t)}(\h_v^{(t-1)},\h_{u}^{(t-1)})\Big), \quad \h_v^{(0)} =\mathbf{x}_v,
\end{equation}
\noindent where $\sigma$ acts pointwise, $\OMEga^{(t)},\W^{(t)}\in\R^{d\times d}$ are weight matrices, $\mpass\in\R^{n\times n}$ is {\em any} matrix satisfying $\mathsf{A}_{vu} > 0$ if $(v,u)\in\mathsf{E}$ and zero otherwise -- $\mpass$ is typically some (normalized) version of the adjacency matrix $\mathbf{A}$ -- and $\mess^{(t)}$ is a learnable message function. The layer-update in \eqref{eq:mpnn_message_functions} includes common $\MPNN$-models such as GCN \citep{kipf2016semi}, SAGE \citep{ hamilton2017inductive}, GIN \citep{xu2018how}, and GatedGCN \citep{ bresson2017residual}. As commented extensively in Section \ref{sec:discussion}, this is the most general class of MPNN equations studied thus far in theoretical works on over-squashing; unless otherwise stated, all our considerations and analysis apply to MPNNs as in \eqref{eq:mpnn_message_functions}.
For graph-level tasks, a permutation-invariant  readout $\mathsf{READ}$ is required -- usually $\mathsf{MAX},\mathsf{MEAN}$, or $\mathsf{SUM}$. We define the graph-level function computed by the $\MPNN$ after $m$ layers to be 
\begin{equation}\label{eq:final_function}
    \final(\mathbf{X}) = \boldsymbol{\theta}^\top\mathsf{READ}(\{\!\!\{\h_v^{(m)}\}\!\!\}), 
\end{equation}
\noindent for some learnable $\boldsymbol{\theta}\in\R^{d}$. While 
a more complex non-linear readout can be considered, we restrict to a single linear-layer since we are interested in the mixing among variables induced by the $\MPNN$ itself through the topology (and not in readout, independently of the graph-structure).

\paragraph{MPNNs on geometric graphs.} We note that \eqref{eq:mpnn_message_functions} also describes a class of generic, {\em equivariant} MPNNs over a point cloud embedded in Euclidean space, once the matrix $\mpass$ is intended to encode the pairs of points that exchange information across each layer---typically creating a graph $\gph$ by drawing a weighted edge between any pair of points whose distance is smaller than a cutoff radius. In fact, throughout our analysis we have no restriction on the type of features $\mathbf{h}_v$, that can also contain the position of a node in 3D space. Accordingly, our theoretical results below hold for MPNNs on both 2D and 3D data, since they pertain to how the message passing paradigm models pairwise interactions among different points (nodes). To streamline the presentation, we often omit to specify that the features can also encode 3D information (when available). % and restrict to the 2D case.

\section{On the mixing induced by Message Passing Neural Networks}\label{sec:3}

As one of the main contributions of this paper, we propose a new framework for characterizing the expressive power of $\MPNN$s by \emph{estimating the amount of mixing they induce among pairs of features} $\mathbf{x}_v$ and $\mathbf{x}_u$, corresponding to the nodes $v$ and $u\in\mathsf{V}$. To motivate our definition, fix the underlying graph $\gph$, let $y_\gph$ be the ground-truth function to be learned, and suppose, for simplicity, that the node features $\{x_i\}$ are all scalar. If $y_\gph$ is a smooth function, then we can take the Taylor expansion of $y_\gph$ at any point $\bar{\mathbf{x}} = (\bar{x}_1,\ldots,\bar{x}_n)$ and obtain a polynomial in the variables $(x_1,\ldots,x_n)$, up to higher-order corrections. The {\em mixing} induced by $y_\gph$ on the features $x_v, x_u$ %around the feature value $\bar{\mathbf{x}}$ 
can then be expressed in terms of {\em mixed} product monomials of the form $x_vx_u$, and the powers thereof. The lowest-degree mixed monomials of this form are multiplied by the Hessian (i.e. the second-order derivatives) of $y_\gph$. Accordingly, we can take the entries $v,u$ of the Hessian of $y_\gph$ as the simplest {\bf measure of pairwise mixing} induced by $y_\gph$ over the nodes $v,u$. In fact, to make things more concrete, note that if $y_\gph(\mathbf{x}) = \phi_0(x_v) + \phi_1(x_u)$, then the mixing (Hessian) would always be zero since the variables are fully separable; conversely, if $y_\gph(\mathbf{x}) = \phi(\mathbf{x}_v^\top \mathbf{x}_u)$, then the amplitude of the mixing (Hessian) depends on how \emph{nonlinear} $\phi$ is. The same reasoning applies to higher-dimensional node features too:

% We fix the underlying graph $\gph$ and denote the ground-truth function to be learned as $y_\gph$ \textcolor{blue}{and the domain where the input feature $\mathbf{x}_i$ is supported as $\mathcal{D}_i$}.

\begin{definition}\label{def:mixing}
For a twice differentiable graph-function $y_\gph$ of node features $\{\mathbf{x}_i\}$, the {\bf maximal mixing} induced by $y_\gph$ among the features $\mathbf{x}_v$ and $\mathbf{x}_u$ associated with nodes $v,u$ is
\begin{equation}
\label{eq:mixing_order}
\mixing_{y_\gph}(v,u) = \max_{\mathbf{x}_i} \,\max_{1\leq\alpha,\beta\leq d}\left\vert \frac{\partial^{2}y_\gph(\mathbf{X})}{\partial x_v^{\alpha}\partial x_u^{\beta}} \right\vert.
\end{equation}
\end{definition}

We note that the first maximum is taken over all input features -- considering that the Hessian of $y_\mathsf{G}$ is itself a function of the node features over the graph -- while the second maximum is taken over all entries $\alpha,\beta$ of the $d$-dimensional node features $\mathbf{x}_v$ and $\mathbf{x}_u$; it is straightforward to adapt the results below to alternative definitions based on different norms of the Hessian. 

%To make things more concrete, % let $f = y_\gph$ in \eqref{eq:mixing_order}, then the mixing is a measure of the interaction among the features associated with nodes $v,u$ {\em with respect to the underlying task}. For example, if $f(\mathbf{X}) = \phi_0(\mathbf{x}_v) + \phi_1(\mathbf{x}_u)$, then the mixing would be zero independently of $\phi_i$, since the variables are fully separable. Conversely, if $f(\mathbf{X}) = \phi(\langle \mathbf{x}_v, \mathbf{x}_u\rangle)$, then the amplitude of mixing depends on how \emph{nonlinear} $\phi$ is. %We emphasize that 

%While our approach resonates with the idea of studying the separation rank of an $\MPNN$ \citep{razin2022ability}, in contrast to \cite{razin2022ability}, whose analysis applies to a single $\MPNN$ with unconventional product aggregations, our results hold for any $\MPNN$ of the form \eqref{eq:mpnn_message_functions}, which encompasses the majority of architectures used in practice. 

\textbf{Problem statement.} Our goal is to study the expressive power of $\MPNN$s in terms of the (maximal) mixing they can generate among nodes $v,u$. {\em A low value of mixing implies that the $\MPNN$ cannot learn functions $y_\gph$ that require high mixing of the features associated with $v,u$ and hence they  cannot model `product'-type interactions} as per our explanation based on the Taylor expansion. We investigate how weights and depth on the one side, and the graph topology on the other, affect the mixing of an $\MPNN$. 

\textbf{The requirement of smoothness.} In many applications, especially when deploying neural network models to solve partial differential equations, the predictions need to be sufficiently regular (smooth), which motivates the adoption of smooth non-linear activations \citep{hornik1990universal, brandstetter2021message, equer2023multi}. Our analysis below follows this paradigm and holds for all activations $\sigma$ that are (at least) twice differentiable, including smoother versions of ReLU such as the GELU function.

\subsection{Pairwise mixing induced by $\MPNN$s}

In this Section, our goal is to derive an  upper bound on the maximal mixing induced by MPNNs, as defined above, over the features associated with pairs of nodes $v,u$. To \emph{motivate the structure of this bound}, we consider
%To {\em provide some intuition},  
the simple yet illustrative setting of an MPNN as in \eqref{eq:mpnn_message_functions} with scalar features,  weights $\omega,\mathsf{w} > 0$ and a linear message function of the form $\mess(x,y) = c_1x + c_2y$, for some learnable constants $c_1, c_2$. In this case, the layer-update \eqref{eq:mpnn_message_functions} takes the very simple form,
\begin{equation}\label{eq:def_gso}
h_v^{(t)} = \sigma(\specw(\gso\h^{(t-1)})_v), \quad  \gso := \frac{\specomega}{\specw}\mathbf{I} + c_{1}\mathrm{diag}(\mpass\mathbf{1}) + c_{2}\mpass \in\R^{n\times n},
\end{equation}
where $\mathbf{1}\in\R^{n}$ is the vector of ones. Hence, the operator $\specw\gso$ governs the flow of information from layer $t-1$ to layer $t$ -- once we factor out the derivatives of $\sigma$ -- and the $k$-power of this matrix $(\specw \gso)^k$  determines the propagation of information on the graph over $k$ layers, i.e. over walks of length $k$. 

Suppose that nodes $v,u$ are at distance $r$ on the graph. If this MPNN has depth $m$, to have any mixing among the features of $v$ and $u$, there has to be a node $i$ that receives information from both $v$ and $u$, which only occurs when $m\geq r/2$. If $m = r/2$, then we can %use $\gso$ to 
roughly estimate the mixing at node $i$ as $\small((\specw\gso)^m)_{iv}((\specw\gso)^m)_{iu}$. When $m > r/2$, information from $v$ and $u$ can arrive at $i$ after being processed at an intermediate node $j$; in fact, this node $j$ first collects information from both $v$ and $u$ over a walk of length $m-k$, and then shares it with node $i$ over a walk of length $k$. We then obtain terms of the form $\small{\sum_k}\sum_{j}((\specw\gso)^{m-k})_{jv}((\specw\gso)^k)_{ij}((\specw\gso)^{m-k})_{ju}$, since we are {\em summing over all possible ways of aggregating information from $v$ and $u$ across intermediate nodes $j$, before mixing it at $i$}.

A similar argument 
also works in the general case of \eqref{eq:mpnn_message_functions}, once we account for bounds on the non-linear activation function $\sigma$ by $c_\sigma = \max\{|\sigma'|, |\sigma''|\}$, and we choose $\specomega,\specw, c_1,c_2$ satisfying %we let 
\[
\| \OMEga^{(t)}\| \leq \specomega, \,\, \| \W^{(t)}\| \leq \specw, \,\, \| \nabla_i\mess^{(t)}\| \leq c_{i},
\]
for $i = 1,2$, where $\nabla_i\psi$ is the Jacobian of $\psi$ with respect to the $i$-th variable, and $\| \cdot\|$ is the {\em operator norm} of a matrix. We note that for trained MPNNs the weights would be finite and bounded, so our assumption is mild. For models with linear $\psi$, such as GCN, SAGE or GIN, these constants will suffice in deriving the upper bound on the mixing. However, in the more general case of non-linear message functions $\psi$, which for example includes GatedGCN, we also need to account for the term $\boldsymbol{\mathsf{Q}}_k$, defined below, which arises due to the Hessian of $\psi$ when taking second-order derivatives of the MPNN \eqref{eq:mpnn_message_functions}: given $\gso$ in \eqref{eq:def_gso}, we set % These  $\boldsymbol{\mathsf{Q}}_k$ introduced below:
\begin{align}
    \boldsymbol{\mathsf{P}}_{k} &:= (\gso^{m-k-1})^\top\mathrm{diag}(\mathbf{1}^\top\gso^k) (\mpass\gso^{m-k-1}) \notag \\
    \boldsymbol{\mathsf{Q}}_{k} &:= \boldsymbol{\mathsf{P}}_{k} + \boldsymbol{\mathsf{P}}_{k}^\top + (\gso^{m-k-1})^\top \mathrm{diag}(\mathbf{1}^\top\gso^k(\mathrm{diag}(\mpass\mathbf{1}) + \mpass))\gso^{m-k-1}. \label{eq:def_Q_k}
\end{align}
We assume that the Hessian of $\psi$ is bounded as $\| \nabla^2\mess^{(t)}\| \leq c^{(2)}$. % and derive an upper bound on the mixing for MPNNs in \eqref{eq:mpnn_message_functions};
Recall that $\final$ is the MPNN-prediction  \eqref{eq:final_function} and that $\mixing_{\final}(v,u)$ is its maximal mixing of nodes $v,u$ as per Definition \ref{def:mixing}.

\begin{restatable}{theorem}{hessian}\label{thm:main_hessian_bound}
Consider an {\em MPNN} of depth $m$ as in \eqref{eq:mpnn_message_functions}, where $\sigma$ and $\psi^{(t)}$ are $\mathcal{C}^2$ functions and we denote the bounds on their derivatives and on the norm of the weights as above. 
Let $\gso$ and $\boldsymbol{\mathsf{Q}}_k$ be defined as in \eqref{eq:def_gso} and \eqref{eq:def_Q_k}, respectively. If the readout is $\mathsf{MAX},\mathsf{MEAN}$ or $\mathsf{SUM}$ and $\boldsymbol{\theta}$ in \eqref{eq:final_function} has unit norm, then the mixing $\mixing_{\final}(v,u)$ induced by the {\em MPNN} over the features of nodes $v,u$ satisfies 
\begin{align}\label{eq:main_thm_bound}
\mixing_{\final}(v,u) \leq &\sum_{k=0}^{m-1}(c_{\sigma}\specw)^{2m-k-1}\left(\specw(\gso^{m-k})^\top\mathrm{diag}(\mathbf{1}^\top\gso^k)\gso^{m-k} + c^{(2)} \boldsymbol{\mathsf{Q}}_{k}\right)_{vu}.
\end{align}
\end{restatable}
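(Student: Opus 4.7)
The plan is to bound first- and second-order derivatives of $\h_i^{(t)}$ with respect to inputs at nodes $v,u$ by induction on the layer $t$, then substitute into the formula for $\final$ after applying the readout.

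First, I would establish a Jacobian bound. Using the chain rule on the pre-activation in \eqref{eq:mpnn_message_functions} together with the norm bounds $\|\OMEga^{(t)}\|\leq\specomega$, $\|\W^{(t)}\|\leq\specw$, $\|\nabla_i\mess^{(t)}\|\leq c_i$, and $|\sigma'|\leq c_\sigma$, the one-step linear operator is exactly $\specw\gso=\specomega\mathbf{I}+\specw c_1\mathrm{diag}(\mpass\mathbf{1})+\specw c_2\mpass$. Unrolling the induction from $\h^{(0)}=\x$ yields $\|\partial\h_i^{(t)}/\partial\x_v\|\leq(c_\sigma\specw)^{t}(\gso^{t})_{iv}$ and, analogously, $\|\partial z_i^{(t)}/\partial\x_v\|\leq c_\sigma^{t-1}\specw^{t}(\gso^{t})_{iv}$ for the pre-activation.

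Second, I would derive the Hessian recursion. Writing $\h_i^{(t)}=\sigma(z_i^{(t)})$ and applying $\nabla^{2}\sigma=\sigma''\,\nabla z\otimes\nabla z+\sigma'\,\nabla^{2}z$ produces, at each layer $t$, a \emph{source} term from $\sigma''$ (and, inside $\partial^{2}z_i^{(t)}$, a second source from $\nabla^{2}\mess^{(t)}$) plus a \emph{transport} term that propagates the Hessian from the previous layer via the same operator $c_\sigma\specw\gso$. Unrolling this expresses the final Hessian as a sum over the layer $\tau\in\{1,\ldots,m\}$ at which the nonlinearity first couples $\x_v$ and $\x_u$, of a source-at-$\tau$ contribution transported over the remaining $m-\tau$ layers. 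After re-indexing $k=m-\tau$, the $\sigma''$-source contributes $(c_\sigma\specw)^{2m-k-1}\specw\,\bigl((\gso^{m-k})^{\top}\mathrm{diag}(\mathbf{1}^{\top}\gso^{k})\gso^{m-k}\bigr)_{vu}$, while the $\mess$-Hessian source produces four cross-terms, obtained by expanding $(J_{i,v}+J_{j,v})(J_{i,u}+J_{j,u})$ weighted by $\mpass_{ij}$ and then summed through the readout, which assemble into $(c_\sigma\specw)^{2m-k-1}c^{(2)}(\boldsymbol{\mathsf{Q}}_{k})_{vu}$.

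Third, I would close by applying the readout. For $\mathsf{SUM}$, summing the node-wise Hessian bound over the output nodes yields a row vector $\mathbf{1}^{\top}$; combined with the $k$ transport steps from the mixing layer to the readout, this produces the factor $\mathbf{1}^{\top}\gso^{k}$ inside the $\mathrm{diag}(\cdot)$ appearing in both terms of the bound. $\mathsf{MEAN}$ only adds a $1/n$ factor, and $\mathsf{MAX}$ reduces at points of differentiability to a single-output-node version that is dominated by the $\mathsf{SUM}$ bound. Combining with $\|\boldsymbol{\theta}\|=1$ and maximizing over the node-feature entries $\alpha,\beta$ then delivers the stated bound on $\mixing_{\final}(v,u)$.

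The main obstacle I expect is the second step: disentangling the combinatorics of the four $\mess$-Hessian cross-terms and recognizing $\sum_{j}\mpass_{ij}J_{i,v}J_{j,u}$, $\sum_{j}\mpass_{ij}J_{j,v}J_{i,u}$, $\sum_{j}\mpass_{ij}J_{i,v}J_{i,u}$, and $\sum_{j}\mpass_{ij}J_{j,v}J_{j,u}$ as the matrix operations $\boldsymbol{\mathsf{P}}_{k}$, $\boldsymbol{\mathsf{P}}_{k}^{\top}$, and the two summands of $\mathrm{diag}(\mathbf{1}^{\top}\gso^{k}(\mathrm{diag}(\mpass\mathbf{1})+\mpass))$ that together give $\boldsymbol{\mathsf{Q}}_{k}$. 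Tracking the correct powers of $c_\sigma$ and $\specw$ through this coupled source-transport recursion, with two distinct source mechanisms at each layer, is what makes the bookkeeping delicate.
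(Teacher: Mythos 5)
Your proposal follows essentially the same route as the paper's proof: an inductive Jacobian bound of the form $(c_\sigma\specw)^t(\gso^t)_{iv}$, a layer-wise Hessian recursion decomposed into a $\sigma''$-source, a $\nabla^2\mess$-source, and a transport term carried by $c_\sigma\specw\gso$, followed by summing the node-wise bound through the readout to produce the $\mathbf{1}^\top\gso^k$ factor. Your identification of the four $\mpass_{ij}$-weighted cross-products $J_{i,v}J_{j,u}$, $J_{j,v}J_{i,u}$, $J_{i,v}J_{i,u}$, $J_{j,v}J_{j,u}$ with $\boldsymbol{\mathsf{P}}_k$, $\boldsymbol{\mathsf{P}}_k^\top$, and the two summands inside $\mathrm{diag}(\mathbf{1}^\top\gso^k(\mathrm{diag}(\mpass\mathbf{1})+\mpass))$ correctly anticipates the paper's assembly of $\boldsymbol{\mathsf{Q}}_k$ (via $\mathsf{P}^{(\ell)}_{i(vu)}$ in the appendix), so the plan is sound and matches the argument there.
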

Theorem \ref{thm:main_hessian_bound} shows {\em how} the mixing induced by an $\MPNN$ depends on the model (via regularity of $\sigma$, norm of the weights $\specw$, and depth $m$) and on the graph-topology (via the powers of $\mpass$, which enters the definition of $\gso$ in \eqref{eq:def_gso}). 
Despite its generality, from \eqref{eq:main_thm_bound} it is as yet unclear which specific properties of the underlying graph affect the mixing the most, and how weights and depth of the MPNN can compensate for it. Therefore, our goal now is to expand \eqref{eq:main_thm_bound} and relate it to known quantities on the graph and show how this can be used to characterize the phenomenon of over-squashing.
First,
we introduce a notion of {\em capacity} of an $\MPNN$ in the spirit of \cite{Loukas2020What}.

\textbf{The capacity of an $\MPNN$.} For simplicity, we {\bf assume} that $c_{\sigma} = 1$, since this is satisfied by most commonly used % the vast majority of commonly used 
non-linear activations % functions 
-- it is straightforward to extend the analysis to arbitrary $c_\sigma$. 

\begin{definition}
\label{def:capacity}
Given an {\em MPNN} with $m$ layers and $\specw$ the maximal operator norm of the weights, we say that the pair $(m,\specw)$ represents the {\bf capacity} of the {\em MPNN}. 
\end{definition}

A larger capacity, by increasing either $m$ or $\specw$, or both, heuristically implies that the $\MPNN$ has more expressive power and can hence induce larger mixing among the nodes $v,u$. Accordingly, given $v,u$, %in the graph $\gph$, 
we formulate the problem of expressivity as: {\em what is the capacity required to induce enough mixing $\mixing_{y_\gph}(v,u)$?}

{\bf Studying expressivity through derivatives.} In applications to physics and PDEs, we may often need the neural-network to also match derivatives of a target function \citep{hornik1990universal}. Theorem \ref{thm:main_hessian_bound} %{\em provides an upper bound on the ability of an MPNN to learn functions with non-trivial second-order derivatives among nodes.} In particular, \eqref{eq:main_thm_bound} 
shows that the second-order derivatives of MPNN predictions as in \eqref{eq:mpnn_message_functions}, {\bf cannot} approximate second-order derivatives of graph-functions $y_\gph$ whose associated mixing is larger than the right hand side of \eqref{eq:main_thm_bound}.To the best of our knowledge, this is the first theoretical analysis on the limitations of MPNNs to approximate classes of functions and the derivatives thereof.

\begin{tcolorbox}[boxsep=0mm,left=2.5mm,right=2.5mm]
\textbf{Message of the Section:} {\em We can estimate the {\bf nonlinear, joint} dependence of a smooth graph function on pairs of node features in terms of its second-order derivatives. We refer to this quantity as {\bf mixing}, since it relates to mixed product monomials of the Taylor polynomial expansion. Since mixing is a property of the task, assessing the mixing ability of an {\em MPNN} becomes an additional metric of expressive power. %Accordingly, in Theorem \ref{thm:main_hessian_bound} we have derived an upper bound on the maximal mixing induced by {\em MPNNs}. 
Graph functions whose mixing violates \eqref{eq:main_thm_bound}, {\bf cannot} be learned by {\em MPNNs} satisfying Theorem \ref{thm:main_hessian_bound}.
} % In light of the meaning of mixing, we interpret this as an upper bound on the ability of MPNNs to mix information associated with pairs of nodes.}   
\end{tcolorbox}
\vspace{-5pt}

\section{Over-squashing limits the expressive power of $\MPNN$s}\label{sec:4}

Over-squashing was originally described in % originally introduced in 
\cite{alon2020bottleneck} %and described informally 
as the failure of $\MPNN$s to propagate information across distant nodes. In fact, \cite{topping2021understanding, black2023understanding, di2023over} showed that over-squashing -- quantified by the sensitivity of node $v$ to the input feature at node $u$ via their Jacobian -- 
is affected by topological properties such as curvature and effective resistance; however, no connection to how over-squashing affects the expressive power of MPNNs, nor proposal of measures for over-squashing have been derived. In light of these works, it is evident that over-squashing is related to the inability of $\MPNN$s to model interactions among certain nodes, depending on the underlying graph topology. Since one can rely on the Taylor expansion of a graph function to measure such interactions through the second-order derivatives, i.e. the maximal mixing, we leverage Definition \ref{def:mixing} to propose a {\bf novel}, broader, but more accurate, characterization of over-squashing:

\begin{definition}\label{def:true_osq}
Given the prediction $y_\gph^{(m)}$ of an {\em MPNN} with capacity $(m,\specw)$, we define the {\bf pairwise} over-squashing of $v,u$ as
\[
\osq_{v,u}(m,\specw) = \Big(\mixing_{y_\gph^{(m)}}(v,u)\Big)^{-1}.
\]
\end{definition}

Our notion of over-squashing is a {\em pairwise} measure over the graph that naturally depends on the graph-topology, as well as the capacity of the model. In particular, it captures how over-squashing pertains to the ability of the model to mix (induce interactions) between features associated with different nodes. If such maximal mixing is large, then there is no obstruction to exchanging information between the given nodes and hence the over-squashing measure would be small; conversely, the over-squashing is large precisely when the model struggles to mix features associated with nodes $v$ and $u$.

In general though, computing the {\em actual} mixing induced by an MPNN may be difficult. % especially if we wish to derive an analytic expression thereof. 
Accordingly, we can rely on Theorem \ref{thm:main_hessian_bound} to derive a proxy for the over-squashing measure that will be used to obtain necessary conditions on the capacity of an MPNN to induce a required level of mixing:

% \textcolor{blue}{In light of the characterization of mixing in }We propose a {\em much broader} characterization of over-squashing in terms of the maximal mixing that an $\MPNN$ of capacity $(m,\specw)$ can induce over nodes $v,u$ on a given graph $\gph$:  

% In light of Theorem \ref{thm:main_hessian_bound}, we propose a {\em much broader} characterization of over-squashing in terms of the maximal mixing that an $\MPNN$ of capacity $(m,\specw)$ can induce over nodes $v,u$ on a given graph $\gph$:  

\begin{definition}\label{def:osq} Given an {\em MPNN} with capacity $(m,\specw)$, 
we approximate $\osq_{v,u}(m,\specw)$ by
\[
\widetilde{\osq}_{v,u}(m,\specw) := \Big(\sum_{k=0}^{m-1}\specw^{2m-k-1}\Big(\specw(\gso^{m-k})^\top\mathrm{diag}(\mathbf{1}^\top\gso^k)\gso^{m-k} + c^{(2)}\boldsymbol{\mathsf{Q}}_k\Big)_{vu}\Big)^{-1} \leq \osq_{v,u}(m,\specw).
\]    
\end{definition}
The upper bound is a simple application of Theorem \ref{thm:main_hessian_bound}. % our measure of (pairwise) over-squashing is a function of both the graph topology (via $\gso$ and $\boldsymbol{\mathsf{Q}}_k$) and the capacity of the model (via $m$ and $\specw$). 
To justify our characterization, %that $\widetilde{\osq}$ is a reasonable lower bound for $\osq$, 
we begin by considering simple settings. %In the extreme case of $\specw = \infty$ (infinite capacity) and $m$ at least as large as half the diameter, we have $\widetilde{\osq}_{v,u}(m,\infty) = 0$. Conversely, 
If the network has no bandwidth through the weights ($\specw = 0$), then $\widetilde{\osq}_{v,u}(m,0) = \infty$. Besides, the  proposed measure captures the special case of {\em under-reaching} \citep{barcelo2019logical}, since $\widetilde{\osq}_{v,u}$ is infinite (i.e., zero mixing) whenever $2m < d_\gph(v,u)$, where $d_\gph$ is the shortest-walk distance on $\gph$. We also recall that for simplicity we have taken $c_\sigma = 1$, but the measure extends to arbitrary non-linear activations $\sigma$. Finally, we generalize the characterization of $\osq$ to node-level tasks in Section \ref{app:sub:node_level} of the Appendix.

We can rephrase our novel approach for studying expressivity through pairwise mixing, in terms of the over-squashing measure and its proxy. By Theorem \ref{thm:main_hessian_bound} we derive that a \textbf{\em necessary condition for a smooth {\em MPNN} to learn a function $y_\gph$ with mixing $\mixing_{y_\gph}(v,u)$ is}

\begin{equation}\label{eq:osq_lower_bound_mixing}
\widetilde{\osq}_{v,u}(m,\specw) < \left(\mixing_{y_\gph}(v,u)\right)^{-1}.
\end{equation} 

% \refDefinition \ref{def:osq} measures the (maximal) pairwise mixing induced by an $\MPNN$ of capacity $(m,\specw)$: {\em the higher $\osq_{v,u}$, the lower the mixing}.
% In fact, by Theorem \ref{thm:main_hessian_bound} we find that a \textbf{\em necessary condition for an $\MPNN$ to learn a function $y_\gph$ with mixing $\mixing_{y_\gph}(v,u)$ is}
% \begin{equation}\label{eq:osq_lower_bound_mixing}
% \osq_{v,u}(m,\specw) < \left(\mixing_{y_\gph}(v,u)\right)^{-1}.
% \end{equation}

An $\MPNN$ of given capacity might suffer or not from over-squashing, {\em depending on the level of mixing required by the underlying task}. Over-squashing can then be understood as the \textbf{misalignment} between the task and the underlying topology, as measured by the gap between the maximal mixing induced by an MPNN over nodes $v,u$ and the mixing required by the task. 
%We study how $\osq$ varies with $v,u$ and show that it will be larger for nodes $v,u$ that are \emph{far apart}, or more generally, have a high commute time. % validating that over-squashing is a {\em phenomenon that strongly depends on the graph-topology}. 

{\textbf{Strategy.} For a given graph $\gph$, to reduce the value of $\widetilde{\osq}$ and hence satisfy \eqref{eq:osq_lower_bound_mixing}, the capacity $(m,\specw)$ must satisfy constraints posed by $\gph$ and the choice of $v,u$. 
Since we can increase the capacity by taking either larger weights or more layers, we consider these two regimes separately. Below, we expand \eqref{eq:osq_lower_bound_mixing} in order to derive minimal requirements on the quantities $\specw$ and $m$ to induce a certain level of mixing, thereby deriving lower bounds on the capacity of an MPNN to learn functions with given second-order derivatives (mixing).

\textbf{Choice of $\mpass$.} For simplicity, we restrict our analysis to the case $\mpass = \mpass_{\mathrm{sym}} := \mathbf{D}^{-1/2}\mathbf{A}\mathbf{D}^{-1/2}$ and extend the results to $\mathbf{D}^{-1}\mathbf{A}$ and $\mathbf{A}$ in Section \ref{app:sec_c} of the Appendix. For the unnormalized adjacency, the bounds look more favourable, simply because we are no longer normalizing the messages, which makes the model more sensitive for all nodes.  
In Section \ref{app:sub:unnormalized} we comment on how one could account for the lack of normalization by taking relative measurement for $\osq$ in the spirit of \cite{xu2018how}.

\subsection{The case of fixed depth $m$ and variable weights norm  $\specw$}

To assess the ability of the norm of the weights $\specw$ to increase the capacity of an $\MPNN$ and hence reduce $\widetilde{\osq}$, we consider the boundary case where the depth $m$ is the minimal required for an $\MPNN$ to have a non-zero mixing among $v,u$ (half the shortest-walk distance $d_\gph$ between the nodes). 

%. Namely, we let the distance $d_\gph(v,u) :=r$ and set $m = r/2$. 

\begin{restatable}{theorem}{shallow}\label{thm:bounded_depth_case}
Let $\mpass = \mpass_{\mathrm{sym}}$, $r:= d_\gph(v,u)$, $m = \lceil r/2 \rceil$, and $q$ be the number of paths of length $r$ between $v$ and $u$. For an {\em MPNN} satisfying Theorem \ref{thm:main_hessian_bound} with capacity $(m = \lceil r/2\rceil,\specw)$, we find $
\widetilde{\osq}_{v,u}(m,\specw)\cdot(c_{2}\specw)^r (\mpass^r)_{vu} \geq 1.$
\noindent In particular, if the {\em MPNN} generates mixing $\mixing_{y_\gph}(v,u)$,  
then 
\[
\specw \geq \frac{d_{\mathrm{min}}}{c_{2}}\left(\frac{\mixing_{y_\gph}(v,u)}{q}\right)^{\frac{1}{r}}.
\]
\end{restatable}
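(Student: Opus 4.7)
The plan is to unpack Theorem \ref{thm:main_hessian_bound} at the minimal admissible depth $m=\lceil r/2\rceil$, observing that most terms in the definition of $\widetilde{\osq}_{v,u}^{-1}$ vanish because walks between $v$ and $u$ via any midpoint cannot cover the geodesic distance $r$. Since $\mpass$ is edge-supported and $\gso$ is a polynomial in $\mpass$, every nonzero contribution to a $(v,u)$-entry of a matrix of the form $\gso^{a}\cdot(\cdot)\cdot\gso^{b}$ corresponds to walks on $\gph$ between $v$ and $u$ whose length equals the total number of $\gso$- and $\mpass$-factors. In the summand indexed by $k$, the first term counts walks of length $2(m-k)$, while $\boldsymbol{\mathsf{Q}}_k$ counts walks of length $2(m-k)-1$ (the extra $\mpass$ in $\boldsymbol{\mathsf{P}}_k$ contributing one edge). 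The constraints $2(m-k)\geq r$ and $2(m-k)-1\geq r$, combined with $m=\lceil r/2\rceil$, force $k=0$; the $\boldsymbol{\mathsf{Q}}_0$ term contributes only when $r$ is odd.

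I would next evaluate the surviving $k=0$ term. Symmetry of $\gso$ gives $((\gso^{m})^{\top}\gso^{m})_{vu}=(\gso^{2m})_{vu}$, and expanding $\gso=(\specomega/\specw)\mathbf{I}+c_{1}\mathrm{diag}(\mpass\mathbf{1})+c_{2}\mpass$ decomposes any walk into edge-moves of weight $c_{2}\mpass_{\cdot\cdot}$ and self-loops whose weight comes from the diagonal. When $r$ is even, $2m=r$ forces every contributing walk to be a shortest path with zero self-loops, so $(\gso^{2m})_{vu}=c_{2}^{r}(\mpass^{r})_{vu}$; the $k=0$ contribution then equals $\specw^{2m-1}\cdot\specw\cdot c_{2}^{r}(\mpass^{r})_{vu}=(c_{2}\specw)^{r}(\mpass^{r})_{vu}$, giving the first inequality with equality. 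When $r$ is odd, walks of length $r+1$ admit at most one self-loop and the $\boldsymbol{\mathsf{Q}}_0$ contribution reduces to sums over shortest paths by the same distance argument; a careful bookkeeping of the diagonal entries of $\gso$ and the extra $\mpass$-factor in $\boldsymbol{\mathsf{P}}_{0}$ shows that the combined contribution remains bounded by $(c_{2}\specw)^{r}(\mpass^{r})_{vu}$, establishing $\widetilde{\osq}_{v,u}(m,\specw)\cdot(c_{2}\specw)^{r}(\mpass^{r})_{vu}\geq 1$.

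For the second part, Theorem \ref{thm:main_hessian_bound} and the first part together give $\mixing_{y_\gph}(v,u)\leq 1/\widetilde{\osq}_{v,u}(m,\specw)\leq(c_{2}\specw)^{r}(\mpass^{r})_{vu}$. For $\mpass=\mpass_{\mathrm{sym}}=\mathbf{D}^{-1/2}\mathbf{A}\mathbf{D}^{-1/2}$, each of the $q$ shortest paths $v=a_{0},\ldots,a_{r}=u$ contributes $(d_{a_{0}}d_{a_{r}})^{-1/2}\prod_{s=1}^{r-1}d_{a_{s}}^{-1}\leq d_{\mathrm{min}}^{-r}$, whence $(\mpass^{r})_{vu}\leq q/d_{\mathrm{min}}^{r}$. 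Combining yields $\mixing_{y_\gph}(v,u)\leq q(c_{2}\specw/d_{\mathrm{min}})^{r}$, and rearranging gives the claimed lower bound on $\specw$.

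The main obstacle is the case $r$ odd in the second step: walks of length $r+1$ with one self-loop together with the $c^{(2)}$-weighted $\boldsymbol{\mathsf{Q}}_0$-contribution must be balanced against $(c_{2}\specw)^{r}(\mpass^{r})_{vu}$. This requires tracking the diagonal entries $(\specomega/\specw)+c_{1}(\mpass\mathbf{1})_{a}$ of $\gso$ and the placement of the extra $\mpass$-step in $\boldsymbol{\mathsf{P}}_{0}$, then verifying that the associated constants can be absorbed into the target bound.
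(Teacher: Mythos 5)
Your treatment of the even case ($r$ even, $m=r/2$) exactly reproduces the paper's argument: at this depth only the $k=0$ summand of \eqref{eq:main_thm_bound} survives (any $k\geq 1$ gives walks of length $2(m-k)<r$), every $\boldsymbol{\mathsf{Q}}_k$ vanishes identically since it encodes walks of length $2(m-k)-1\leq r-1<r$, and $(\gso^{2m})_{vu}$ collapses to $c_2^{\,r}(\mpass^r)_{vu}$ because a walk of length exactly $r$ between nodes at geodesic distance $r$ leaves no room for diagonal steps; the estimate $(\mpass_{\mathrm{sym}}^{\,r})_{vu}\leq q/d_{\mathrm{min}}^{\,r}$ along the $q$ shortest paths then finishes. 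All of this matches.

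Where you diverge from the paper is in trying to address the odd case, which the paper dismisses as \emph{``without loss of generality $r$ is even''} without further argument. You are right to suspect that something needs to be checked there, but the specific step you sketch --- that ``careful bookkeeping'' shows the odd-$r$ contribution is bounded by $(c_2\specw)^r(\mpass^r)_{vu}$ --- is not a gap that can be closed; it is false in general. For $r$ odd one has $2m=r+1$, so the surviving $k=0$ first term is $\specw^{r+1}(\gso^{r+1})_{vu}$, which picks up nonnegative contributions from walks of length $r$ augmented by one diagonal step with weight $\specomega/\specw + c_1(\mpass\mathbf{1})_a$ (and possibly from genuine walks of length $r+1$), while $\boldsymbol{\mathsf{Q}}_0$ also survives because it encodes walks of length $2m-1=r$, yielding $(\boldsymbol{\mathsf{Q}}_0)_{vu}=2c_2^{\,r-1}(\mpass^r)_{vu}$ and hence an additional term $2c^{(2)}\specw^{r}c_2^{\,r-1}(\mpass^r)_{vu}$. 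These are strict additions to $(c_2\specw)^r(\mpass^r)_{vu}$ unless $\specomega=c_1=c^{(2)}=0$, so for odd $r$ one generically has $\widetilde{\osq}_{v,u}^{-1} > (c_2\specw)^r(\mpass^r)_{vu}$ and the first asserted inequality $\widetilde{\osq}_{v,u}(m,\specw)\cdot(c_2\specw)^r(\mpass^r)_{vu}\geq 1$ fails. You flag the odd case as the ``main obstacle''; you should go further and recognize that no amount of bookkeeping saves the stated constant, and that the theorem (and the paper's proof) is really only established for even $r$. A correct odd-$r$ statement would have to carry extra terms in $\specomega$, $c_1$, $c^{(2)}$ or be reformulated with $(\mpass^{r+1})_{vu}$ in place of $(\mpass^r)_{vu}$.
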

\noindent Theorem \ref{thm:bounded_depth_case} highlights that if %we restrict 
the depth is set as the minimum required %required 
for %have 
{\em any} non-zero mixing, % of the variables, 
then the norm of the weights $\specw$ has to be large enough depending on the connectivity of $\gph$ -- %In fact, if $(\Anorm^r)_{vu} \leq \epsilon^{r}$, then one can replace $d_{\mathrm{min}}/q^{1/r}$ with $\epsilon^{-1}$ in the statement -- 
recall that for models as GCN, we have $c_{2} = 1$. However, increasing $\specw$ is not optimal and may lead to poorer generalization capabilities \citep{bartlett2017spectrally,garg2020generalization}. Besides, controlling the maximal operator norm from below is, in general, tricky. A more reasonable strategy might be then to increase the number of layers; we explore such approach next, after reviewing a few explicit examples. %of the weight matrices is not easy, especially from below. 

\textbf{Some examples.} We illustrate the bounds in Theorem \ref{thm:bounded_depth_case} and for simplicity, we set $c_{2} = 1$. Consider a {\em tree} $\mathsf{T}_d$ of arity $d$, with $v$ the root and $u$ a leaf at distance $r$ and depth $m = r/2$; then $\widetilde{\osq}_{v,u}(m,\specw)  \geq \specw^{-r}(d+1)^{r-1}$ and the operator norm required to generate mixing $y(v,u)$ is 
\[
\specw \geq (d+1)\left( \frac{y(v,u)}{d+1}\right)^{\frac{1}{r}}.
\]
\noindent We note that by taking $d=1$ we recover the case of the path-graph (1D grid). 
Since the operator norm of the weights grows with the branching factor, we see that, in general, the capacity required by MPNNs to solve long-range tasks could be higher on graphs than on sequences \citep{alon2020bottleneck}. % that over-squashing can be much worse on graphs than on sequences due to larger branching. 
We also consider the case of a $1$-layer $\MPNN$ on a complete graph $\mathsf{K}_n$ with $v\neq u$. We find that $\widetilde{\osq}_{v,u}(m,\specw) \geq (n-1)/\specw$ and hence the operator norm required to generate mixing $y(v,u)$ is $\specw \geq (n-1)y(v,u)$. \noindent We note how the measure of over-squashing  also captures the problem of redundancy of messages \citep{chen2022redundancy}. In fact, even if $v,u$ are at distance $1$, the more nodes are there in the complete graph and hence the more messages are exchanged, the more difficult for a shallow $\MPNN$ to induce enough mixing among those {\em specific} nodes.

\subsection{The case of fixed weights norm $\specw$ and variable depth $m$}
We now study the (desirable) setting where $\specw$ is bounded, and derive the depth necessary to induce mixing of nodes $v,u$. Below, we let $0=\lambda_0 < \lambda_1 \leq \ldots \leq \lambda_{n-1}$ be the eigenvalues of the normalized graph Laplacian $\DELta = \mathbf{I} - \mpass_{\mathrm{sym}}$; we note that $\lambda_1$ is the spectral gap and $\lambda_{n-1} < 2$ if $\gph$ is not bipartite \citep{chung1997spectral}. We also recall that $d_\gph$ is the shortest-walk distance and $\tau$ is the \textbf{\em commute time} (defined in Section \ref{sec:2}). Finally, if $d_{\mathrm{max}}$ and $d_{\mathrm{min}}$ denote the maximal and minimal degrees, respectively, we set $\gamma :=\sqrt{d_{\mathrm{max}}/d_{\mathrm{min}}}$. 

\begin{restatable}{theorem}{commute}\label{thm:commute_time}
Consider an {\em MPNN} satisfying Theorem \ref{thm:main_hessian_bound}, with $\max\{\specw,\specomega/\specw + c_{1}\gamma + c_{2}\}\leq 1$, and $\mpass = \mpass_{\mathrm{sym}}$. 
If $\widetilde{\osq}_{v,u}(m,\specw)\cdot (\mixing_{y_\gph}(v,u)) \leq 1$, i.e. the {\em MPNN} generates mixing $\mixing_{y_\gph}(v,u)$ among the features associated with nodes $v,u$, then the number of layers $m$ satisfies
\begin{equation}\label{eq:lower_bound_m}
m \geq \frac{\tau(v,u)}{4c_2} + \frac{\lvert\mathsf{E}\rvert}{\sqrt{d_v d_u}}\Big(\frac{\mixing_{y_\gph}(v,u)}{\gamma\mu} - \frac{1}{c_2}\Big(\frac{\gamma + |1 - c_2\lambda^\ast|^{r-1}}{\lambda_1} + 2\frac{c^{(2)}}{\mu}\Big)\Big),
\end{equation}
where $r = d_\gph(v,u)$, $\mu= 1 + 2c^{(2)}(1+\gamma)$ and $|1-c_2\lambda^\ast| = \max_{0 < \ell\leq n-1}|1-c_2\lambda_\ell| < 1$.

\end{restatable}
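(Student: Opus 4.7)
The strategy is to upper-bound the Hessian bound of Theorem \ref{thm:main_hessian_bound} by a closed expression in the spectrum of the normalized graph Laplacian $\DELta=\mathbf{I}-\mpass_{\mathrm{sym}}$, then recognize the commute time $\tau(v,u)$ through the standard identity linking the pseudoinverse $\DELta^\dagger$ to effective resistance. The hypothesis $\widetilde{\osq}_{v,u}(m,\specw)\cdot\mixing_{y_\gph}(v,u)\leq 1$ forces this upper bound to majorize $\mixing_{y_\gph}(v,u)$, so inverting the resulting inequality for $m$ delivers \eqref{eq:lower_bound_m}.

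First I will simplify \eqref{eq:main_thm_bound}. The assumptions $\specw\leq 1$ and $\specomega/\specw+c_1\gamma+c_2\leq 1$, together with $\|\mathrm{diag}(\mpass_{\mathrm{sym}}\mathbf{1})\|\leq\gamma$ and $\|\mpass_{\mathrm{sym}}\|=1$, give $\|\gso\|\leq 1$, so each scalar factor $\specw^{2m-k-1}$ is at most $1$. Moreover, the entry-wise domination $\gso\leq \mathsf{L}_{c_2}:=(1-c_2)\mathbf{I}+c_2\mpass_{\mathrm{sym}}$ (the lazy diffusion operator) holds, since $\gso_{ii}\leq 1-c_2$ by the constraint and $\gso_{ij}=c_2(\mpass_{\mathrm{sym}})_{ij}$ off-diagonal. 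Consequently each cubic bilinear $(\gso^{m-k})^\top\mathrm{diag}(\mathbf{1}^\top\gso^k)\gso^{m-k}$ is dominated entry-wise by its analogue in $\mathsf{L}_{c_2}$, which I will further bound by writing $\mathbf{1}^\top\mathsf{L}_{c_2}^k$ as its projection onto $\phi_0\propto\mathbf{D}^{1/2}\mathbf{1}$ plus a residual of size $|1-c_2\lambda^\ast|^k$, collapsing the cubic expression into $\gamma\,(\mathsf{L}_{c_2}^{k'})_{vu}$ for a combined exponent $k'$. A parallel argument handles $c^{(2)}\boldsymbol{\mathsf{Q}}_k$ and absorbs its contribution into the factor $\mu=1+2c^{(2)}(1+\gamma)$.

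Next, the spectral decomposition $\mathsf{L}_{c_2}^k=\sum_\ell(1-c_2\lambda_\ell)^k\phi_\ell\phi_\ell^\top$ combined with the geometric-sum identity $\sum_{k=0}^{m-1}(1-c_2\lambda_\ell)^k=[1-(1-c_2\lambda_\ell)^m]/(c_2\lambda_\ell)$ yields
\begin{equation*}
\sum_{k=0}^{m-1}(\mathsf{L}_{c_2}^k)_{vu}=m\frac{\sqrt{d_vd_u}}{2|\mathsf{E}|}+\frac{1}{c_2}\DELta^\dagger_{vu}-\frac{1}{c_2}\sum_{\ell\geq 1}\frac{(1-c_2\lambda_\ell)^m}{\lambda_\ell}\phi_\ell(v)\phi_\ell(u).
\end{equation*}
The identity $\DELta^\dagger_{vu}=\tfrac{\sqrt{d_vd_u}}{2}(\DELta^\dagger_{vv}/d_v+\DELta^\dagger_{uu}/d_u)-\tfrac{\sqrt{d_vd_u}\,\tau(v,u)}{4|\mathsf{E}|}$ extracts the commute time; the upper bound $\DELta^\dagger_{ii}/d_i\leq 1/(d_i\lambda_1)$ produces the $\gamma/(c_2\lambda_1)$ contribution; and the decay sum is controlled by $|1-c_2\lambda^\ast|^{r-1}/(c_2\lambda_1)$, where the exponent $r-1$ arises from locality of $\mpass_{\mathrm{sym}}$ (powers $(\mpass_{\mathrm{sym}}^k)_{vu}$ vanish for $k<r$), forcing the non-trivially contributing terms to apply the ``off-stationary'' spectrum at least $r-1$ times.

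Collecting all pieces, the upper bound on $\mixing_{y_\gph^{(m)}}(v,u)$ becomes
\begin{equation*}
\gamma\mu\Bigl[\tfrac{\sqrt{d_vd_u}}{|\mathsf{E}|}\bigl(m-\tfrac{\tau(v,u)}{4c_2}\bigr)+\tfrac{1}{c_2}\bigl(\tfrac{\gamma+|1-c_2\lambda^\ast|^{r-1}}{\lambda_1}+\tfrac{2c^{(2)}}{\mu}\bigr)\Bigr],
\end{equation*}
and requiring it to dominate $\mixing_{y_\gph}(v,u)$ rearranges directly into \eqref{eq:lower_bound_m}. The hard part will be the first step: the cubic-in-$\gso$ bilinear $(\gso^{m-k})^\top\mathrm{diag}(\mathbf{1}^\top\gso^k)\gso^{m-k}$ does not admit a one-line spectral decomposition, so I expect the core technical lemma to be a careful Perron--Frobenius argument exploiting the fact that $\mathbf{1}^\top\mathsf{L}_{c_2}^k$ concentrates around $\phi_0^\top$ at rate $|1-c_2\lambda^\ast|^k$, which is what allows the cubic expression to be reduced to a single power of $\mathsf{L}_{c_2}$ with only the prefactor $\gamma\mu$. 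The precise form of the $r-1$ exponent on the decay term is equally delicate and will rely on splitting the outer sum over $k$ into $k<r$ (where locality makes the $(v,u)$ entry vanish) and $k\geq r$, then bounding the latter by the single geometric ratio $|1-c_2\lambda^\ast|^{r-1}$.
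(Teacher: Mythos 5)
Your plan reproduces the paper's argument in its essential structure: you dominate $\gso$ entry-wise by $\boldsymbol{\mathsf{Z}} = \mathbf{I}-c_2\boldsymbol{\Delta}$ (your $\mathsf{L}_{c_2}$), spectrally expand powers of $\boldsymbol{\mathsf{Z}}$, separate the kernel mode $\phi_0$ to produce the $m\sqrt{d_vd_u}/2|\mathsf{E}|$ term, pass the $\lambda_\ell^{-1}$ weights through $\boldsymbol{\Delta}^\dagger$ to surface $\tau(v,u)$ via the Lov\'asz identity, control the residual by $|1-c_2\lambda^\ast|$ raised to a power that comes from $m\geq r/2$, and absorb the $\boldsymbol{\mathsf{Q}}_k$ contribution into the constant $\mu$. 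All of that matches the paper. The one step you flag as the ``hard part'' --- reducing the cubic $(\gso^{m-k})^\top\mathrm{diag}(\mathbf{1}^\top\gso^k)\gso^{m-k}$ to a single power of $\boldsymbol{\mathsf{Z}}$ --- is handled in the paper far more simply than by the Perron--Frobenius concentration argument you propose. One just observes $(\gso^k\mathbf{1})_i \leq \gamma$ for every $i$ and every $k$: expand $\gso^k \leq (\alpha\mathbf{I} + c_2\mpass_{\mathrm{sym}})^k$ with $\alpha = \specomega/\specw + c_1\gamma$, use $(\mpass_{\mathrm{sym}}^p\mathbf{1})_i \leq \gamma$ (since $\mpass_{\mathrm{sym}} = \mathbf{D}^{1/2}(\mathbf{D}^{-1}\mathbf{A})\mathbf{D}^{-1/2}$ with $\mathbf{D}^{-1}\mathbf{A}$ row-stochastic), and sum the binomial weights with $(\alpha+c_2)^k\leq 1$. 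This gives $\mathrm{diag}(\mathbf{1}^\top\gso^k) \leq \gamma\mathbf{I}$ entry-wise and collapses the cubic to $\gamma(\gso^{2(m-k)})_{vu} \leq \gamma(\boldsymbol{\mathsf{Z}}^{2(m-k)})_{vu}$, with no eigenvector projection or decay-rate estimate required --- the $\gamma$ prefactor is purely a row-sum bound, not a spectral one. With that substitution, your geometric sum, commute-time extraction, and final rearrangement go through as you sketched.
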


\noindent Theorem \ref{thm:commute_time} provides a {\em necessary condition} on the depth of an $\MPNN$ to induce enough mixing among nodes $v,u$. We see that the MPNN must be sufficiently deep if the task depends on interactions between nodes at high commute time $\tau$ and in fact, if the required mixing is large enough, then %required satisfies $\mixing_{y_\gph}(v,u) \geq \beta/c_2$ ($\beta$ is reported in the Appendix), then 
$m \gtrsim \tau(v,u)$.
%\noindent 
Note that the lower bound on the depth can translate into a {\em practical impossibility statement}, 
since the commute time $\tau$ can be as large as $\mathcal{O}(n^3)$ \citep{chandra1996electrical}. We also recall that for MPNN models such as GCN, the bounds above simplify since $c^{(2)} = 0$ and hence $\mu = 1$. If we let $\mathrm{diam}(\mathsf{G})$ be the diameter of $\mathsf{G}$, Theorem \ref{thm:commute_time} implies:

\begin{corollary}\label{corollary:lower_bound} Given a graph with features, nodes $v,u$, and an {\em MPNN} satisfying Theorem \ref{thm:commute_time}, if the depth $m$ fails to satisfy \eqref{eq:lower_bound_m}, then the {\em MPNN} cannot learn functions with mixing $\mixing_{y_\mathsf{G}}(v,u)$. % Given $m\leq \mathrm{diam}(\mathsf{G})$ and nodes $v,u$ with commute time $\tau(v,u)$, there exist functions with mixing $\mixing_{y_\mathsf{G}}(v,u)$, that cannot be learned by any MPNN of bounded weights and depth $m$. 
\end{corollary}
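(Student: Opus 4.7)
The plan is to treat the corollary as the logical contrapositive of Theorem \ref{thm:commute_time}, so the proof itself reduces to a short chain of already-established inequalities with no new technical machinery. First, I would unpack what it means for the MPNN to \emph{learn} a function $y_\gph$ with mixing $\mixing_{y_\gph}(v,u)$: operationally, this requires the MPNN prediction $\final$ to satisfy $\mixing_{\final}(v,u) \geq \mixing_{y_\gph}(v,u)$, since matching $y_\gph$ (or at least its mixed second derivative on the entry $v,u$) demands at least that much mixing capacity — this is precisely the content of the boxed remark following Theorem \ref{thm:main_hessian_bound}. Next, Theorem \ref{thm:main_hessian_bound} combined with Definition \ref{def:osq} gives $\mixing_{\final}(v,u) \leq 1/\widetilde{\osq}_{v,u}(m,\specw)$, so chaining the two inequalities yields $\widetilde{\osq}_{v,u}(m,\specw)\cdot \mixing_{y_\gph}(v,u) \leq 1$.

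With this inequality in hand the hypothesis of Theorem \ref{thm:commute_time} is met, and that theorem immediately delivers the lower bound \eqref{eq:lower_bound_m} on the depth $m$. Taking the contrapositive now reads exactly as the corollary: if $m$ fails to satisfy \eqref{eq:lower_bound_m}, then no MPNN of the assumed capacity and regularity can induce mixing at least $\mixing_{y_\gph}(v,u)$ between $v$ and $u$, so in particular it cannot learn any function whose mixing on $v,u$ matches $\mixing_{y_\gph}(v,u)$. The only step requiring care is the interpretation of \emph{learning} a target function in terms of mixing — namely, that approximating $y_\gph$ forces $\mixing_{\final}(v,u)$ to meet or exceed $\mixing_{y_\gph}(v,u)$ at the prescribed pair — and I do not anticipate any substantive obstacle beyond this bookkeeping, since the remainder is a direct invocation of the two theorems.
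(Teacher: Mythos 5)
Your proposal is correct and matches the paper's intent exactly: the paper presents Corollary \ref{corollary:lower_bound} as an immediate contrapositive of Theorem \ref{thm:commute_time} (no separate proof is given), and your chain of inequalities, with the necessary condition of \eqref{eq:osq_lower_bound_mixing} supplying the step ``learning $y_\gph$ forces $\widetilde{\osq}_{v,u}(m,\specw)\cdot \mixing_{y_\gph}(v,u)\leq 1$,'' is precisely the bookkeeping the paper leaves implicit. The one cosmetic discrepancy is that \eqref{eq:osq_lower_bound_mixing} uses a strict inequality while Theorem \ref{thm:commute_time}'s hypothesis uses a non-strict one, but this is an inconsistency in the paper's own notation rather than a gap in your argument.
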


Therefore, if (i) the graph is such that the commute time between $v,u$ is large, and (ii) the task depends on high-mixing of features associated with $v,u$, then \textbf{\em over-squashing limits the expressive power} of $\MPNN$s with bounded depth. In fact, increasing the depth based on the number of nodes may also cause issues connected to vanishing gradients \citep{di2023over} and over-smoothing \citep{os_survey,cai2020note,g2,di2022graph}. In contrast to existing analysis on the limitations of MPNNs via graph-isomorphism test, our results characterize the expressivity of MPNNs even when {\bf meaningful (geometric) features are provided}. Corollary \ref{corollary:lower_bound} implies that for any given maximal number of layers $m$ and for any graph $G$ with commute time $\tau$, we can identify classes of functions that are outside the hypothesis space of the MPNNs: these are the ones whose mixing violates \eqref{eq:lower_bound_m}.

% \begin{corollary} On a graph with features, MPNNs as in Theorem \ref{thm:commute_time} with depth $m \leq n$, cannot learn functions that induce high mixing among features of nodes with large commute time.
% \end{corollary}

Since the commute time of two adjacent nodes $v,u$ equals $2\lvert \mathsf{E}\rvert$ if $(v,u)$ is a {\bf cut-edge} \citep{aleliunas1979random}, our result shows that $\MPNN$s may require $m = \Omega(\lvert\mathsf{E}\rvert)$ to generate enough mixing along a cut-edge, drawing a connection with \cite{zhangrethinking}, where it was shown that most GNNs fail to identify cut-edges on {\em unattributed} graphs. Moreover, since the commute time is proportional to the effective resistance $\ER$ by $2\ER(v,u)\lvert \mathsf{E}\rvert = \tau(v,u)$, %we note that 
Theorem \ref{thm:commute_time} connects recent works \cite{di2023over,black2023understanding} relating the Jacobian of node features and $\ER$, and the expressive power of $\MPNN$s in terms of their mixing. 

We emphasize how our theoretical results are far {\em more general} than assessing the inability of an MPNN to solve tasks with long-range interactions. %Our precise characterization of mixing has allowed to formally deduce limitations on the ability of an MPNN to mix features, based on the capacity of the model and properties of the the graph-topology. 
Our analysis shows, precisely, how over-squashing can be understood as a fundamental problem associated with how hard -- as measured by the number of layers required -- is for an MPNN to exchange {\em information} between nodes that are `badly connected', as per their commute time metric distance, whatever this {\em information} might be. 

% Finally, we note that Theorem \ref{thm:commute_time} determines the minimal number of layers required to induce mixing among {\em specific} nodes $v,u$. %, and hence constitutes a necessary condition for the MPNN to model certain interactions between these {\em specific} nodes. %  \textbf{\em If the depth $m$ does not satisfy the lower bound in Theorem \ref{thm:commute_time},
% %then the mixing induced among $v,u$ is smaller than $\mixing_{y_\gph}(v,u)$}. 
% However, increasing the number of layers so to satisfy such constraint may have a detrimental effect to nodes that have small commute time instead.

\begin{tcolorbox}[boxsep=0mm,left=2.5mm,right=2.5mm]
\textbf{Message of the Section:} {\em If the mixing generated by an {\em MPNN} is small, then the model struggles  to exchange information between nodes. Accordingly, we propose to measure over-squashing as the inverse of the mixing induced by an {\em MPNN}. Next, we asked: given a {\bf target} level of mixing, what is the capacity of the {\em MPNN} required to match such quantity? We have addressed this question for the cases of bounded depth (Theorem \ref{thm:bounded_depth_case}) and  bounded weights (Theorem \ref{thm:commute_time}). In the latter, we have shown that the minimal number of layers required by an {\em MPNN} to match a given level of mixing, grows with the commute time distance. Therefore, our analysis explicitly identifies classes of functions that are impossible to be learned on certain graphs for {\em MPNNs} of bounded depth.}   
\end{tcolorbox}
\vspace{-5pt}

\section{Experimental validation of the theoretical results}\label{sec:5}
Next, we aim to empirically verify the theoretical findings of this paper, i.e., the characterization of the impact of the graph topology (via commute time $\tau$), the GNN architecture (depth, norm of weights), and the underlying task (node mixing) on over-squashing. This, however, requires detailed information about the underlying function to be learned, which is not readily available in practice.  
Hence, we perform our empirical test in a controlled environment, but at the same time, we base our experiments on the \emph{real world} ZINC chemical dataset \citep{zinc} and follow the experimental setup in \cite{bench_gnns}, constraining the number of molecular graphs  to 12K. Moreover, we exclude the edge features from this experiment and fix the $\MPNN$ size to $\sim$100K parameters. 
However, instead of regressing the constrained solubility based on the molecular input graphs, we define our own synthetic node features as well as our own target values as follows. 

Let $\{\mathsf{G}^i\}$ be the set of the 12K ZINC molecular graphs. We set all node features to zero, except for two, which are set to uniform random numbers $x^i_{u^i},x^i_{v^i}$ between 0 and 1 (i.e., $x^i_{u^i},x^i_{v^i} \sim \mathcal{U}(0,1)$) for all $i$. The target output is set to $y^i=\tanh(x^i_{u^i}+x^i_{v^i})$ for all $i$. Hence, the task entails a non-linear mixing with non-vanishing second derivatives. The two non-zero node features $x^i_{u^i},x^i_{v^i}$ are positioned on $\mathsf{G}_i$ according to the commute time $\tau$, i.e., for a given $\alpha \in [0,1]$, we choose the nodes $u^i,v^i$ as the $\alpha$-quantile of the $\tau$-distribution over $\mathsf{G}_i$. This enables us to have the desired control on the level of commute time of the underlying mixing (i.e., $\alpha_1 < \alpha_2$, if $\alpha_1$ results in a lower $\tau$ than $\alpha_2$). See \fref{fig:ZINC_molecule} for an illustration of a ZINC graph together with the histogram of the corresponding $\tau$ between all pairs of different nodes. %(excluding commute time between the same nodes, which is by definition always zero). 
We call this graph dataset the \emph{synthetic ZINC dataset}.

\begin{figure}[ht!]
\begin{minipage}{1.\textwidth}
\begin{tikzpicture}
    \node[anchor=south west,inner sep=0] (image) at (0,0) {\includegraphics[width=.45\textwidth]{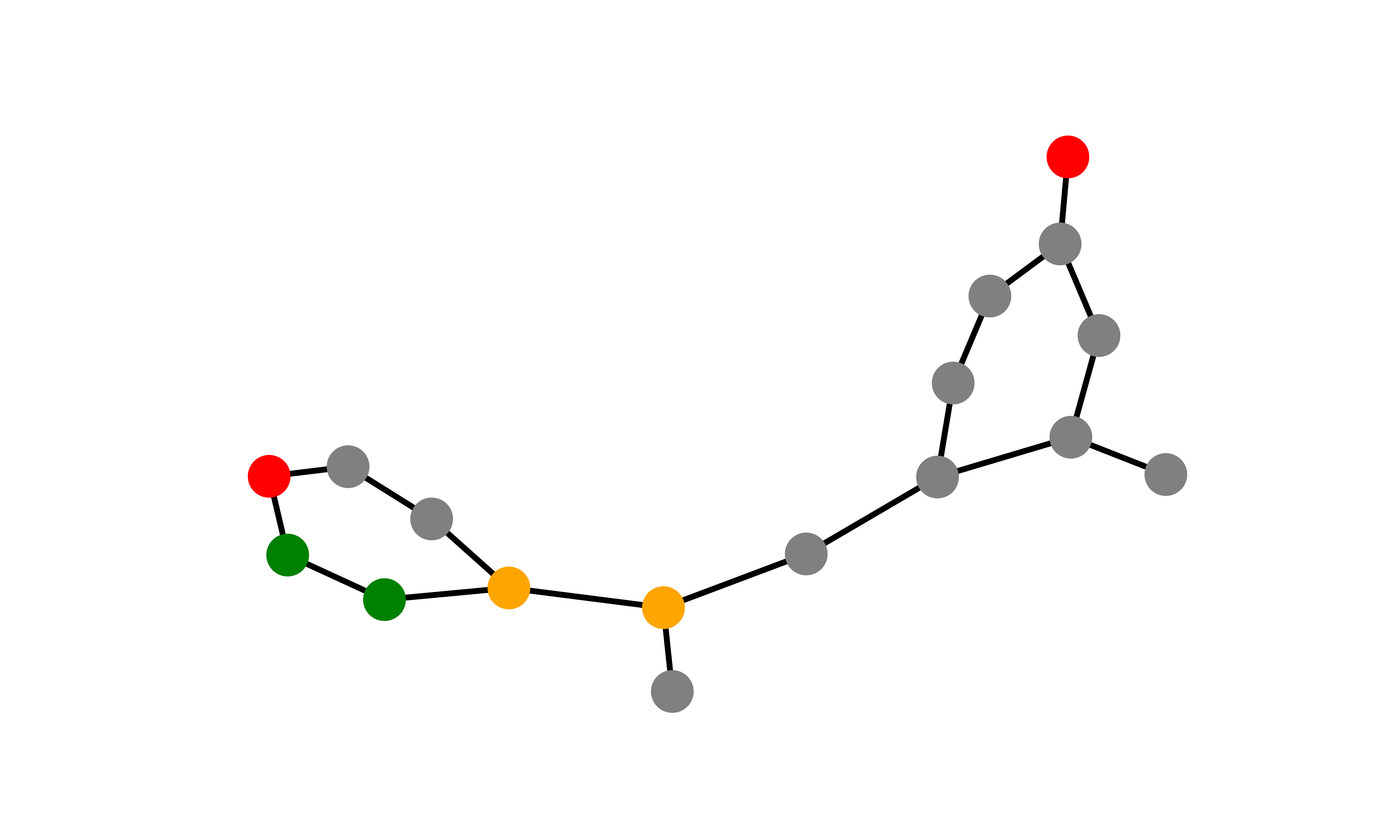}};
    \begin{scope}[x={(image.south east)},y={(image.north west)}]
        \draw (0.3, 0.8) node {$\tau(\textcolor{plt_green}{\sbullet[1.5]},\textcolor{plt_green}{\sbullet[1.5]}) \approx 30$};
        \draw (0.3, 0.7) node {$\tau(\textcolor{plt_yellow}{\sbullet[1.5]},\textcolor{plt_yellow}{\sbullet[1.5]}) \approx 36$};
        \draw (0.315, 0.6) node {$\tau(\textcolor{plt_red}{\sbullet[1.5]},\textcolor{plt_red}{\sbullet[1.5]}) \approx 252$};
    \end{scope}
\end{tikzpicture}
\hfill
\includegraphics[width=.45\textwidth]{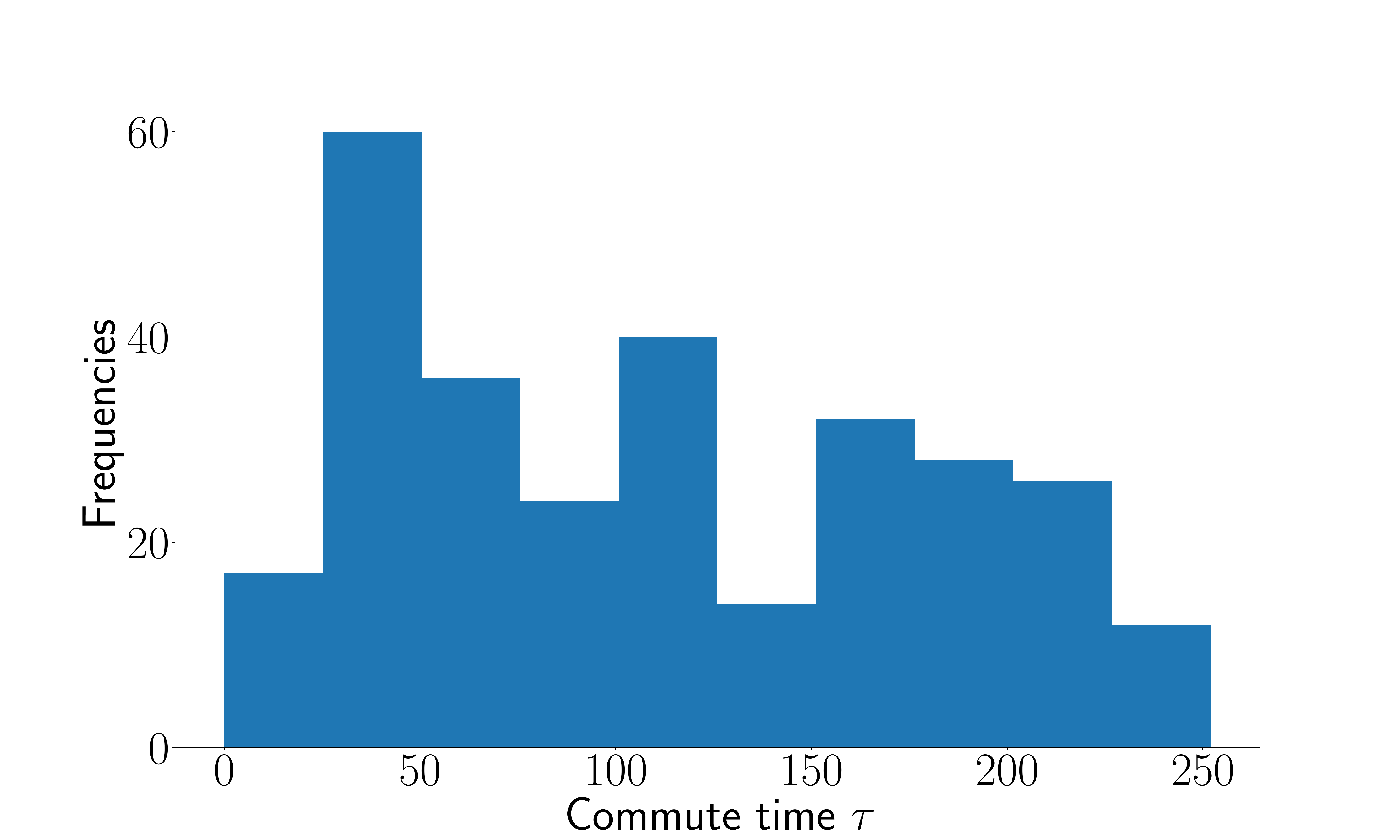}
\caption{(\textbf{Left}) Exemplary molecular graph of the ZINC (12K) dataset with colored nodes corresponding to different values of commute time $\tau$. We note that $\tau$ is a more refined measure than the distance, and in fact beyond long-range nodes (red case), $\tau$ also captures other topological properties (yellow nodes are adjacent but belong to a {\em cut-edge, so their commute-time is $2|\mathsf{E}|$}). 
(\textbf{Right}) Histogram of commute time $\tau$ between all pairs of the graph nodes.}
\label{fig:ZINC_molecule}
\end{minipage}
\end{figure}

We consider four different $\MPNN$ models namely GCN \citep{kipf2016semi}, GIN \citep{xu2018how}, GraphSAGE \citep{hamilton2017inductive}, and GatedGCN \citep{bresson2017residual}. Moreover, we choose the $\mathsf{MAX}$-pooling as the GNN readout, which is supported by Theorem \ref{thm:main_hessian_bound} and forces the GNNs to make use of the message-passing in order to learn the mixing.

\subsection{The role of commute time}
\label{sec:5_ct}
In this task, we empirically analyse the effect of the commute time $\tau$ of the underlying mixing on the performance of the $\MPNN$s. To this end, we fix the architecture for all considered $\MPNN$s. In particular, we set the depth to $m=\max_i \lceil \mathrm{diam}(\gph^i)/2\rceil$, which happens to be $m=11$ for the considered ZINC 12K graphs, such that the $\MPNN$s are guaranteed {\em not to underreach}. We further vary the value of the $\alpha$-quantile of the $\tau$-distributions over the graphs $\gph^i$ between $0$ and $1$, thus controlling the level of commute times. 
According to our theoretical findings in Section \ref{sec:4}, the measure $\widetilde{\osq}_{v,u}$ (Definition \ref{def:osq}) heavily depends on the commute time $\tau$ of the underlying mixing as derived in Theorem \ref{thm:commute_time}. In fact, this can be empirically verified in Appendix \fref{fig:OSQ_vs_CT}, where $\widetilde{\osq}_{v,u}$ increases for increasing values of $\alpha$. Thus, we would expect the $\MPNN$s to perform significantly worse for increasing levels of the commute time. This is indeed confirmed in  \fref{fig:eff_resist} depicting the resulting test mean absolute error (MAE) (average and standard deviation over several random weight initializations), and showing that the test MAE increases for larger values of $\alpha$ for all considered $\MPNN$s. The same qualitative behavior can be observed for the {\em training} MAE (Appendix \ref{app:sec:exp:train_error}).
We further note that the models exhibit low standard deviation, as can be seen in \fref{fig:eff_resist} as well.

\begin{figure}[ht!]
\centering
%\vspace{-7mm}
\begin{minipage}[t]{.475\textwidth}
\includegraphics[width=1.\textwidth]{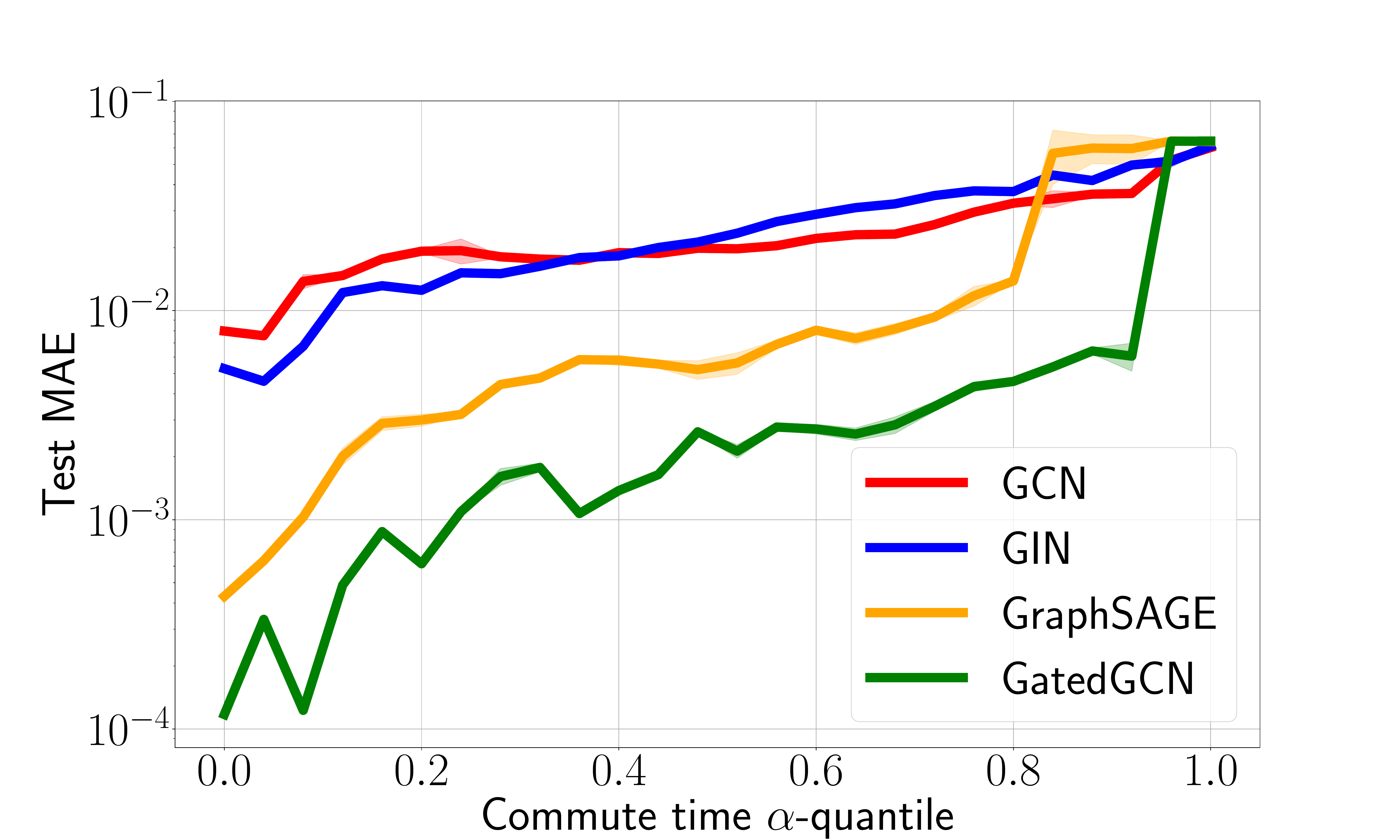}
\caption{Test MAE (average and standard deviation over several random weight initializations) of GCN, GIN, GraphSAGE, and GatedGCN on synthetic ZINC, where the commute time of the underlying mixing is varied, while the $\MPNN$ architecture is fixed (e.g., depth, number of parameters), i.e., mixing according to increasing values of the $\alpha$-quantile of the $\tau$-distribution over the ZINC graphs.
}
\label{fig:eff_resist}
\end{minipage}%
\hfill
\begin{minipage}[t]{.475\textwidth}
\includegraphics[width=1.\textwidth]{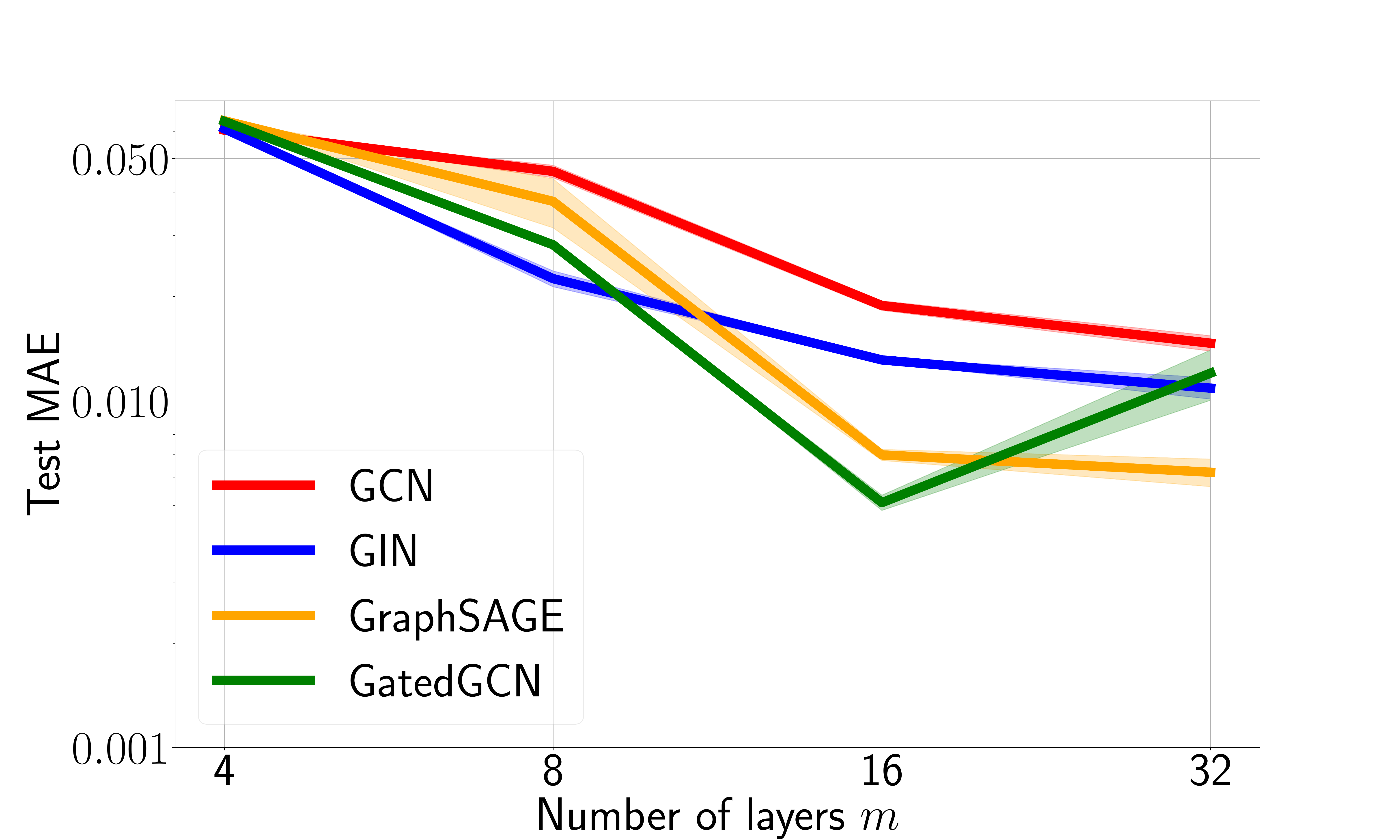}
\caption{Test MAE (average and standard deviation over several random weight initializations) of GCN, GIN, GraphSAGE, and GatedGCN on synthetic ZINC, where the commute time is fixed to be high (i.e., at the level of the $0.8$-quantile), while only the depth of the underlying $\MPNN$ is varied between $4$ and $32$ (all other architectural components are fixed).
}
\label{fig:depth_syn_ZINC}
\end{minipage}%
%\vspace{-3mm}
\end{figure}

\subsection{The role of depth}
\label{sec:5_depth}
In this task, we study the effect of the depth on the performance of the $\MPNN$s. To this end, we consider a high commute time-regime by setting $\alpha=0.8$. Note that in this case the maximum (over all graphs $\gph^i$) shortest path between two nodes $u^i,v^i$ is $14$. Therefore, a {\em depth of $m=7$ is sufficient to avoid under-reaching on all graphs}. However, according to the over-squashing measure we provide and the conclusions of Theorem \ref{thm:commute_time}, we expect the $\MPNN$s to be able to induce more mixing among nodes $v,u$, and hence reduce the error, as we increase the number of layers. %improve with increasing number of layers. 
This expectation is further evidenced in Appendix \fref{fig:OSQ_vs_depth}, where the computed $\widetilde{\osq}$ decreases for increasing number of layers.
In \fref{fig:depth_syn_ZINC}, we plot the test MAE (average and standard deviation over several random weight initializations) of all considered $\MPNN$s for increasing number of layers. % (from $5$ to $32$, i.e., starting from the mean of the shortest paths divided by two to not under-reach on average). 
We can indeed see that all considered GNNs benefit from depth, and thus higher capacity (Definition \ref{def:capacity}), as GatedGCN obtains the lowest test MAE with $16$ layers, as well as GraphSAGE, GIN, and GCN with $32$ layers. The same qualitative behavior can be observed for the {\em training} MAE (Appendix \ref{app:sec:exp:train_error}). Our theoretical results provide a strong explanation as to why a task {\bf only} depending on the mixing of nodes within 14 hops -- so that 7 layers would suffice -- actually benefits from many more layers. Naturally, we cannot increase the depth arbitrarily, as at some point other issues emerge, such as  vanishing gradients, which impact the trainability of the $\MPNN$s \citep{graphcon}. Moreover, as can be additionally seen in \fref{fig:depth_syn_ZINC}, all models exhibit low standard deviation (over several trained MPNNs). We finally point out that, while over-squashing occurs -- as can be seen when the number of layers `only' equals the minimal one to avoid underreaching -- it can be mitigated by taking much deeper architectures. %However, such improvement here is also made possible because the task {\bf only} depends on the interaction between nodes at large commute time; in real-world scenarios, typically one has interactions between nodes at small and large commute time, and in this case increasing the depth may have a positive or negative impact for the mixing of nodes that have small commute time. 

%We further note that although a depth of $m=7$ is sufficient to learn the underlying task, all $\MPNN$s greatly improve their performance by using deeper models.

\subsection{The role of mixing}
We further test the considered $\MPNN$ architectures on their performance with respect to different mixings. To this end, we consider again the tanh-based mixing as in our previous tasks (i.e., regressing targets $y_i =\tanh(x^i_{u^i}+x^i_{v^i})$ for each graph $\gph^i$ in the dataset), as well as another mixing based on the exponential function (i.e., with targets $y_i =\exp(x^i_{u^i}+x^i_{v^i})$). We note that these two tasks differ significantly in terms of their maximal mixing values \eqref{eq:mixing_order} (shown in \Tref{tab:node_mixing}).
Thus, according to \eqref{eq:osq_lower_bound_mixing} and Theorem \ref{thm:commute_time}, we would expect that $\MPNN$s perform significantly worse in the case of higher maximal mixing, i.e., for the exponential-based mixing compared to the tanh-mixing. To confirm this empirically, we train the $\MPNN$s on both types of mixing and provide the resulting relative MAEs (i.e., MAE divided by the $L^1$-norm of the targets) in \Tref{tab:node_mixing}. We can see that all four $\MPNN$s perform significantly better on the tanh-mixing than on the exponential-based mixing.
%, i.e., up to approximately $5.5$ times better in the case of $\mathsf{GraphSAGE}$. 
Moreover, increasing the range for the exponential-based mixing from $1$ to $1.5$ further impairs the performance of all considered $\MPNN$s.
In order to check if this difference in performance can simply be explained by a higher capacity required by a neural network to accurately approximate the mapping $\exp(x+y)$ compared to $\tanh(x+y)$ for some inputs $x,y \in \mathbb{R}$, we train a simple two-layer feed-forward neural network (with $2$ inputs, i.e., $x$ and $y$) on both mappings. The trained networks reach a similarly low relative MAE of $4.6 \times 10^{-4}$ for the $\tanh(x+y)$ mapping as well as $4.1\times 10^{-4}$ for the $\exp(x+y)$ mapping using an input range of $(0,1)$ and $4.0\times 10^{-4}$ for an input range of $(0,1.5)$. Thus, we can conclude that the significant differences in the obtained results in \Tref{tab:node_mixing} are not caused by a higher capacity required by a neural network to learn the underlying mappings of the different mixings. 

\begin{table*}[h!]
    \centering
    \caption{Relative MAE of GCN, GIN, GraphSAGE and GatedGCN on different choices of mixing on synthetic ZINC for a fixed $0.8$-quantile of the commute time distributions over graphs $\gph_i$.%, i.e., regressing targets $y_i = \tanh(x^i_{u^i}+x^i_{v^i})$ as well as $y_i=\exp(x^i_{u^i}+x^i_{v^i})$.
    }
    \vspace{0.5em}
    \label{tab:node_mixing}
    \resizebox{\textwidth}{!}{%
    \begin{tabular}{lll cccc}
    \toprule 
         \textbf{Mixing} &
        \textit{input interval} &
         \textit{maximal mixing} &
         GCN &
         GIN &
         GraphSAGE &
         GatedGCN 
         \\ \midrule

        $\tanh(x^i_{u^i}+x^i_{v^i})$ &
        $(0,1)$ &
        $\approx 0.77$&
        $0.024$ &
        $0.014$ & 
        $0.006$ &
        $0.004$ \\

        $\exp(x^i_{u^i}+x^i_{v^i})$ &
        $(0,1)$ &
        $\approx 7.4$&
        $0.043$ &
        $0.021$ &
        $0.033$ &
        $0.008$ \\

        $\exp(x^i_{u^i}+x^i_{v^i})$ &
        $(0,1.5)$ &
        $\approx 20.1$ &
        $0.054$ &
        $0.035$ &
        $0.075$ &
        $0.014$ \\
        
        \bottomrule
    \end{tabular}
    }
\end{table*} 

\section{Discussion}\label{sec:discussion}

\textbf{Related Work: expressive power of MPNNs.} The $\MPNN$ class in \eqref{eq:mpnn_message_functions} is as {\bf powerful} as the 1-WL test \citep{weisfeiler1968reduction} in distinguishing {\em unattributed} graphs \citep{xu2018how, morris2019weisfeiler}. In fact, %it is now known that 
$\MPNN$s typically struggle to compute graph properties on feature-less graphs %such as momenta 
\citep{dehmamy2019understanding, chen2020can, sato2019approximation, Loukas2020What}. 
The expressivity of GNNs has also been studied from the point of view of logical and tensor languages \citep{barcelo2019logical, azizian2021expressive, geerts2022expressiveness}. % which have provided hierarchies similar to color-refinement algorithms.
%While this research focuses on hierarchies to assess the expressive power in distinguishing graphs, 
Nonetheless, far less is known about which functions of node features $\MPNN$s can learn and {\em the capacity required to do so}. 
\citet{razin2022ability} recently studied the separation rank of a specific $\MPNN$ class. % relating it to the depth of the architecture and the walk index of the graph. 
While this approach is a strong inspiration for our work, 
the results in \cite{razin2022ability} only apply to a single unconventional family of $\MPNN$s which does not include $\MPNN$ models used in practice. {\em Our results instead hold in the full generality of \eqref{eq:mpnn_message_functions} and provide a novel approach for investigating the expressivity of MPNNs through the mixing they are able to generate among features.} % Besides, to the best of our knowledge, this is the first work formally analysing the limitations on the expressive power of MPNNs to learn functions and their second-order derivatives.} 

\textbf{Mixing and the WL test.} We emphasize that throughout our analysis we had \textbf{no} assumption on the nature of the features, that can in fact be structural or positional -- meaning that the MPNNs we have considered above, may also be more powerful than the 1-WL test. Our derivations do not rely on the ability to distinguish different node representations, but rather on the ability of the MPNN to mix information associated with different nodes. This novel alternative paradigm may help design GNNs that are more powerful at mixing than MPNNs, and may further shed light on how and when frameworks such as Graph Transformers can solve the underlying task better than conventional MPNNs. Similarly, our analysis holds for all equivariant MPNNs of the form \eqref{eq:mpnn_message_functions} acting on geometric graphs embedded in Euclidean space, independent of the geometric information included in the features.

\textbf{Differences between our results and existing works on over-squashing.} The problem of {\bf over-squashing} was introduced in \cite{alon2020bottleneck} and
studied through sensitivity analysis in \cite{topping2021understanding}. 
This approach was generalized in \cite{black2023understanding, di2023over} who proved that the Jacobian of node features is likely to be small if the nodes have high commute time (effective resistance). We discuss more in detail the novelty of this work when compared to \cite{topping2021understanding, di2023over}. (i) In \cite{topping2021understanding, di2023over} there is no analysis on which functions MPNNs cannot learn as a consequence of over-squashing, nor a formal measure of over-squashing is provided. In fact, they only derive some conditions under which the Jacobian of node features can be small, but do not connect it to the ability of MPNNs to learn desired function classes. Besides, the Jacobian of node features may not be suited for studying over-squashing for graph-level tasks. Note that our theory holds for node-level tasks and generalize existing approaches (see \Cref{app:sub:node_level}). (ii) The analysis in \cite{topping2021understanding} %shows that the Jacobian of nodes at distance 2 is harmfully small when they are separated by an edge with strong negative curvature, their analysis fails 
does not study over-squashing among nodes at distance larger than 2 and does not provide insights on the capacity required to learn certain tasks. (iii) Finally, the analysis in \cite{di2023over} does not account for MPNNs such as GatedGCN, and is only carried out in a simplified setting where the activation function is chosen to be a ReLU and its derivatives are taken to be the same, in expectation, for every path in the computational graph. %(our results instead work for smooth activations with no extra assumption). 
{\em We have extended these ideas to connect over-squashing and expressive power by studying higher-order derivatives of the $\MPNN$ and relating them to the capacity of the model and the underlying graph-topology.} 

\subsection{Limitations and ways forward}

\textbf{The measures $\osq$ and $\widetilde{\osq}$.} Definition \ref{def:true_osq} considers pairs of nodes and second-order derivatives; this could be generalized to build a hierarchy of measures with increasingly higher-order interactions of nodes. The proxy (lower bound) $\widetilde{\osq}$ in Definition \ref{def:osq} allows to derive necessary conditions for the MPNN to learn classes of functions based on their mixing. If, depending on the problem, one has access to better estimates on the mixing induced by an MPNN than \eqref{eq:main_thm_bound}, then one can extend our approach and get a finer approximation of $\osq$. %In particular, extending the analysis to lower bounds on the mixing, in order to provide {\em sufficient} conditions for learning tasks, is a topic for future work. 
In the paper we focused our analysis on the case of graph-level tasks; we provide an extension to node-level functions in Section \ref{app:sub:node_level}.

%This could be generalized to build a hierarchy of measures with increasingly higher-order interactions of nodes. In the paper we focused our analysis on the case of graph-level tasks; we provide an extension to node-level functions in Section \ref{app:sub:node_level}. Finally, we note that $\osq$ is based on the maximal mixing induced by MPNNs (upper bounds). While this allows us to derive {\em necessary} conditions on the capacity of the $\MPNN$ to learn functions with a given mixing, even when these criteria are met, the $\MPNN$ may still be unable to induce such mixing. Extending the analysis to lower bounds on the mixing, in order to provide {\em sufficient} conditions for learning tasks, is a topic for future work.

\textbf{Beyond sum-aggregations.} Our results apply to $\MPNN$s as in \eqref{eq:mpnn_message_functions}, where $\mpass$ is constant, and do not include attention-based $\MPNN$s \citep{velivckovic2017graph, brodyattentive} or Graph-Transformers \citep{kreuzer2021rethinking, mialon2021graphit, ying2021transformers,rampavsek2022recipe} which further depend on features via normalization. Extending the analysis to these models is only more technically involved. % and we note again how the class of MPNNs considered in \eqref{eq:mpnn_message_functions} is, to date, the most general one studied in relation to over-squashing.  
More generally, one could replace the aggregation $\sum_{u}\mathsf{A}_{vu}$ in \eqref{eq:mpnn_message_functions} with a smooth, permutation invariant operator $\bigoplus$ \citep{bronstein2021geometric, ong2022learnable}. 
{\em Our formalism  will then prove useful to assess if different aggregations are more expressive in terms of the mixing (interactions) they are able to generate.}

\textbf{Graph rewiring.} Another way of going beyond \eqref{eq:mpnn_message_functions} to find $\MPNN$s with lower $\osq$ is to replace $\mpass$ with a different matrix $\mpass'$, (partly) independent of the connectivity of the input graph. In other words, one could rewire the graph, by replacing $\gph$ with $\gph'$. Indeed, Theorem \ref{thm:commute_time} further validates why recent graph-rewiring methods such as \citep{arnaiz2022diffwire, karhadkar2022fosr, deac2022expander, black2023understanding} manage to alleviate over-squashing: by adding edges that decrease the overall effective resistance (commute time) of the graph, these methods reduce the measure $\osq$. 
More generally, {\em 
Definition \ref{def:osq} allows one to measure whether a given rewiring is beneficial in terms of over-squashing (and hence of the mixing generated) and to what extent}.
In fact, it follows from Theorem \ref{thm:commute_time} that methods like \citep{deac2022expander, shirzad2023exphormer} are {\bf optimal} from the point of view of friendliness of the computational graph to message-passing, since they both propose to propagate information over {\em expander} graphs, which are sparse and have commute time scaling linearly with the number of edges. Finally, our theoretical results suggest that for data given by point clouds in 3D space, the choice of a computational graph over which message passing can operate, should also account for the its commute time, given that the latter is closely aligned with over-squashing. 

{\bf Acknowledgements.} The authors would like to thank Dr. Mart\'{i}n Arjovsky, Dr. Charles Blundell and Dr. Karl Tuyls for their insightful feedback and constructive suggestions on an earlier version of the manuscript.

\newpage

%\textcolor{blue}{For the TMLR submission, remove the acknowledgments. Also, add a paragraph in Section 2 and in Section 6 specifying that the graph in this work is taken to be an abstract object simply determining which states exchange messages across each layer. Therefore, our analysis extends to the case of equivariant MPNNs for point clouds too whenever we replace the adjacency with the one induced by a choice of the cutoff radius for example -- even more so if we weight the messages by the distance. This is key since it studies the ability of a generic MPNN to induce mixing.}

\bibliography{refs}

\begin{thebibliography}{69}
\providecommand{\natexlab}[1]{#1}
\providecommand{\url}[1]{\texttt{#1}}
\expandafter\ifx\csname urlstyle\endcsname\relax
  \providecommand{\doi}[1]{doi: #1}\else
  \providecommand{\doi}{doi: \begingroup \urlstyle{rm}\Url}\fi

\bibitem[Aleliunas et~al.(1979)Aleliunas, Karp, Lipton, Lov{\'a}sz, and Rackoff]{aleliunas1979random}
Romas Aleliunas, Richard~M Karp, Richard~J Lipton, L{\'a}szl{\'o} Lov{\'a}sz, and Charles Rackoff.
\newblock Random walks, universal traversal sequences, and the complexity of maze problems.
\newblock In \emph{20th Annual Symposium on Foundations of Computer Science (sfcs 1979)}, pp.\  218--223. IEEE Computer Society, 1979.

\bibitem[Alon \& Yahav(2021)Alon and Yahav]{alon2020bottleneck}
Uri Alon and Eran Yahav.
\newblock On the bottleneck of graph neural networks and its practical implications.
\newblock In \emph{International Conference on Learning Representations}, 2021.

\bibitem[Arnaiz-Rodr{\'i}guez et~al.(2022)Arnaiz-Rodr{\'i}guez, Begga, Escolano, and Oliver]{arnaiz2022diffwire}
Adri{\'a}n Arnaiz-Rodr{\'i}guez, Ahmed Begga, Francisco Escolano, and Nuria Oliver.
\newblock {DiffWire: Inductive Graph Rewiring via the Lov{\'a}sz Bound}.
\newblock In \emph{The First Learning on Graphs Conference}, 2022.
\newblock URL \url{https://openreview.net/pdf?id=IXvfIex0mX6f}.

\bibitem[Azizian \& Lelarge(2021)Azizian and Lelarge]{azizian2021expressive}
Waiss Azizian and Marc Lelarge.
\newblock Expressive power of invariant and equivariant graph neural networks.
\newblock In \emph{International Conference on Learning Representations}, 2021.
\newblock URL \url{https://openreview.net/forum?id=lxHgXYN4bwl}.

\bibitem[Bapst et~al.(2020)Bapst, Keck, Grabska-Barwi{\'n}ska, Donner, Cubuk, Schoenholz, Obika, Nelson, Back, Hassabis, et~al.]{bapst2020unveiling}
Victor Bapst, Thomas Keck, A~Grabska-Barwi{\'n}ska, Craig Donner, Ekin~Dogus Cubuk, Samuel~S Schoenholz, Annette Obika, Alexander~WR Nelson, Trevor Back, Demis Hassabis, et~al.
\newblock Unveiling the predictive power of static structure in glassy systems.
\newblock \emph{Nature Physics}, 16\penalty0 (4):\penalty0 448--454, 2020.

\bibitem[Barcel{\'o} et~al.(2019)Barcel{\'o}, Kostylev, Monet, P{\'e}rez, Reutter, and Silva]{barcelo2019logical}
Pablo Barcel{\'o}, Egor~V Kostylev, Mikael Monet, Jorge P{\'e}rez, Juan Reutter, and Juan~Pablo Silva.
\newblock The logical expressiveness of graph neural networks.
\newblock In \emph{International Conference on Learning Representations}, 2019.

\bibitem[Bartlett et~al.(2017)Bartlett, Foster, and Telgarsky]{bartlett2017spectrally}
Peter~L Bartlett, Dylan~J Foster, and Matus~J Telgarsky.
\newblock Spectrally-normalized margin bounds for neural networks.
\newblock \emph{Advances in neural information processing systems}, 30, 2017.

\bibitem[Batzner et~al.(2022)Batzner, Musaelian, Sun, Geiger, Mailoa, Kornbluth, Molinari, Smidt, and Kozinsky]{batzner20223}
Simon Batzner, Albert Musaelian, Lixin Sun, Mario Geiger, Jonathan~P Mailoa, Mordechai Kornbluth, Nicola Molinari, Tess~E Smidt, and Boris Kozinsky.
\newblock E (3)-equivariant graph neural networks for data-efficient and accurate interatomic potentials.
\newblock \emph{Nature communications}, 13\penalty0 (1):\penalty0 2453, 2022.

\bibitem[Black et~al.(2023)Black, Nayyeri, Wan, and Wang]{black2023understanding}
Mitchell Black, Amir Nayyeri, Zhengchao Wan, and Yusu Wang.
\newblock Understanding oversquashing in gnns through the lens of effective resistance.
\newblock \emph{arXiv preprint arXiv:2302.06835}, 2023.

\bibitem[Blundell et~al.(2021)Blundell, Buesing, Davies, Veli{\v{c}}kovi{\'c}, and Williamson]{blundell2021towards}
Charles Blundell, Lars Buesing, Alex Davies, Petar Veli{\v{c}}kovi{\'c}, and Geordie Williamson.
\newblock Towards combinatorial invariance for kazhdan-lusztig polynomials.
\newblock \emph{arXiv preprint arXiv:2111.15161}, 2021.

\bibitem[Brandstetter et~al.(2021)Brandstetter, Worrall, and Welling]{brandstetter2021message}
Johannes Brandstetter, Daniel~E Worrall, and Max Welling.
\newblock Message passing neural pde solvers.
\newblock In \emph{International Conference on Learning Representations}, 2021.

\bibitem[Bresson \& Laurent(2017)Bresson and Laurent]{bresson2017residual}
Xavier Bresson and Thomas Laurent.
\newblock Residual gated graph convnets.
\newblock \emph{arXiv preprint arXiv:1711.07553}, 2017.

\bibitem[Brody et~al.(2022)Brody, Alon, and Yahav]{brodyattentive}
Shaked Brody, Uri Alon, and Eran Yahav.
\newblock How attentive are graph attention networks?
\newblock In \emph{International Conference on Learning Representations}, 2022.

\bibitem[Bronstein et~al.(2021)Bronstein, Bruna, Cohen, and Veli{\v{c}}kovi{\'c}]{bronstein2021geometric}
Michael~M Bronstein, Joan Bruna, Taco Cohen, and Petar Veli{\v{c}}kovi{\'c}.
\newblock Geometric deep learning: Grids, groups, graphs, geodesics, and gauges.
\newblock \emph{arXiv preprint arXiv:2104.13478}, 2021.

\bibitem[Cai \& Wang(2020)Cai and Wang]{cai2020note}
Chen Cai and Yusu Wang.
\newblock A note on over-smoothing for graph neural networks.
\newblock \emph{arXiv preprint arXiv:2006.13318}, 2020.

\bibitem[Chandra et~al.(1996)Chandra, Raghavan, Ruzzo, Smolensky, and Tiwari]{chandra1996electrical}
Ashok~K Chandra, Prabhakar Raghavan, Walter~L Ruzzo, Roman Smolensky, and Prasoon Tiwari.
\newblock The electrical resistance of a graph captures its commute and cover times.
\newblock \emph{computational complexity}, 6\penalty0 (4):\penalty0 312--340, 1996.

\bibitem[Chen et~al.(2022)Chen, Zhang, Li, et~al.]{chen2022redundancy}
Rongqin Chen, Shenghui Zhang, Ye~Li, et~al.
\newblock Redundancy-free message passing for graph neural networks.
\newblock \emph{Advances in Neural Information Processing Systems}, 35:\penalty0 4316--4327, 2022.

\bibitem[Chen et~al.(2020)Chen, Chen, Villar, and Bruna]{chen2020can}
Zhengdao Chen, Lei Chen, Soledad Villar, and Joan Bruna.
\newblock Can graph neural networks count substructures?
\newblock \emph{Advances in neural information processing systems}, 33:\penalty0 10383--10395, 2020.

\bibitem[Chung \& Graham(1997)Chung and Graham]{chung1997spectral}
Fan~RK Chung and Fan~Chung Graham.
\newblock \emph{Spectral graph theory}.
\newblock American Mathematical Soc., 1997.

\bibitem[Dauparas et~al.(2022)Dauparas, Anishchenko, Bennett, Bai, Ragotte, Milles, Wicky, Courbet, de~Haas, Bethel, et~al.]{dauparas2022robust}
Justas Dauparas, Ivan Anishchenko, Nathaniel Bennett, Hua Bai, Robert~J Ragotte, Lukas~F Milles, Basile~IM Wicky, Alexis Courbet, Rob~J de~Haas, Neville Bethel, et~al.
\newblock Robust deep learning--based protein sequence design using proteinmpnn.
\newblock \emph{Science}, 378\penalty0 (6615):\penalty0 49--56, 2022.

\bibitem[Davies et~al.(2021)Davies, Veli{\v{c}}kovi{\'c}, Buesing, Blackwell, Zheng, Toma{\v{s}}ev, Tanburn, Battaglia, Blundell, Juh{\'a}sz, et~al.]{davies2021advancing}
A.~Davies, P.~Veli{\v{c}}kovi{\'c}, L.~Buesing, S.~Blackwell, D.~Zheng, N.~Toma{\v{s}}ev, R.~Tanburn, P.~Battaglia, C.~Blundell, A.~Juh{\'a}sz, et~al.
\newblock Advancing mathematics by guiding human intuition with ai.
\newblock \emph{Nature}, 600\penalty0 (7887):\penalty0 70--74, 2021.

\bibitem[Deac et~al.(2022)Deac, Lackenby, and Veli{\v{c}}kovi{\'c}]{deac2022expander}
Andreea Deac, Marc Lackenby, and Petar Veli{\v{c}}kovi{\'c}.
\newblock Expander graph propagation.
\newblock In \emph{The First Learning on Graphs Conference}, 2022.

\bibitem[Dehmamy et~al.(2019)Dehmamy, Barab{\'a}si, and Yu]{dehmamy2019understanding}
Nima Dehmamy, Albert-L{\'a}szl{\'o} Barab{\'a}si, and Rose Yu.
\newblock Understanding the representation power of graph neural networks in learning graph topology.
\newblock \emph{Advances in Neural Information Processing Systems}, 32, 2019.

\bibitem[Derrow-Pinion et~al.(2021)Derrow-Pinion, She, Wong, Lange, Hester, Perez, Nunkesser, Lee, Guo, Wiltshire, et~al.]{derrow2021eta}
Austin Derrow-Pinion, Jennifer She, David Wong, Oliver Lange, Todd Hester, Luis Perez, Marc Nunkesser, Seongjae Lee, Xueying Guo, Brett Wiltshire, et~al.
\newblock Eta prediction with graph neural networks in google maps.
\newblock In \emph{Proceedings of the 30th ACM International Conference on Information \& Knowledge Management}, pp.\  3767--3776, 2021.

\bibitem[DeZoort et~al.(2023)DeZoort, Battaglia, Biscarat, and Vlimant]{dezoort2023graph}
Gage DeZoort, Peter~W Battaglia, Catherine Biscarat, and Jean-Roch Vlimant.
\newblock Graph neural networks at the large hadron collider.
\newblock \emph{Nature Reviews Physics}, pp.\  1--23, 2023.

\bibitem[Di~Giovanni et~al.(2022)Di~Giovanni, Rowbottom, Chamberlain, Markovich, and Bronstein]{di2022graph}
Francesco Di~Giovanni, James Rowbottom, Benjamin~P Chamberlain, Thomas Markovich, and Michael~M Bronstein.
\newblock Graph neural networks as gradient flows.
\newblock \emph{arXiv preprint arXiv:2206.10991}, 2022.

\bibitem[Di~Giovanni et~al.(2023)Di~Giovanni, Giusti, Barbero, Luise, Lio, and Bronstein]{di2023over}
Francesco Di~Giovanni, Lorenzo Giusti, Federico Barbero, Giulia Luise, Pietro Lio, and Michael Bronstein.
\newblock On over-squashing in message passing neural networks: The impact of width, depth, and topology.
\newblock In \emph{International Conference on Machine Learning}, 2023.

\bibitem[Dwivedi et~al.(2020)Dwivedi, Joshi, Laurent, Bengio, and Bresson]{bench_gnns}
Vijay~Prakash Dwivedi, Chaitanya~K Joshi, Thomas Laurent, Yoshua Bengio, and Xavier Bresson.
\newblock Benchmarking graph neural networks.
\newblock \emph{arXiv:2003.00982}, 2020.

\bibitem[Equer et~al.(2023)Equer, Rusch, and Mishra]{equer2023multi}
L{\'e}onard Equer, T~Konstantin Rusch, and Siddhartha Mishra.
\newblock Multi-scale message passing neural pde solvers.
\newblock \emph{arXiv preprint arXiv:2302.03580}, 2023.

\bibitem[Garg et~al.(2020)Garg, Jegelka, and Jaakkola]{garg2020generalization}
Vikas Garg, Stefanie Jegelka, and Tommi Jaakkola.
\newblock Generalization and representational limits of graph neural networks.
\newblock In \emph{International Conference on Machine Learning}, pp.\  3419--3430. PMLR, 2020.

\bibitem[Gasteiger et~al.(2021)Gasteiger, Becker, and G{\"u}nnemann]{gasteiger2021gemnet}
Johannes Gasteiger, Florian Becker, and Stephan G{\"u}nnemann.
\newblock Gemnet: Universal directional graph neural networks for molecules.
\newblock \emph{Advances in Neural Information Processing Systems}, 34:\penalty0 6790--6802, 2021.

\bibitem[Geerts \& Reutter(2022)Geerts and Reutter]{geerts2022expressiveness}
Floris Geerts and Juan~L Reutter.
\newblock Expressiveness and approximation properties of graph neural networks.
\newblock In \emph{International Conference on Learning Representations}, 2022.
\newblock URL \url{https://openreview.net/forum?id=wIzUeM3TAU}.

\bibitem[Gilmer et~al.(2017)Gilmer, Schoenholz, Riley, Vinyals, and Dahl]{gilmer2017neural}
Justin Gilmer, Samuel~S Schoenholz, Patrick~F Riley, Oriol Vinyals, and George~E Dahl.
\newblock Neural message passing for quantum chemistry.
\newblock In \emph{International Conference on Machine Learning}, pp.\  1263--1272. PMLR, 2017.

\bibitem[Gori et~al.(2005)Gori, Monfardini, and Scarselli]{gori2005new}
Marco Gori, Gabriele Monfardini, and Franco Scarselli.
\newblock A new model for learning in graph domains.
\newblock In \emph{Proceedings. 2005 IEEE International Joint Conference on Neural Networks, 2005.}, volume~2, pp.\  729--734. IEEE, 2005.

\bibitem[Hamilton et~al.(2017)Hamilton, Ying, and Leskovec]{hamilton2017inductive}
W.~L. Hamilton, R.~Ying, and J.~Leskovec.
\newblock Inductive representation learning on large graphs.
\newblock In \emph{Advances in Neural Information Processing Systems}, pp.\  1025--1035, 2017.

\bibitem[Hornik et~al.(1990)Hornik, Stinchcombe, and White]{hornik1990universal}
Kurt Hornik, Maxwell Stinchcombe, and Halbert White.
\newblock Universal approximation of an unknown mapping and its derivatives using multilayer feedforward networks.
\newblock \emph{Neural networks}, 3\penalty0 (5):\penalty0 551--560, 1990.

\bibitem[Irwin et~al.(2012)Irwin, Sterling, Mysinger, Bolstad, and Coleman]{zinc}
John~J Irwin, Teague Sterling, Michael~M Mysinger, Erin~S Bolstad, and Ryan~G Coleman.
\newblock Zinc: a free tool to discover chemistry for biology.
\newblock \emph{Journal of chemical information and modeling}, 52\penalty0 (7):\penalty0 1757--1768, 2012.

\bibitem[Joshi et~al.(2023)Joshi, Bodnar, Mathis, Cohen, and Lio]{joshi2023expressive}
Chaitanya~K Joshi, Cristian Bodnar, Simon~V Mathis, Taco Cohen, and Pietro Lio.
\newblock On the expressive power of geometric graph neural networks.
\newblock \emph{arXiv preprint arXiv:2301.09308}, 2023.

\bibitem[Karhadkar et~al.(2023)Karhadkar, Banerjee, and Montufar]{karhadkar2022fosr}
Kedar Karhadkar, Pradeep~Kr. Banerjee, and Guido Montufar.
\newblock Fo{SR}: First-order spectral rewiring for addressing oversquashing in {GNN}s.
\newblock In \emph{The Eleventh International Conference on Learning Representations}, 2023.
\newblock URL \url{https://openreview.net/forum?id=3YjQfCLdrzz}.

\bibitem[Keriven \& Peyr{\'e}(2019)Keriven and Peyr{\'e}]{keriven2019universal}
Nicolas Keriven and Gabriel Peyr{\'e}.
\newblock Universal invariant and equivariant graph neural networks.
\newblock \emph{Advances in Neural Information Processing Systems}, 32, 2019.

\bibitem[Kipf \& Welling(2017)Kipf and Welling]{kipf2016semi}
Thomas~N. Kipf and Max Welling.
\newblock {Semi-Supervised Classification with Graph Convolutional Networks}.
\newblock In \emph{International Conference on Learning Representations}, 2017.

\bibitem[Kreuzer et~al.(2021)Kreuzer, Beaini, Hamilton, L{\'e}tourneau, and Tossou]{kreuzer2021rethinking}
Devin Kreuzer, Dominique Beaini, Will Hamilton, Vincent L{\'e}tourneau, and Prudencio Tossou.
\newblock Rethinking graph transformers with spectral attention.
\newblock In \emph{Advances in Neural Information Processing Systems}, volume~34, pp.\  21618--21629, 2021.

\bibitem[Liu et~al.(2023)Liu, Catacutan, Rathod, Swanson, Jin, Mohammed, Chiappino-Pepe, Syed, Fragis, Rachwalski, Magolan, Surette, Coombes, Jaakkola, Barzilay, Collins, and Stokes]{Liu2023}
Gary Liu, Denise~B. Catacutan, Khushi Rathod, Kyle Swanson, Wengong Jin, Jody~C. Mohammed, Anush Chiappino-Pepe, Saad~A. Syed, Meghan Fragis, Kenneth Rachwalski, Jakob Magolan, Michael~G. Surette, Brian~K. Coombes, Tommi Jaakkola, Regina Barzilay, James~J. Collins, and Jonathan~M. Stokes.
\newblock Deep learning-guided discovery of an antibiotic targeting acinetobacter baumannii.
\newblock \emph{Nature Chemical Biology}, May 2023.
\newblock ISSN 1552-4469.
\newblock \doi{10.1038/s41589-023-01349-8}.
\newblock URL \url{https://doi.org/10.1038/s41589-023-01349-8}.

\bibitem[Loukas(2020)]{Loukas2020What}
Andreas Loukas.
\newblock What graph neural networks cannot learn: depth vs width.
\newblock In \emph{International Conference on Learning Representations}, 2020.
\newblock URL \url{https://openreview.net/forum?id=B1l2bp4YwS}.

\bibitem[Lov{\'a}sz(1993)]{lovasz1993random}
L{\'a}szl{\'o} Lov{\'a}sz.
\newblock Random walks on graphs.
\newblock \emph{Combinatorics, Paul erdos is eighty}, 2\penalty0 (1-46):\penalty0 4, 1993.

\bibitem[Maron et~al.(2019)Maron, Fetaya, Segol, and Lipman]{maron2019universality}
Haggai Maron, Ethan Fetaya, Nimrod Segol, and Yaron Lipman.
\newblock On the universality of invariant networks.
\newblock In \emph{International conference on machine learning}, pp.\  4363--4371. PMLR, 2019.

\bibitem[Mialon et~al.(2021)Mialon, Chen, Selosse, and Mairal]{mialon2021graphit}
Grégoire Mialon, Dexiong Chen, Margot Selosse, and Julien Mairal.
\newblock Graphit: Encoding graph structure in transformers.
\newblock \emph{CoRR}, abs/2106.05667, 2021.

\bibitem[Mirhoseini et~al.(2021)Mirhoseini, Goldie, Yazgan, Jiang, Songhori, Wang, Lee, Johnson, Pathak, Nazi, et~al.]{mirhoseini2021graph}
Azalia Mirhoseini, Anna Goldie, Mustafa Yazgan, Joe~Wenjie Jiang, Ebrahim Songhori, Shen Wang, Young-Joon Lee, Eric Johnson, Omkar Pathak, Azade Nazi, et~al.
\newblock A graph placement methodology for fast chip design.
\newblock \emph{Nature}, 594\penalty0 (7862):\penalty0 207--212, 2021.

\bibitem[Monti et~al.(2019)Monti, Frasca, Eynard, Mannion, and Bronstein]{monti2019fake}
Federico Monti, Fabrizio Frasca, Davide Eynard, Damon Mannion, and Michael~M Bronstein.
\newblock Fake news detection on social media using geometric deep learning.
\newblock \emph{arXiv preprint arXiv:1902.06673}, 2019.

\bibitem[Morris et~al.(2019)Morris, Ritzert, Fey, Hamilton, Lenssen, Rattan, and Grohe]{morris2019weisfeiler}
Christopher Morris, Martin Ritzert, Matthias Fey, William~L Hamilton, Jan~Eric Lenssen, Gaurav Rattan, and Martin Grohe.
\newblock Weisfeiler and leman go neural: Higher-order graph neural networks.
\newblock In \emph{AAAI Conference on Artificial Intelligence}, pp.\  4602--4609. {AAAI} Press, 2019.

\bibitem[Ong \& Veli{\v{c}}kovi{\'c}(2022)Ong and Veli{\v{c}}kovi{\'c}]{ong2022learnable}
Euan Ong and Petar Veli{\v{c}}kovi{\'c}.
\newblock Learnable commutative monoids for graph neural networks.
\newblock \emph{arXiv preprint arXiv:2212.08541}, 2022.

\bibitem[Rampasek et~al.(2022)Rampasek, Galkin, Dwivedi, Luu, Wolf, and Beaini]{rampavsek2022recipe}
Ladislav Rampasek, Mikhail Galkin, Vijay~Prakash Dwivedi, Anh~Tuan Luu, Guy Wolf, and Dominique Beaini.
\newblock Recipe for a general, powerful, scalable graph transformer.
\newblock In \emph{Advances in Neural Information Processing Systems}, 2022.

\bibitem[Razin et~al.(2022)Razin, Verbin, and Cohen]{razin2022ability}
Noam Razin, Tom Verbin, and Nadav Cohen.
\newblock On the ability of graph neural networks to model interactions between vertices.
\newblock \emph{arXiv preprint arXiv:2211.16494}, 2022.

\bibitem[Rusch et~al.(2022)Rusch, Chamberlain, Rowbottom, Mishra, and Bronstein]{graphcon}
T.~Konstantin Rusch, Ben Chamberlain, James Rowbottom, Siddhartha Mishra, and Michael Bronstein.
\newblock Graph-coupled oscillator networks.
\newblock In \emph{Proceedings of the 39th International Conference on Machine Learning}, volume 162 of \emph{Proceedings of Machine Learning Research}, pp.\  18888--18909. PMLR, 2022.

\bibitem[Rusch et~al.(2023{\natexlab{a}})Rusch, Bronstein, and Mishra]{os_survey}
T~Konstantin Rusch, Michael~M Bronstein, and Siddhartha Mishra.
\newblock A survey on oversmoothing in graph neural networks.
\newblock \emph{arXiv preprint arXiv:2303.10993}, 2023{\natexlab{a}}.

\bibitem[Rusch et~al.(2023{\natexlab{b}})Rusch, Chamberlain, Mahoney, Bronstein, and Mishra]{g2}
T~Konstantin Rusch, Benjamin~P Chamberlain, Michael~W Mahoney, Michael~M Bronstein, and Siddhartha Mishra.
\newblock Gradient gating for deep multi-rate learning on graphs.
\newblock In \emph{International Conference on Learning Representations}, 2023{\natexlab{b}}.

\bibitem[Sato et~al.(2019)Sato, Yamada, and Kashima]{sato2019approximation}
Ryoma Sato, Makoto Yamada, and Hisashi Kashima.
\newblock Approximation ratios of graph neural networks for combinatorial problems.
\newblock \emph{Advances in Neural Information Processing Systems}, 32, 2019.

\bibitem[Scarselli et~al.(2008)Scarselli, Gori, Tsoi, Hagenbuchner, and Monfardini]{scarselli2008graph}
Franco Scarselli, Marco Gori, Ah~Chung Tsoi, Markus Hagenbuchner, and Gabriele Monfardini.
\newblock The graph neural network model.
\newblock \emph{IEEE transactions on neural networks}, 20\penalty0 (1):\penalty0 61--80, 2008.

\bibitem[Shirzad et~al.(2023)Shirzad, Velingker, Venkatachalam, Sutherland, and Sinop]{shirzad2023exphormer}
Hamed Shirzad, Ameya Velingker, Balaji Venkatachalam, Danica~J Sutherland, and Ali~Kemal Sinop.
\newblock Exphormer: Sparse transformers for graphs.
\newblock \emph{arXiv preprint arXiv:2303.06147}, 2023.

\bibitem[Stokes et~al.(2020)Stokes, Yang, Swanson, Jin, Cubillos-Ruiz, Donghia, MacNair, French, Carfrae, Bloom-Ackermann, et~al.]{stokes2020deep}
Jonathan~M Stokes, Kevin Yang, Kyle Swanson, Wengong Jin, Andres Cubillos-Ruiz, Nina~M Donghia, Craig~R MacNair, Shawn French, Lindsey~A Carfrae, Zohar Bloom-Ackermann, et~al.
\newblock A deep learning approach to antibiotic discovery.
\newblock \emph{Cell}, 180\penalty0 (4):\penalty0 688--702, 2020.

\bibitem[Topping et~al.(2022)Topping, Di~Giovanni, Chamberlain, Dong, and Bronstein]{topping2021understanding}
Jake Topping, Francesco Di~Giovanni, Benjamin~Paul Chamberlain, Xiaowen Dong, and Michael~M Bronstein.
\newblock Understanding over-squashing and bottlenecks on graphs via curvature.
\newblock In \emph{International Conference on Learning Representations}, 2022.

\bibitem[Veli{\v{c}}kovi{\'c}(2023)]{velivckovic2023everything}
Petar Veli{\v{c}}kovi{\'c}.
\newblock Everything is connected: Graph neural networks.
\newblock \emph{Current Opinion in Structural Biology}, 79:\penalty0 102538, 2023.

\bibitem[Veličković et~al.(2018)Veličković, Cucurull, Casanova, Romero, Liò, and Bengio]{velivckovic2017graph}
Petar Veličković, Guillem Cucurull, Arantxa Casanova, Adriana Romero, Pietro Liò, and Yoshua Bengio.
\newblock Graph attention networks.
\newblock In \emph{International Conference on Learning Representations}, 2018.

\bibitem[Weisfeiler \& Leman(1968)Weisfeiler and Leman]{weisfeiler1968reduction}
Boris Weisfeiler and Andrei Leman.
\newblock The reduction of a graph to canonical form and the algebra which appears therein.
\newblock \emph{nti, Series}, 2\penalty0 (9):\penalty0 12--16, 1968.

\bibitem[Williamson(2023)]{williamson2023deep}
Geordie Williamson.
\newblock Is deep learning a useful tool for the pure mathematician?
\newblock \emph{arXiv preprint arXiv:2304.12602}, 2023.

\bibitem[Xu et~al.(2019)Xu, Hu, Leskovec, and Jegelka]{xu2018how}
K.~Xu, W.~Hu, J.~Leskovec, and S.~Jegelka.
\newblock How powerful are graph neural networks?
\newblock In \emph{International Conference on Learning Representations}, 2019.

\bibitem[Xu et~al.(2018)Xu, Li, Tian, Sonobe, Kawarabayashi, and Jegelka]{xu2018representation}
Keyulu Xu, Chengtao Li, Yonglong Tian, Tomohiro Sonobe, Ken-ichi Kawarabayashi, and Stefanie Jegelka.
\newblock Representation learning on graphs with jumping knowledge networks.
\newblock In \emph{International Conference on Machine Learning}, pp.\  5453--5462. PMLR, 2018.

\bibitem[Ying et~al.(2021)Ying, Cai, Luo, Zheng, Ke, He, Shen, and Liu]{ying2021transformers}
Chengxuan Ying, Tianle Cai, Shengjie Luo, Shuxin Zheng, Guolin Ke, Di~He, Yanming Shen, and Tie-Yan Liu.
\newblock Do transformers really perform badly for graph representation?
\newblock In \emph{Advances in Neural Information Processing Systems}, volume~34, pp.\  28877--28888, 2021.

\bibitem[Zhang et~al.(2023)Zhang, Luo, Wang, and He]{zhangrethinking}
Bohang Zhang, Shengjie Luo, Liwei Wang, and Di~He.
\newblock Rethinking the expressive power of gnns via graph biconnectivity.
\newblock In \emph{The Eleventh International Conference on Learning Representations}, 2023.

\end{thebibliography}
\bibliographystyle{tmlr}

\newpage

\appendix

\section{Outline of the appendix}

We provide an overview of the appendix. Since in the appendix we report additional theoretical results and considerations, we first point out to the most relevant content: the proofs of the main results, the extension of our discussion and analysis to node-level tasks, and the additional ablation studies.

\textbf{Where to find proofs of the main results.} We prove Theorem \ref{thm:main_hessian_bound} in Section \ref{app:subsec:proof:theorem:3}, we prove Theorem \ref{thm:bounded_depth_case} in Section \ref{app:sub:proof:theorem:4.2}, and finally we prove Theorem \ref{thm:commute_time} in Section \ref{app:sub:proof:theorem:4.3}. 

\textbf{Where to find the extension to node-level tasks.}Concerning the case of node-level tasks, we present a thorough discussion on the matter in Section \ref{app:sub:node_level}, where we extend the definition of the over-squashing measure and generalize Theorem \ref{thm:main_hessian_bound} and Theorem \ref{thm:commute_time} to node-level predictions of the MPNN class in \eqref{eq:mpnn_message_functions}.

\textbf{Where to find additional ablation studies.} In Section \ref{app:sec:exp} we have conducted further experiments on the profile of the over-squashing measure $\widetilde{\osq}$ across different MPNN models as well as on the training mean average error, to further validate our claims on over-squashing hindering the expressive power of MPNNs. 

Next, we summarize the contents of the Appendix more in detail below.

\begin{itemize}
    \item In order to be self-consistent, in Section \ref{app:sec:additional_notions} we review important notions pertaining to the spectrum of the graph-Laplacian and known properties of random walks on graphs, that will be then be used in our proofs.
    \item In Section \ref{app:sec:b} we prove the main theorem on the maximal mixing induced by MPNNs (Theorem \ref{thm:main_hessian_bound}). In particular, we also derive additional results on the mixing generated at a specific node, which will turn out useful when extending the characterization of the over-squashing measure $\widetilde{\osq}$ for node-level tasks.
    \item In Section \ref{app:sec_c} we prove the main results of Section \ref{sec:4}, mainly Theorem \ref{thm:bounded_depth_case} and Theorem \ref{thm:commute_time}. Further, we also derive an explicit (sharper) characterization of the depth required to induce enough mixing among nodes, in terms of the pseudo-inverse of the graph-Laplacian. Finally, in Section \ref{app:sub:unnormalized} we extend the results to the case of the unnormalized adjacency matrix and discuss relative over-squashing measures. 
    \item In Section \ref{app:sub:node_level} we generalize the over-squashing measure for node-level tasks, commenting on the differences between our approach and existing works (mainly \citep{topping2021understanding, black2023understanding, di2023over}). In particular, we show that the same conclusions of Theorem \ref{thm:commute_time} hold for node-level predictions too.
    \item Finally, in Section \ref{app:sec:exp} we report additional details on our experimental setup and further ablation studies concerning the over-squashing measure $\widetilde{\osq}$.
\end{itemize}

\section{Summary of spectral properties on graphs}\label{app:sec:additional_notions}

\textbf{Basic notions of spectral theory on graphs.} 

Throughout the appendix, we let $\DELta$ be the normalized graph Laplacian defined by $\DELta = \mathbf{I} - \mathbf{D}^{-1/2}\mathbf{A}\mathbf{D}^{-1/2}$. It is known \cite{chung1997spectral} that the graph Laplacian is a symmetrically, positive semi-definite matrix whose spectral decomposition takes the form
\begin{equation}\label{app:eq:laplacian:decomposition}
    \DELta = \sum_{\ell = 0}^{n-1}\lambda_\ell \phi_\ell\phi_\ell^\top,
\end{equation}
\noindent where $\{\phi_\ell\}$ is an orthonormal basis in $\R^{n}$ and $0 
 = \lambda_0 < \lambda_1 < \ldots < \lambda_{n-1}$ -- recall that since we assume $\gph$ to be connected, the zero eigenvalue has multiplicity one, i.e. $\lambda_1 > 0$. We also note that we typically write $\phi_\ell(v)$ for the value of $\phi_\ell$ at $v\in\mathsf{V}$, and that the kernel of $\DELta$ is spanned by $\phi_0$ with $\phi_0(v) = \sqrt{d_v/2|\mathsf{E}|}$.
 the results would extend to the bipartite cas As usual when doing spectral analysis one too if  graphs, we exclude the edge case of the bipartite graph to make sure that the largest eigenvalue of the graph Laplacian satisfies $\lambda_{n-1} < 2$ -- yet all results hold for the bipartite case too provided we take $\|\nabla_2\psi\| < 1$. Finally, we let $\DELta^\dagger$ denote the pseudo-inverse of the graph Laplacian, which can be written as
 \begin{equation}\label{app:eq:pseudoinverse}
     \DELta^\dagger = \sum_{\ell = 1}^{n-1}\frac{1}{\lambda_\ell} \phi_\ell\phi_\ell^\top,
 \end{equation}
and we emphasize that the sum starts from $\ell = 1$ since we need to ignore the kernel of $\DELta$ spanned by the orthonormal vector $\phi_0$.

\textbf{Basic properties of Random Walks on graphs.} A simple Random Walk (RW) on $\gph$ is a Markov chain supported on the nodes $\mathsf{V}$ with transition matrix defined by $\mathbf{P}(v,u) = d_v^{-1}$. While a RW can be studied through different properties, the one we are interested in is the {\em commute time} $\tau$, which represents the expected number of steps for a RW starting at $v$, to visit $u$ and then come back to $v$. The commute time is a {\em distance} on the graph and captures the diffusion properties associated with the underlying topology. In fact, while nodes that are distant often have larger commute time, the latter is more expressive than the shortest-walk graph-distance, since it also accounts (for example) for the number of paths connecting two given nodes. Thanks to \cite{lovasz1993random}, we can write down the commute time among two nodes using the spectral representation of the graph Laplacian in \eqref{app:eq:laplacian:decomposition}:
\begin{equation}\label{app:eq:lovasz}
    \tau(v,u) = 2 | \mathsf{E} | \sum_{\ell = 1}^{n-1}\frac{1}{\lambda_\ell}\Big(\frac{\phi_\ell(v)}{\sqrt{d_v}} - \frac{\phi_\ell(u)}{\sqrt{d_u}}\Big)^2.
\end{equation}

\section{Proofs and additional details of Section \ref{sec:3}}\label{app:sec:b}

The goal of this section amounts to proving Theorem \ref{thm:main_hessian_bound}. To work towards this result, we first derive bounds on the Jacobian and Hessian of a single node feature after $m$ layers before the readout $\mathsf{READ}$ operation. We emphasize that our analysis below is novel, when compared to previous works of \cite{black2023understanding,di2023over}, on many accounts. First, \cite{black2023understanding, di2023over} do not consider higher (second) order derivatives, limiting their discussion to the case of first order derivatives, which are not suited to capture notions of mixing among features -- we will expand on this topic in Section \ref{app:sub:node_level}. Second, even for the case of first-order derivatives, our result below is more general since it holds for all $\MPNN$s as in \eqref{eq:mpnn_message_functions}, which includes (i) message-functions $\psi$ that also depend on the input features (as for GatedGCN), and (ii) choices of message-passing matrices $\mpass$ that could be weighted and (or) asymmetric. Third, the analysis in \cite{black2023understanding,di2023over} does not account for the role of the readout map and hence fails to study the expressive power of graph-level prediction of $\MPNN$s as measured by the mixing they generate among nodes.

\textbf{Conventions and notations for the proofs.} First, we recall that $\h_v^{(0)} = \x_v\in\R^{d}$ is the input feature at node $v$. Below, we write $h_v^{(t),\alpha}$ for the $\alpha$-th entry of the feature $\h_v^{(t)}$. To simplify the notations, we rewrite the layer-update in \eqref{eq:mpnn_message_functions} using coordinates as 
\begin{equation}\label{app:eq:auxialiary_mpnn_argument}
h_v^{(t),\alpha} = \sigma(\tilde{h}_v^{(t-1),\alpha}), \quad 1\leq \alpha\leq d,
\end{equation}
\noindent where $\tilde{h}_v^{(t-1),\alpha}$ is the entry $\alpha$ of the pre-activated feature of node $v$ at layer $t$. We also let $\partial_{1,p}\mess^{(t),r}$ and $\partial_{2,p}\mess^{(t),r}$ be the $p$-th derivative of $(\mess^{(t)}(\cdot,x))_r$ and of $(\mess^{(t)}(x,\cdot))_r$, respectively. To avoid cumbersome notations, we usually omit to write the arguments of the derivatives of the message-functions $\mess$. Similarly, we let $\nabla_1\mess$ ($\nabla_2\mess$) be the $d\times d$ Jacobian matrix of $\mess$ with respect to the first (second) variable. Finally, given nodes $i,v,u \in \mathsf{V}$ we introduce the following terms:
\[
\nabla_u\h_v^{(m)} := \frac{\partial \h_v^{(m)}}{\partial \x_u} \in \R^{d\times d}, \quad \nabla^2_{uv}\h_i^{(m)} := \frac{\partial^2 \h_i^{(m)}}{\partial \x_u \partial\x_v} \in \R^{d\times (d\times d)}.
\]

First, we derive an upper bound on the first-order derivatives of the node-features. This will provide useful to derive the more general second-order estimate of the MPNN-prediction. We highlight that the result below extends the analysis in \citep{di2023over} to MPNNs with arbitrary (i.e. non-linear) message functions $\psi$, such as GatedGCN \citep{bresson2017residual}.

\begin{theorem}\label{app:thm:first_derivative}
Given $\MPNN$s as in \eqref{eq:mpnn_message_functions}, let $\sigma$ and $\mess^{(t)}$ be $\mathcal{C}^{1}$ functions and assume $| \sigma^\prime|\leq c_{\sigma}$, $\| \OMEga^{(t)}\| \leq \specomega$, $\| \W^{(t)}\| \leq \specw$, $\| \nabla_1\mess^{(t)}\| \leq c_{1}$, and $\| \nabla_2\mess^{(t)}\|\leq c_{2}$.
\noindent Let $\gso\in\R^{n\times n}$ be %defined as %$\second^{(k)}_{vu}\in\R^{n}$ be defined as
\[
\gso := \frac{\specomega}{\specw}\mathbf{I} + c_{1}\mathrm{diag}(\mpass\mathbf{1}) + c_{2}\mpass.
\]
Given nodes $v,u\in\mathsf{V}$ and $m$ the number of layers, the following holds:
\begin{align}
    \|\nabla_u\h_v^{(m)} \| \leq (c_\sigma \specw)^{m} (\gso^m)_{vu}.
\end{align}

\end{theorem}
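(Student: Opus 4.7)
The plan is to prove the bound by induction on the depth $m$, using the multivariate chain rule together with the operator-norm assumptions on the weights, the message function, and the activation. This mirrors the familiar sensitivity analyses in \cite{topping2021understanding, di2023over}, with the new wrinkle that $\psi$ is allowed to be nonlinear and to depend on both endpoints, so the ``self'' contribution through $\nabla_1\psi$ must be tracked separately from the ``neighbor'' contribution through $\nabla_2\psi$.

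For the base case $m=0$, we have $\mathbf{h}_v^{(0)}=\mathbf{x}_v$, so $\nabla_u\mathbf{h}_v^{(0)}=\delta_{vu}\mathbf{I}$, and $(\gso^0)_{vu}=\delta_{vu}$, matching the right-hand side. For the inductive step, I differentiate the pre-activation
\[
\tilde{\mathbf{h}}_v^{(m-1)} \;=\; \OMEga^{(m)}\mathbf{h}_v^{(m-1)} + \W^{(m)}\sum_{j}\mpass_{vj}\,\mess^{(m)}\!\bigl(\mathbf{h}_v^{(m-1)},\mathbf{h}_j^{(m-1)}\bigr)
\]
with respect to $\mathbf{x}_u$. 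The chain rule produces three terms: an $\OMEga^{(m)}\nabla_u\mathbf{h}_v^{(m-1)}$ term, a ``self'' term $\W^{(m)}\sum_j \mpass_{vj}\,\nabla_1\mess^{(m)}\nabla_u\mathbf{h}_v^{(m-1)}$ (coming from the appearance of $\mathbf{h}_v^{(m-1)}$ inside $\psi$), and a ``neighbor'' term $\W^{(m)}\sum_j\mpass_{vj}\,\nabla_2\mess^{(m)}\nabla_u\mathbf{h}_j^{(m-1)}$. Applying $|\sigma'|\leq c_\sigma$ from the outer activation and the operator-norm bounds on $\OMEga^{(m)},\W^{(m)},\nabla_1\mess^{(m)},\nabla_2\mess^{(m)}$ through the triangle inequality and submultiplicativity yields
\[
\|\nabla_u\mathbf{h}_v^{(m)}\| \leq c_\sigma\Bigl(\specomega\|\nabla_u\mathbf{h}_v^{(m-1)}\| + \specw c_1 (\mpass\mathbf{1})_v\|\nabla_u\mathbf{h}_v^{(m-1)}\| + \specw c_2\sum_j\mpass_{vj}\|\nabla_u\mathbf{h}_j^{(m-1)}\|\Bigr).
\]

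Now I invoke the inductive hypothesis $\|\nabla_u\mathbf{h}_j^{(m-1)}\|\leq (c_\sigma\specw)^{m-1}(\gso^{m-1})_{ju}$ on every node-norm appearing on the right. Pulling out a common factor of $(c_\sigma\specw)^{m-1}\cdot c_\sigma\specw$, the bracket becomes
\[
\bigl(\tfrac{\specomega}{\specw}(\gso^{m-1})_{vu} + c_1(\mpass\mathbf{1})_v(\gso^{m-1})_{vu} + c_2\sum_j\mpass_{vj}(\gso^{m-1})_{ju}\bigr),
\]
which I recognize as the $(v,u)$-entry of $\bigl(\tfrac{\specomega}{\specw}\mathbf{I}+c_1\mathrm{diag}(\mpass\mathbf{1})+c_2\mpass\bigr)\gso^{m-1} = \gso\cdot\gso^{m-1}=\gso^m$. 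This gives the desired $(c_\sigma\specw)^m(\gso^m)_{vu}$, closing the induction.

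The main subtlety, and essentially the only non-routine step, is bookkeeping the three sources of sensitivity at each layer so that they assemble cleanly into the matrix $\gso$: the $\specomega/\specw\cdot\mathbf{I}$ piece comes from the skip-through-$\OMEga^{(m)}$ branch, the $c_1\mathrm{diag}(\mpass\mathbf{1})$ piece from the diagonal ``self'' dependence inside $\psi$ summed over neighbors, and the $c_2\mpass$ piece from the neighbor dependence; their sum is exactly the operator whose powers control propagation of the sensitivity over walks of length $m$ on $\gph$. Once this decomposition is identified, the induction and the use of the inductive hypothesis under the non-negative matrix $\gso$ are completely mechanical.
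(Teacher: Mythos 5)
Your proof is correct and follows essentially the same inductive argument as the paper: differentiate the pre-activation, separate the $\OMEga^{(t)}$, $\nabla_1\psi$, and $\nabla_2\psi$ contributions, bound each by operator norms, and recognize the resulting recursion as multiplication by $\gso$. The only cosmetic difference is that you start the induction at $m=0$ rather than $m=1$.
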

\begin{proof} %We start from the first-order bounds. 
Recall that the dimension of the features is taken to be $d$ for any layer $1\leq t\leq m$. We proceed by induction. If $m =1$ and we fix entries $1\leq \alpha,\beta\leq d$, then using the shorthand in \eqref{app:eq:auxialiary_mpnn_argument}, we obtain
\begin{align*}
    (\nabla_u\h_v^{(1)})_{\alpha\beta} &= \sigma'(\tilde{h}_v^{(0),\alpha})\Big(\Omega^{(1)}_{\alpha\beta}\delta_{vu} + \sum_{r}W^{(1)}_{\alpha r}\sum_{j}\mathsf{A}_{vj}\Big(\partial_{1,\beta}\mess^{(1),r}\delta_{vu} + \partial_{2,\beta}\mess^{(1),r}\delta_{ju}\Big) \Big) \\
    &= \Big(\mathrm{diag}(\sigma'(\tilde{\h}_v^{(0)}))\Big( \OMEga^{(1)}\delta_{vu} + \W^{(1)}\Big(\sum_{j}\mathsf{A}_{vj}\delta_{vu}\nabla_1\mess^{(1)} + \mathsf{A}_{vu}\nabla_2\mess^{(1)}\Big)\Big)\Big)_{\alpha\beta}.
\end{align*}
\noindent Therefore, we can bound the (spectral) norm of the Jacobian on the left hand side by
\begin{align*}
    \|\nabla_u\h_v^{(1)}\| &\leq \|\mathrm{diag}(\sigma'(\tilde{\h}_v^{(0)}))\|\Big(\| \OMEga^{(1)}\|\delta_{vu} + \|\W^{(1)}\|(c_1 \sum_{j}\mathsf{A}_{vj}\delta_{vu} + c_2\mathsf{A}_{vu})\Big) \\
    &\leq c_{\sigma}( \specomega\delta_{vu} +  \specw(c_1 \sum_{j}\mathsf{A}_{vj}\delta_{vu} + c_2\mathsf{A}_{vu})) = c_{\sigma}\specw\mathsf{S}_{vu},
\end{align*}
\noindent which proves the estimate on the Jacobian for the case of $m=1$. We now take the induction step, and follow the same argument above to write the node Jacobian after $m$ layers as 
\begin{align*}
    (\nabla_u\h_v^{(m)})_{\alpha\beta} &= \sigma'(\tilde{h}_v^{(m-1),\alpha})\Big(\sum_{r}\Omega^{(m)}_{\alpha r}(\nabla_u\h_v^{(m-1)})_{r\beta}\Big) \\
&+ \sigma'(\tilde{h}_v^{(m-1),\alpha})\Big( W^{(m)}_{\alpha r}\sum_{j}\mathsf{A}_{vj}\sum_{p}\Big(\partial_{1,p}\mess^{(m),r}(\nabla_u\h_v^{(m-1)})_{p\beta}+ \partial_{2,p}\psi^{(m),r}(\nabla_u\h_j^{(m-1)})_{p\beta}\Big) \Big) \\
    &= \Big(\mathrm{diag}(\sigma'(\tilde{\h}_v^{(m-1)})) \OMEga^{(m)}\nabla_u\h_v^{(m-1)}\Big)_{\alpha\beta} \\ 
    &+ \Big(\mathrm{diag}(\sigma'(\tilde{\h}_v^{(m-1)}))\W^{(m)}\Big(\sum_{j}\mathsf{A}_{vj}\nabla_1\psi^{(m)}\nabla_u\h_v^{(m-1)} + \mathsf{A}_{vj}\nabla_2\psi^{(m)}\nabla_u\h_j^{(m-1)})\Big)\Big)_{\alpha\beta}.
\end{align*}
\noindent Therefore, we can use the induction step to bound the Jacobian as 
\begin{align*}
    \| \nabla_u\h_v^{(m)} \| 
    &\leq c_{\sigma}\specomega(c_{\sigma}\specw)^{m-1}(\gso^{m-1})_{vu} + (c_{\sigma}\specw)^m\Big( \sum_{j}\mathsf{A}_{vj}c_1 (\gso^{m-1})_{vu} + \mathsf{A}_{vj}c_2(\gso^{m-1})_{ju}\Big) \\&= (c_\sigma \specw)^{m}\Big(\Big( \frac{\specomega}{\specw}\mathbf{I} + c_1\mathrm{diag}(\mpass\mathbf{1}) + c_2\mpass\Big)(\gso^{m-1})\Big)_{vu} = (c_\sigma \specw)^{m}(\gso^m)_{vu},
\end{align*}
\noindent which completes the proof for the first-order bounds. 
\end{proof}

Before we move to the second-order estimates, we introduce some additional preliminary notations. Given nodes $i,v,u$, a matrix $\gso\in\R^{n\times n}$ -- which will always be chosen as per \eqref{eq:def_gso} -- and an integer $\ell$, we write
\begin{align}\label{app:eq:auxiliary_P}
    \mathsf{P}^{(\ell)}_{i(vu)} &:= (\gso^\ell)_{iv}(\mpass\gso^\ell)_{iu} + (\gso^\ell)_{iu}(\mpass\gso^\ell)_{iv} + 
     \sum_{j}(\gso^\ell)_{jv}\left(\mathrm{diag}(\mpass\mathbf{1}) + \mpass\right)_{ij}(\gso^\ell)_{ju}.   
\end{align}
\noindent In particular, we denote by $\mathsf{P}^{(\ell)}_{(vu)}\in\R^{n}$ the vector with entries $(\mathsf{P}^{(\ell)}_{(vu)})_i = \mathsf{P}^{(\ell)}_{i(vu)}$, for $1\leq i\leq n$. 

\begin{theorem}\label{app:thm:second_derivative}
Given $\MPNN$s as in \eqref{eq:mpnn_message_functions}, let $\sigma$ and $\mess^{(t)}$ be $\mathcal{C}^{2}$ functions and assume $\lvert \sigma^\prime\rvert,\lvert\sigma^{\prime\prime}\rvert \leq c_{\sigma}$, $\| \OMEga^{(t)}\| \leq \specomega$, $\| \W^{(t)}\| \leq \specw$, $\| \nabla_1\mess^{(t)}\| \leq c_{1}$, $\| \nabla_2\mess^{(t)}\| \leq c_{2}$, $\| \nabla^2\mess^{(t)}\| \leq c^{(2)}$.
\noindent Let $\gso\in\R^{n\times n}$ be %defined as %$\second^{(k)}_{vu}\in\R^{n}$ be defined as
\[
\gso := \frac{\specomega}{\specw}\mathbf{I} + c_{1}\mathrm{diag}(\mpass\mathbf{1}) + c_{2}\mpass.
\]
Given nodes $i,v,u\in\mathsf{V}$, if $\mathsf{P}^{(\ell)}_{(vu)}\in\R^{n}$ is as in \eqref{app:eq:auxiliary_P} and $m$ is the number of layers, then we derive
\begin{align}
    \|\nabla^2_{uv}\h_i^{(m)}\| &\leq \sum_{k=0}^{m-1}\sum_{j \in \mathsf{V}}(c_{\sigma}\specw)^{2m-k-1}\,\specw(\gso^{m-k})_{jv}(\gso^k)_{ij}(\gso^{m-k})_{ju} \notag \\ &+ c^{(2)}\sum_{\ell = 0}^{m-1}(c_{\sigma}\specw)^{m+\ell}(\gso^{m-1-\ell}\mathsf{P}^{(\ell)}_{(vu)})_{i}.
\end{align}

\end{theorem}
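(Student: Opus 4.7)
The plan is to proceed by induction on the depth $m$, exactly mirroring the structure of the proof of Theorem \ref{app:thm:first_derivative} but applying the chain rule a second time and carefully tracking the extra contributions that arise when second-order derivatives hit the nonlinear activation $\sigma$ and the message function $\psi$. Writing the layer update compactly as $\h_i^{(m)} = \sigma(\tilde{\h}_i^{(m-1)})$, where $\tilde{\h}_i^{(m-1)}$ is the pre-activation, the chain and product rules give the decomposition
\[
\nabla^2_{uv}\h_i^{(m)} = \mathrm{diag}(\sigma''(\tilde{\h}_i^{(m-1)}))\bigl(\nabla_v\tilde{\h}_i^{(m-1)} \otimes \nabla_u\tilde{\h}_i^{(m-1)}\bigr) + \mathrm{diag}(\sigma'(\tilde{\h}_i^{(m-1)}))\,\nabla^2_{uv}\tilde{\h}_i^{(m-1)},
\]
so taking operator norms produces a $\sigma''$-piece bounded by $c_\sigma\|\nabla_v\tilde{\h}_i^{(m-1)}\|\|\nabla_u\tilde{\h}_i^{(m-1)}\|$ (to which I apply Theorem \ref{app:thm:first_derivative}) and a $\sigma'$-piece bounded by $c_\sigma\|\nabla^2_{uv}\tilde{\h}_i^{(m-1)}\|$ (to which I apply the inductive hypothesis after expanding the pre-activation).

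For the base case $m=1$, direct computation on $\tilde{\h}_i^{(0)} = \OMEga\x_i + \W\sum_j\mathsf{A}_{ij}\psi(\x_i,\x_j)$ produces a $\sigma''$ term that gives the $k=0$ summand of the first sum (using $(\gso^0)_{ij}=\delta_{ij}$), while the $\sigma'\nabla^2\psi$ term, after summing the four cross-derivatives $\partial_1^2,\partial_1\partial_2,\partial_2\partial_1,\partial_2^2$ over neighbors with weights $\mathsf{A}_{ij}$, yields exactly $c^{(2)}c_\sigma\specw\,\mathsf{P}^{(0)}_{i(vu)}$. For the inductive step, I bound the $\sigma''$-piece at layer $m$ using Theorem \ref{app:thm:first_derivative} applied to the pre-activation Jacobians: one propagation of $\gso$ on top of the $\gso^{m-1}$ bound yields $\|\nabla_v\tilde{\h}_i^{(m-1)}\|\leq \specw(c_\sigma\specw)^{m-1}(\gso^m)_{iv}$ (and similarly for $u$), which upon multiplication by $c_\sigma$ produces the new $k=0$ summand $(c_\sigma\specw)^{2m-1}\specw(\gso^m)_{iv}(\gso^m)_{iu}$ of the target bound. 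For the $\sigma'\nabla^2_{uv}\tilde{\h}_i^{(m-1)}$ piece, I expand the pre-activation and split into (i) the $\OMEga$-term contributing $\specomega\|\nabla^2_{uv}\h_i^{(m-1)}\|$, (ii) first-derivatives-of-$\psi$ contributions contributing $\specw\sum_j\mathsf{A}_{ij}(c_1\|\nabla^2_{uv}\h_i^{(m-1)}\|+c_2\|\nabla^2_{uv}\h_j^{(m-1)}\|)$, and (iii) second-derivatives-of-$\psi$ contributions bounded by $c^{(2)}\specw\sum_j\mathsf{A}_{ij}$ times the four products of Jacobians $\nabla_u\h_a^{(m-1)}\otimes\nabla_v\h_b^{(m-1)}$ for $a,b\in\{i,j\}$.

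Pieces (i) and (ii) recombine into a single factor $\specw\sum_\ell\gso_{i\ell}\|\nabla^2_{uv}\h_\ell^{(m-1)}\|$, by the very definition of $\gso$ in \eqref{eq:def_gso}; plugging the inductive hypothesis into this factor and pulling $\gso_{i\ell}$ inside each inductive summand promotes $\gso^k \mapsto \gso^{k+1}$ and $\gso^{m-2-\ell}\mapsto\gso^{m-1-\ell}$, which after reindexing $k\mapsto k+1$ and $\ell\mapsto\ell+1$ produces exactly the $k=1,\ldots,m-1$ terms of the first sum and the $\ell=1,\ldots,m-1$ terms of the second sum in the bound for layer $m$. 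Piece (iii), after applying Theorem \ref{app:thm:first_derivative} to bound each Jacobian by $(c_\sigma\specw)^{m-1}(\gso^{m-1})$ and summing the four terms weighted by $\mathsf{A}_{ij}$, collapses into precisely the combination defining $\mathsf{P}^{(m-1)}_{i(vu)}$ in \eqref{app:eq:auxiliary_P}; this supplies the new $\ell=m-1$ term of the second sum. The main obstacle is this last combinatorial matching: one must verify that the off-diagonal $(\gso^\ell)_{iv}(\mpass\gso^\ell)_{iu}+(\gso^\ell)_{iu}(\mpass\gso^\ell)_{iv}$ part of $\mathsf{P}^{(\ell)}$ comes from the $\partial_1\partial_2$ and $\partial_2\partial_1$ cross-terms while the $\mathrm{diag}(\mpass\mathbf{1})+\mpass$ part comes from $\partial_1^2$ and $\partial_2^2$, and that the index shifts $k\mapsto k+1,\ell\mapsto\ell+1$ line up with the new $k=0$ and $\ell=m-1$ contributions to produce exactly the stated bound at depth $m$.
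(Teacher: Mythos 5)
Your proposal follows essentially the same induction as the paper's proof (Appendix B), with the same three-way decomposition at each step --- the $\sigma''$-piece (the paper's $\mathbf{R}$), the $\sigma'$-piece reapplied to second derivatives of features (the paper's $\mathbf{Z}$), and the $\sigma'$-piece hitting the Hessian of $\psi$ (the paper's $\mathbf{Q}_{ab}$). The base case and the identifications of which cross-derivatives of $\psi$ produce which parts of $\mathsf{P}^{(\ell)}$ all line up with the paper.

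There is one bookkeeping slip in the inductive step worth correcting: you state that pulling $\gso$ inside the inductive hypothesis should be followed by a reindexing $\ell\mapsto\ell+1$, giving the $\ell=1,\ldots,m-1$ terms of the second sum. That is not right. When the extra factor $c_\sigma\specw\,\gso$ is absorbed, the summand $(c_\sigma\specw)^{(m-1)+\ell}(\gso^{m-2-\ell}\mathsf{P}^{(\ell)})_i$ becomes $(c_\sigma\specw)^{m+\ell}(\gso^{m-1-\ell}\mathsf{P}^{(\ell)})_i$: the power of $\gso$ and of $c_\sigma\specw$ change, but the index $\ell$ on $\mathsf{P}^{(\ell)}$ does not, so the inductive contributions remain indexed by $\ell=0,\ldots,m-2$ and are already in the required form. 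The $\mathbf{Q}_{ab}$ (Hessian-of-$\psi$) term then supplies exactly the missing $\ell=m-1$ summand, as you correctly say. Following your stated reindexing $\ell\mapsto\ell+1$ literally would both double-count $\ell=m-1$ and lose $\ell=0$. (The analogous shift $k\mapsto k+1$ for the first sum is correct, because there the extra $\gso$ genuinely increases the middle power $(\gso^k)_{ij}\mapsto(\gso^{k+1})_{ij}$.) Aside from this slip, the argument is sound and matches the paper's.
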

\begin{proof}
%Next, we derive the estimates for the second-order derivatives. 
First, we note that $\nabla^2_{uv}\h_i^{(m)}$ is a matrix of dimension $\R^{d\times (d\times d)}$. We then use the following ordering for indexing the columns -- which is consistent with a typical way of labelling columns of the Kronecker product of matrices, as detailed below (note that indices here start from $1$):
\begin{equation}\label{app:eq:hessian_index}
    \frac{\partial^2 h_i^{(m),\alpha}}{\partial x_v^\beta \partial x_u^\gamma} := \Big(\nabla^2_{uv}\h_i^{(m)}\Big)_{\alpha, d(\beta - 1) + \gamma}.
\end{equation}
\noindent As above, we proceed by induction and start from the case $m=1$:
\begin{align*}
    \Big(\nabla^2_{uv}\h_i^{(1)}\Big)_{\alpha, d(\beta - 1) + \gamma} &= \sigma''(\tilde{h}_i^{(0),\alpha})\Big(\Omega^{(1)}_{\alpha\gamma}\delta_{iv} + \sum_{r}W^{(1)}_{\alpha r}\sum_{j}\mathsf{A}_{ij}(\delta_{iv}\partial_{1,\gamma}\psi^{(1),r} + \delta_{jv}\partial_{2,\gamma}\psi^{(1),r})\Big) \\
    &\times \Big(\Omega^{(1)}_{\alpha\beta}\delta_{iu} + \sum_{r}W^{(1)}_{\alpha r}\sum_{j}\mathsf{A}_{ij}(\delta_{iu}\partial_{1,\beta}\psi^{(1),r} + \delta_{ju}\partial_{2,\beta}\psi^{(1),r})\Big) \\
    &+ \sigma'(\tilde{h}_i^{(0),\alpha})\sum_{r}W^{(1)}_{\alpha r}\Big(\sum_{j}\mathsf{A}_{ij}\delta_{iu}(\partial_{1,\gamma}\partial_{1,\beta}\psi^{(1),r}\delta_{iv} + \partial_{2,\gamma}\partial_{1,\beta}\psi^{(1),r}\delta_{jv})\Big) \\
    &+ \sigma'(\tilde{h}_i^{(0),\alpha})\sum_{r}W^{(1)}_{\alpha r}\Big(\mathsf{A}_{iu}(\partial_{1,\gamma}\partial_{2,\beta}\psi^{(1),r}\delta_{iv} + \partial_{2,\gamma}\partial_{2,\beta}\psi^{(1),r}\delta_{uv})\Big) \\
    &:= (Q_1)_{\alpha,\beta,\gamma} + (Q_2)_{\alpha,\beta,\gamma} + (Q_3)_{\alpha,\beta,\gamma},
\end{align*}
\noindent where $Q_1$ is the term containing second derivatives of $\psi$ while $Q_2,Q_3$ are the remaining expressions including second-order derivatives of the message functions $\psi$.
Using the same strategy as for the first-order estimates, we can rewrite the first term $Q_1$ as
\begin{align*}
(Q_1)_{\alpha,\beta,\gamma} &=  \Big(\mathrm{diag}(\sigma''(\tilde{\h}_v^{(0)}))\Big( \OMEga^{(1)}\delta_{iv} + \W^{(1)}(\sum_{j}\mathsf{A}_{ij}\delta_{iv}\nabla_1\mess^{(1)} + \mathsf{A}_{iv}\nabla_2\mess^{(1)})\Big)\Big)_{\alpha\gamma} \\
&\times \Big( \OMEga^{(1)}\delta_{iu} + \W^{(1)}(\sum_{j}\mathsf{A}_{ij}\delta_{iu}\nabla_1\mess^{(1)} + \mathsf{A}_{iu}\nabla_2\mess^{(1)})\Big)_{\alpha\beta}
\end{align*}
\noindent We now observe that given two matrices $\mathbf{B},\mathbf{C}\in\R^{d\times d}$ and $1\leq \alpha,\alpha',\beta,\gamma\leq d$, the entries of the Kronecker product $\mathbf{B}\otimes \mathbf{C}$ can be indexed as
\begin{equation*}
    (\mathbf{B}\otimes \mathbf{C})_{d(\alpha -1)+\alpha',d(\beta-1)+\gamma} = B_{\alpha \beta}C_{\alpha'\gamma}.
\end{equation*}
\noindent We now introduce the $d\times (d\times d)$ sub-matrix of $\mathbf{B}\otimes \mathbf{C}$ defined by
\begin{equation}\label{app:eq:sub_matrix_convention}
(\mathbf{B}\otimes \mathbf{C})'_{\alpha,d(\beta-1)+\gamma} = B_{\alpha \beta}C_{\alpha\gamma}.
\end{equation}
\noindent Therefore, we can rewrite $(Q_1)_{\alpha,\beta,\gamma}$ as the entry $(\alpha, d(\beta-1) + \gamma)$ of the $d\times (d\times d)$ sub-matrix
\begin{equation}\label{app:eq:Q_1:matricial}
(\mathbf{Q}_1)_{\alpha,d(\beta-1)+\gamma} = (\mathbf{B}\otimes \mathbf{C})'_{\alpha,d(\beta-1)+\gamma},
\end{equation}
\noindent where 
\begin{align*}
    \mathbf{B}&:= \mathrm{diag}(\sigma''(\tilde{\h}_v^{(0)}))\Big( \OMEga^{(1)}\delta_{iv} + \W^{(1)}(\sum_{j}\mathsf{A}_{ij}\delta_{iv}\nabla_1\mess^{(1)} + \mathsf{A}_{iv}\nabla_2\mess^{(1)})\Big),
    \\
    \mathbf{C}&:=  \OMEga^{(1)}\delta_{iu} + \W^{(1)}(\sum_{j}\mathsf{A}_{ij}\delta_{iu}\nabla_1\mess^{(1)} + \mathsf{A}_{iu}\nabla_2\mess^{(1)}).
\end{align*}
\noindent Next, we proceed to write $(Q_2)_{\alpha,\beta,\gamma}$ in matricial form. Before we do that, we observe that the Hessian of the message functions $(\x_i,\x_j)\mapsto\mess^{(t)}(\x_i,\x_j)$ takes the form
\[
\nabla^2\mess^{(t)} = \begin{pmatrix}
\nabla^2_{11}\mess^{(t)} & \nabla^2_{12}\mess^{(t)} \\ \nabla^2_{21}\mess^{(t)} & \nabla^2_{22}\mess^{(t)} \end{pmatrix},
\]
where $\nabla^2_{ab}\mess^{(t)}\in\R^{d\times (d\times d)}$ and is indexed as follows 
\[
(\nabla^2_{ab}\mess^{(t)})_{r,d(\beta-1) + \gamma} = \partial_{a,\beta}\partial_{b,\gamma}\mess^{(t),r},
\]
where $a,b\in\{1,2\}$.
Using these notations, we note that
\[
\sum_{r}W^{(1)}_{\alpha r}\partial_{1,\gamma}\partial_{1,\beta}\psi^{(1),r} = \left(\W^{(1)}\nabla_{11}^2\psi^{(1)}\right)_{\alpha,d(\beta-1) + \gamma}.
\]
Therefore, we derive
\begin{align}
    (Q_2)_{\alpha,\beta,\gamma} = (\mathbf{Q}_2)_{\alpha,d(\beta-1)+\gamma} &= \sum_{j}\mathsf{A}_{ij}\delta_{iu}\delta_{iv}(\mathrm{diag}(\sigma'(\tilde{\h}_i^{(0)}))\W^{(1)}\nabla_{11}^2\psi^{(1)})_{\alpha,d(\beta-1) + \gamma} \notag \\
    &+ \mathsf{A}_{iv}\delta_{iu}(\mathrm{diag}(\sigma'(\tilde{\h}_i^{(0)}))\W^{(1)}\nabla_{12}^2\psi^{(1)})_{\alpha,d(\beta-1) + \gamma}.     \label{app:eq:Q_2:matricial}
\end{align} 
A similar argument works for $Q_3$:
\begin{align}
    (Q_3)_{\alpha,\beta,\gamma} = (\mathbf{Q}_3)_{\alpha,d(\beta-1)+\gamma} &= \mathsf{A}_{iu}\delta_{iv}(\mathrm{diag}(\sigma'(\tilde{\h}_i^{(0)}))\W^{(1)}\nabla_{21}^2\psi^{(1)})_{\alpha,d(\beta-1) + \gamma} \notag \\
    &+ \mathsf{A}_{iu}\delta_{uv}(\mathrm{diag}(\sigma'(\tilde{\h}_i^{(0)}))\W^{(1)}\nabla_{22}^2\psi^{(1)})_{\alpha,d(\beta-1) + \gamma}.     \label{app:eq:Q_3:matricial}
    \end{align}
\noindent Therefore, we can combine \eqref{app:eq:Q_1:matricial}, \eqref{app:eq:Q_2:matricial}, and \eqref{app:eq:Q_3:matricial} to write
\begin{align*}
    \|\nabla^2_{uv}\h_i^{(1)}\| &\leq  \|\mathbf{Q}_1 \| + \|\mathbf{Q}_2 \| + \|\mathbf{Q}_3  \| \\
    &\leq c_{\sigma}\left(\specomega\delta_{iv} + \specw(c_1\mathrm{diag}(\mpass\mathbf{1})_{i}\delta_{iv} + c_2\mathsf{A}_{iv}\right))\left(\specomega\delta_{iu} + \specw(c_1\mathrm{diag}(\mpass\mathbf{1})_{i}\delta_{iu} + c_2\mathsf{A}_{iu} \right) \\
    &+ c_{\sigma}\specw c^{(2)}\left(\mathrm{diag}(\mpass\mathbf{1})_{i}\delta_{iv}\delta_{iu} + \mathsf{A}_{iv}\delta_{iu} \right) \\
    &+ c_{\sigma}\specw c^{(2)}\left(\mathsf{A}_{iu}\delta_{iv} + \mathsf{A}_{iu}\delta_{uv}\right).
\end{align*}
Finally, we can rely on \eqref{app:eq:auxiliary_P} to re-arrange the equation above as
\begin{align*}
\|\nabla^2_{uv}\h_i^{(1)}\| &\leq  (c_\sigma \specw)(\specw (\gso)_{iv}(\gso)_{iu}) + \specw c^{(2)}c_{\sigma}( \delta_{iv}\mathsf{A}_{iu} + \delta_{iu}\mathsf{A}_{iv} +\sum_{j}\delta_{jv}\left(\mathrm{diag}(\mpass\mathbf{1}) + \mpass\right)_{ij}\delta_{ju}) \\
    &= (c_\sigma \specw)(\specw (\gso)_{iv}(\gso)_{iu}) + c^{(2)}c_{\sigma}\specw\mathsf{P}^{(0)}_{i(vu)},
\end{align*}
which proves the bound for the second-order derivatives in the case $m=1$. 

We now assume that the claim holds for all layers $t\leq m-1$, and compute the second order derivative after $m$ layers:
\begin{align*}
\Big(\nabla^2_{uv}\h_i^{(m)}\Big)_{\alpha, d(\beta - 1) + \gamma} &= \sigma''(\tilde{h}_i^{(m-1),\alpha})\\
&\times \Big(\sum_{r}\Omega^{(m)}_{\alpha r}(\nabla_{u}\h_i^{(m-1)})_{r\beta} \\
&+ W^{(m)}_{\alpha r}\sum_{j}\mathsf{A}_{ij}(\sum_{p}\partial_{1,p}\mess^{(m),r}(\nabla_{u}\h_i^{(m-1)})_{p\beta}+ \partial_{2,p}\mess^{(m),r}(\nabla_{u}\h_j^{(m-1)})_{p\beta})\Big) \\
&\times \Big(\sum_{r}\Omega^{(m)}_{\alpha r}(\nabla_{v}\h_i^{(m-1)})_{r\gamma} \\
&+ W^{(m)}_{\alpha r}\sum_{j}\mathsf{A}_{ij}(\sum_{p}\partial_{1,p}\mess^{(m),r}(\nabla_{v}\h_i^{(m-1)})_{p\gamma}+ \partial_{2,p}\mess^{(m),r}(\nabla_{v}\h_j^{(m-1)})_{p\gamma})\Big) \\
    &+ \sigma'(\tilde{h}_i^{(m-1),\alpha})\sum_{r}W^{(m)}_{\alpha r}\sum_{j}\mathsf{A}_{ij}\sum_{p,q}\partial_{1,p}\partial_{1,q}\psi^{(m),r}(\nabla_u\h_i^{(m-1)})_{p\beta}(\nabla_v\h_i^{(m-1)})_{q\gamma}\\
    &+ \sigma'(\tilde{h}_i^{(m-1),\alpha})\sum_{r}W^{(m)}_{\alpha r}\sum_{j}\mathsf{A}_{ij}\sum_{p,q}\partial_{1,p}\partial_{2,q}\psi^{(m),r}(\nabla_u\h_i^{(m-1)})_{p\beta}(\nabla_v\h_j^{(m-1)})_{q\gamma}\\
    &+ \sigma'(\tilde{h}_i^{(m-1),\alpha})\sum_{r}W^{(m)}_{\alpha r}\sum_{j}\mathsf{A}_{ij}\sum_{p,q}\partial_{1,p}\partial_{2,q}\psi^{(m),r}(\nabla_u\h_j^{(m-1)})_{q\beta}(\nabla_v\h_i^{(m-1)})_{p\gamma} \\
    &+ \sigma'(\tilde{h}_i^{(m-1),\alpha})\sum_{r}W^{(m)}_{\alpha r}\sum_{j}\mathsf{A}_{ij}\sum_{p,q}\partial_{2,p}\partial_{2,q}\psi^{(m),r}(\nabla_u\h_j^{(m-1)})_{p\beta}(\nabla_v\h_j^{(m-1)})_{q\gamma} \\
    &+ \sigma'(\tilde{h}_i^{(m-1),\alpha})\sum_{r}\Omega^{(m)}_{\alpha r}(\nabla^2_{uv}\h_i^{(m-1)})_{r,d(\beta-1)+\gamma} \\
    &+ \sigma'(\tilde{h}_i^{(m-1),\alpha})\sum_{r}W^{(m)}_{\alpha r}\sum_{j}\mathsf{A}_{ij}(\sum_{p}\partial_{1,p}\mess^{(m),r}(\nabla^2_{uv}\h_i^{(m-1)})_{p,d(\beta-1)+\gamma} \\
    &+\sigma'(\tilde{h}_i^{(m-1),\alpha})\sum_{r}W^{(m)}_{\alpha r}\sum_{j}\mathsf{A}_{ij}(\sum_{p}\partial_{2,p}\mess^{(m),r}(\nabla^2_{uv}\h_j^{(m-1)})_{p,d(\beta-1)+\gamma} \\
    &:= R_{\alpha,\beta,\gamma} + \sum_{a,b\in\{1,2\}}(Q_{ab})_{\alpha,\beta,\gamma} + Z_{\alpha,\beta,\gamma},
\end{align*}
\noindent where $R$ is the term containing second derivatives of the non-linear map $\sigma$, $Q_{ab}$ is indexed according to the second derivatives of the message-functions $\mess$, and finally $Z$ is the term containing second-order derivatives of the features.
\noindent For the term $R_{\alpha,\beta,\gamma}$ we can argue as in the $m=1$ case and use the sub-matrix notation in \eqref{app:eq:sub_matrix_convention} to rewrite it as the entry $(\alpha, d(\beta-1) + \gamma)$ of the $d\times (d\times d)$ sub-matrix
\begin{equation}\label{app:eq:R:matricial}
\mathbf{R}_{\alpha,d(\beta-1)+\gamma} = (\mathbf{B}\otimes \mathbf{C})'_{\alpha,d(\beta-1)+\gamma},
\end{equation}
\noindent where 
\begin{align*}
    \mathbf{B}&:= \mathrm{diag}(\sigma''(\tilde{\h}_i^{(m-1)}))( \OMEga^{(m)}\nabla_u\h_i^{(m-1)} + \W^{(m)}(\sum_{j}\mathsf{A}_{ij}\nabla_1\mess^{(m)}\nabla_u\h_i^{(m-1)} + \nabla_2\mess^{(m)}\nabla_u\h_j^{(m-1)})),
    \\
    \mathbf{C}&:=  \OMEga^{(m)}\nabla_v\h_i^{(m-1)} + \W^{(m)}(\sum_{j}\mathsf{A}_{ij}\nabla_1\mess^{(m)}\nabla_v\h_i^{(m-1)} + \nabla_2\mess^{(m)}\nabla_v\h_j^{(m-1)})
\end{align*}
\noindent Next we consider the terms $(Q_{ab})_{\alpha,\beta,\gamma}$. Without loss of generality, we focus on $(Q_{11})_{\alpha,\beta,\gamma}$ and use again the same argument in the $m=1$ case, to rewrite it as $(Q_{11})_{\alpha,\beta,\gamma} = (\mathbf{Q}_{11})_{\alpha,d(\beta-1)+\gamma}$ where
\begin{equation}\label{app:eq:Q_ab:matricial}
\mathbf{Q}_{11} = \mathrm{diag}(\sigma'(\tilde{\h}_i^{(m-1)}))\sum_{j}\mathsf{A}_{ij}(\W^{(m)}\nabla^2_{11}\psi^{(m)}\nabla_u\h_i^{(m-1)}\otimes\nabla_v\h_i^{(m-1)}), 
\end{equation}
\noindent where again we are indexing the matrix $\nabla^2_{11}\psi^{(m)}$ by 
\[
(\nabla^2_{11}\psi^{(m)})_{r,p(d-1) + q} = \partial_{1,p}\partial_{1,q}\psi^{(m),r}.
\]
\noindent The other $Q$-terms can be estimated similarly. Finally, we rewrite $Z_{\alpha,\beta,\gamma} = (\mathbf{Z})_{\alpha,d(\beta-1)+\gamma}$, where
\begin{equation}\label{app:eq:matricial_Z}
\mathbf{Z} = \mathrm{diag}(\sigma'(\tilde{\h}_i^{(m-1)}))\Big(\OMEga^{(m)}\nabla^2_{uv}\h_i^{(m-1)} + \W^{(m)}\sum_{j}\mathsf{A}_{ij}(\nabla_1\mess^{(m)}\nabla^2_{uv}\h_i^{(m-1)} + \nabla_2\mess^{(m)}\nabla^2_{uv}\h_j^{(m-1)})  \Big)
\end{equation}
\noindent {\em Therefore, we have rewritten the second-derivatives of the features in matricial form as}
\[
\nabla^2_{uv}\h_i^{(m)} = \mathbf{R} + \sum_{a,b\in\{1,2\}}\mathbf{Q}_{ab} + \mathbf{Z}.
\]
To complete the proof, we now simply need to estimate the three terms and show they fit the recursion claimed for $m$. For the case of $\mathbf{R}$ in \eqref{app:eq:R:matricial}, we find
\begin{align*}
    \| \mathbf{R}\| &\leq c_{\sigma}(\specomega \| \nabla_u\h_i^{(m-1)}\| + \specw(c_1 \mathrm{diag}(\mpass \mathbf{1})_{i}\|\nabla_u\h_i^{(m-1)}\| + c_2\sum_{j}\mathsf{A}_{ij} \| \nabla_u\h_j^{(m-1)}\| )) \\
    &\times (\specomega \| \nabla_v\h_i^{(m-1)}\| + \specw(c_1 \mathrm{diag}(\mpass \mathbf{1})_{i}\| \nabla_v\h_i^{(m-1)}\| + c_2\sum_{j}\mathsf{A}_{ij} \| \nabla_v\h_j^{(m-1)}\|).
\end{align*}
If we write $D\h^{(m-1)}\in\R^{n\times n}$ as the matrix with entries $(D\h^{(m-1)})_{ij} = \| \nabla_j\h_i^{(m-1)}\|$, then we obtain
\[
\|\mathbf{R}\| \leq c_\sigma\specw (\specw\gso D\h^{(m-1)})_{iv}(\gso D\h^{(m-1)})_{iu}. 
\]

We can then plug the first-order estimates derived in Theorem \ref{app:thm:first_derivative} and obtain 
\begin{align}
\| \mathbf{R}\| &\leq c_\sigma\specw (\specw\gso (c_\sigma\specw)^{m-1}\gso^{m-1})_{iv}(\gso (c_\sigma\specw)^{m-1}\gso^{m-1})_{iu} = (c_\sigma\specw)^{2m-1}(\specw (\gso^{m})_{iv}(\gso^m)_{iu}). \label{app:eq:R:final} 
\end{align}

Next, we move onto the $Q$-terms, and use again the first-order estimates in Theorem \ref{app:thm:first_derivative} -- and the fact that we can bound the norm of $\nabla^2_{ab}\mess^{(m)}$ by $c^{(2)}$ -- to derive
\begin{align}
\| \sum_{a,b\in\{1,2\}} \mathbf{Q}_{ab} \|&\leq c^{(2)}(c_\sigma\specw)^{2m-1}( \mathrm{diag}(\mpass\mathbf{1})_i(\gso^{m-1})_{iv}(\gso^{m-1})_{iu} + \sum_{j}\mathsf{A}_{ij}(\gso^{m-1})_{ju}(\gso^{m-1})_{jv}) \notag \\ 
&+ c^{(2)}(c_\sigma\specw)^{2m-1}((\gso^{m-1})_{iv}(\mpass\gso^{m-1})_{iu} + (\gso^{m-1})_{iu}(\mpass\gso^{m-1})_{iv}) \notag 
\\
&= c^{(2)}(c_\sigma\specw)^{2m-1}\mathsf{P}_{i(vu)}^{(m-1)}.
\label{app:eq:q:final}
\end{align}
Finally, if we let $D^2\h_{vu} \in\R^{n}$ be the vector with entries $(D^2\h_{vu})_i =  \|\nabla^2_{uv}\h_i^{(m-1)}\|$, then

\begin{align}
    \| \mathbf{Z}\| &\leq c_\sigma\Big(\specomega\|\nabla^2_{uv}\h_i^{(m-1)}\| + \specw\Big(c_1 \mathrm{diag}(\mpass\mathbf{1})_i \| \nabla^2_{uv}\h_i^{(m-1)}\| + c_2 \sum_{j}\mathsf{A}_{ij}\|\nabla^2_{uv}\h_j^{(m-1)}\|\Big)\Big) \\ \notag
    &= c_\sigma\specw (\gso D^2\h_{vu})_i.  \notag
\end{align}
\noindent Therefore, we can use the induction to derive
\begin{align}
    \| \mathbf{Z}\|&\leq c_\sigma\specw \sum_{s}\gso_{is}\sum_{k=0}^{m-2}\sum_{j \in \mathsf{V}}(c_{\sigma}\specw)^{2m-2-k-1}\,\specw(\gso^{m-1-k})_{jv}(\gso^k)_{sj}(\gso^{m-1-k})_{ju} \notag \\ &+c_\sigma\specw \sum_{s}\gso_{is}( c^{(2)}\sum_{\ell = 0}^{m-2}(c_{\sigma}\specw)^{m-1+\ell}(\gso^{m-2-\ell}\mathsf{P}^{(\ell)}_{(vu)})_{s}) \notag \\
    &= \sum_{k=0}^{m-2}\sum_{j \in \mathsf{V}}(c_{\sigma}\specw)^{2m-k-2}\,\specw(\gso^{m-1-k})_{jv}(\gso^{k+1})_{ij}(\gso^{m-1-k})_{ju} \notag \\
    &+c^{(2)}(\sum_{\ell = 0}^{m-2}(c_{\sigma}\specw)^{m+\ell}(\gso^{m-1-\ell}\mathsf{P}^{(\ell)}_{(vu)})_{i}) \notag \\
    &= \sum_{k=1}^{m-1}\sum_{j \in \mathsf{V}}(c_{\sigma}\specw)^{2m-k-1}\,\specw(\gso^{m-k})_{jv}(\gso^{k})_{ij}(\gso^{m-k})_{ju}
    +c^{(2)}(\sum_{\ell = 0}^{m-2}(c_{\sigma}\specw)^{m+\ell}(\gso^{m-1-\ell}\mathsf{P}^{(\ell)}_{(vu)})_{i}) \notag 
\end{align}

By \eqref{app:eq:R:final}, we derive that the $\mathbf{R}$-term corresponds to the $k=0$ entry of the first sum, while \eqref{app:eq:q:final} corresponds to the case $\ell = m-1$ of the second sum, which completes the induction and hence our proof.
\end{proof}

\subsection{Proof of Theorem \ref{thm:main_hessian_bound}}\label{app:subsec:proof:theorem:3}

We can now use the previous characterization to derive estimates on the Hessian of the graph-level function computed by $\MPNN$s. We restate Theorem \ref{thm:main_hessian_bound} here for convenience.

\hessian*

\begin{proof}
First, we recall that according to Definition \ref{def:mixing}, we are interested in bounding the quantity
\[
\mixing_{\final}(v,u) = \max_{\mathbf{X}} \max_{1\leq \beta,\gamma \leq d}\left\vert \frac{\partial^2 \final(\mathbf{X})}{\partial x_u^\beta \partial x_v^\gamma }\right\vert.
\]
Let us first consider the choice $\mathsf{READ} = \mathsf{SUM}$, so that by \eqref{eq:final_function} we get
\[
\mixing_{\final}(v,u) \leq \left\vert \sum_{\alpha = 1}^{d}\theta_\alpha \sum_{i\in \mathsf{V}} \frac{\partial^2 
 h_i^{(m),\alpha}}{\partial x_u^\beta \partial x_v^\gamma } \right\vert
\]
\noindent As before, we index the columns of the Hessian of $\h_i$ as
$\tfrac{\partial^2 
 h_i^{(m),\alpha}}{\partial x_u^\beta \partial x_v^\gamma } = (\nabla^2_{uv}\h_i^{(m)})_{\alpha, d(\beta-1) + \gamma}$ and hence obtain
 \begin{equation}\label{app:eq:sum_readout}
 \mixing_{\final}(v,u) \leq \sum_{i\in\mathsf{V}}\| (\nabla^2_{uv}\h_i^{(m)})^\top \boldsymbol{\theta} \| \leq \sum_{i\in\mathsf{V}}\| \nabla^2_{uv}\h_i^{(m)}\|,
 \end{equation}
 since $\boldsymbol{\theta}$ has unit norm. {\em Note that the very same bound in \eqref{app:eq:sum_readout} also holds if we replaced the $\mathsf{SUM}$ readout with either the $\mathsf{MAX}$ or the $\mathsf{MEAN}$ readout}. We can then rely on Theorem \ref{app:thm:second_derivative} and find
 \begin{align*}
     \mixing_{\final}(v,u) &\leq \sum_{i\in\mathsf{V}}\sum_{k=0}^{m-1}\sum_{j \in \mathsf{V}}(c_{\sigma}\specw)^{2m-k-1}\,\specw(\gso^{m-k})_{jv}(\gso^k)_{ij}(\gso^{m-k})_{ju} \notag \\ &+ c^{(2)}\sum_{i\in\mathsf{V}}\sum_{\ell = 0}^{m-1}(c_{\sigma}\specw)^{m+\ell}(\gso^{m-1-\ell}\mathsf{P}^{(\ell)}_{(vu)})_{i} \\
     &=\sum_{k=0}^{m-1}(c_{\sigma}\specw)^{2m-k-1}\left(\specw(\gso^{m-k})^\top\mathrm{diag}(\mathbf{1}^\top\gso^k)\gso^{m-k}\right)_{vu} \\
     &+ c^{(2)}\sum_{\ell = 0}^{m-1}(c_{\sigma}\specw)^{m+\ell}(\mathbf{1}^\top\gso^{m-1-\ell})\mathsf{P}^{(\ell)}_{(vu)} \\
     &=\sum_{k=0}^{m-1}(c_{\sigma}\specw)^{2m-k-1}\left(\specw(\gso^{m-k})^\top\mathrm{diag}(\mathbf{1}^\top\gso^k)\gso^{m-k}\right)_{vu} \\
     &+ c^{(2)}\sum_{k= 0}^{m-1}(c_{\sigma}\specw)^{2m-k-1}(\mathbf{1}^\top\gso^{k})\mathsf{P}^{(m-k-1)}_{(vu)}.
 \end{align*}
 For the second term we can the simply use the formula in \eqref{app:eq:auxiliary_P} to rewrite it in matricial form as claimed -- recall that $\boldsymbol{\mathsf{Q}}_k$ is defined in \eqref{eq:def_Q_k}.
\end{proof}

\section{Proofs and additional details of Section \ref{sec:4}}\label{app:sec_c}

Throughout this Section, for simplicity, we assume $c_\sigma = \max\{|c_\sigma'|,|c_\sigma''| \}$ to be smaller or equal than one -- this is satisfied by the vast majority of commonly used non-linear activations, and extending the results below to arbitrary $c_\sigma$ is straightforward.

\subsection{Proof of Theorem \ref{thm:bounded_depth_case}}\label{app:sub:proof:theorem:4.2}

We begin by proving lower bounds on the operator norm of the weights, when the depth is the minimal one required to induce any non-zero mixing among nodes $v,u$. For convenience, we restate Theorem \ref{thm:bounded_depth_case} here as well.

\shallow*

\begin{proof} Without loss of generality, we assume that $r$ is even, so that by our assumptions, we can simply take $m= r/2$. 
According to Theorem \ref{thm:main_hessian_bound}, we know that the maximal mixing induced by an MPNN of depth $m$ over the features associated with nodes $v,u$ is bounded from above as
\[
\mixing_{\final}(v,u) \leq \sum_{k=0}^{m-1}\specw^{2m-k-1}\Big(\specw\gso^{m-k}\mathrm{diag}(\gso^k\mathbf{1})\gso^{m-k} + c^{(2)}\boldsymbol{\mathsf{Q}}_k \Big)_{vu}.
\]
where we have replaced $c_\sigma$ with one, as per our assumption. Since $m = r/2$, where $r$ is the distance among nodes $v,u$, then the only non-zero contribution for the first term is obtained for $k=0$ -- otherwise we would find a path of length $2(m-k)$ connecting $v$ and $u$ hence violating the assumptions -- and is equal to $\specw^{2m}(\gso^{2m})_{vu}$, and note that $2m = r$. Concerning the terms $\boldsymbol{\mathsf{Q}}_k$ instead, the longest-walk contribution for nodes $v,u$ is $2m-1$ (when $k = 0)$, meaning that $\boldsymbol{\mathsf{Q}}_k = 0$ for all $0 \leq k \leq m-1$ if $m = 2r$. Accordingly, we can reduce the bound above to:
\[
\mixing_{\final}(v,u)\leq \specw^{2m}\Big(\gso^{2m}\Big)_{vu} = \specw^{2m}\Big((c_2\mpass)^{2m}\Big)_{vu},
\]
where in the last equality we have again used that $2m = r$, so when expanding the power of $\gso$ only the highest-order term in the $\mpass$-variable gives non-zero contributions. If we replace now $2m = r$ and use the characterization of over-squashing in Definition \ref{def:osq}, then
\[
\widetilde{\osq}_{v,u}(m=\frac{r}{2},\specw) \geq (c_2\specw)^{-r}\frac{1}{(\mpass^r)_{vu}}.
\]
Therefore, if the MPNN generates mixing $\mixing_{\final}(v,u)$ among the features of $v$ and $u$, then \eqref{eq:osq_lower_bound_mixing} is satisfied, meaning that the operator norm of the weights must be larger than 
\[
\specw \geq \frac{1}{c_2}\Big( \frac{\mixing_{\final}(v,u)}{(\mpass^r)_{vu}}\Big)^{\frac{1}{r}}.
\]
The term $\mpass^r$ in general can be estimated sharply depending on the knowledge we have of the underlying graph. To get a universal -- albeit potentially looser bound -- it suffices to note that along each path connecting $v$ and $u$, the product of the entries of $\mpass$ can be bounded from above by $(d_{\mathrm{min}})^r$, which completes the proof.
\end{proof}

We highlight that if $\mpass = \mpass_{\mathrm{rw}} = \mathbf{D}^{-1}\mathbf{A}$ -- i.e. the aggregation over the neighbours consists of a mean-operation as for the GraphSAGE architecture -- then one can apply the very same proof above and derive

\begin{corollary}
    The same lower bound for $\specw$ in Theorem \ref{thm:bounded_depth_case}, holds when $\mpass = \mathbf{D}^{-1}\mathbf{A}$. 
\end{corollary}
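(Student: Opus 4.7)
The plan is to mirror the proof of Theorem \ref{thm:bounded_depth_case} verbatim up to the point where the specific form of $\mpass$ enters, and then verify that the bound on $(\mpass^r)_{vu}$ used to derive the final estimate is insensitive to the choice between $\mpass_{\mathrm{sym}} = \mathbf{D}^{-1/2}\mathbf{A}\mathbf{D}^{-1/2}$ and $\mpass_{\mathrm{rw}} = \mathbf{D}^{-1}\mathbf{A}$.

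Concretely, the first half of the proof of Theorem \ref{thm:bounded_depth_case} does not use any property of $\mpass_{\mathrm{sym}}$ beyond the fact that its sparsity pattern matches $\mathsf{E}$ (so that walks shorter than $r$ cannot connect $v$ and $u$). In particular, with $m = \lceil r/2 \rceil$, the only non-vanishing term in Theorem \ref{thm:main_hessian_bound} is the $k=0$ summand, which collapses to $\specw^{2m} (c_{2}\mpass)^{r}_{vu}$, and the $\boldsymbol{\mathsf{Q}}_k$ contributions vanish identically for walks of length at most $2m-1 < r$. This argument is purely combinatorial and applies equally when $\mpass = \mpass_{\mathrm{rw}}$. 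Consequently one obtains, independently of which normalization is chosen,
\begin{equation*}
\widetilde{\osq}_{v,u}\bigl(\lceil r/2 \rceil, \specw\bigr) \;\geq\; (c_{2}\specw)^{-r}\,\frac{1}{(\mpass^{r})_{vu}}.
\end{equation*}

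It remains to show that $(\mpass_{\mathrm{rw}}^{r})_{vu} \leq q/d_{\mathrm{min}}^{r}$, which is the bound used for $\mpass_{\mathrm{sym}}$ in the proof of Theorem \ref{thm:bounded_depth_case}. For any walk $v = v_0, v_1, \ldots, v_r = u$ of length $r$, the contribution to $(\mpass_{\mathrm{rw}}^{r})_{vu}$ is $\prod_{i=0}^{r-1} (d_{v_i})^{-1}$, a product of exactly $r$ factors each bounded above by $1/d_{\mathrm{min}}$. Summing over the $q$ such walks of length $r$ connecting $v$ and $u$, we get $(\mpass_{\mathrm{rw}}^{r})_{vu} \leq q/d_{\mathrm{min}}^{r}$, which matches the corresponding estimate for $\mpass_{\mathrm{sym}}$ (where one similarly pairs the $\sqrt{d_v}\sqrt{d_u}$ endpoint factors with the $d_{v_1}, \ldots, d_{v_{r-1}}$ interior factors to total $r$ factors, each at least $d_{\mathrm{min}}$).

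Combining the two inequalities and inverting \eqref{eq:osq_lower_bound_mixing} as in the end of the proof of Theorem \ref{thm:bounded_depth_case} yields the same lower bound
\begin{equation*}
\specw \;\geq\; \frac{d_{\mathrm{min}}}{c_{2}}\left(\frac{\mixing_{y_\gph}(v,u)}{q}\right)^{1/r},
\end{equation*}
proving the corollary. There is no real obstacle here: the key observation is simply that both normalizations distribute one factor of $d_{\mathrm{min}}$ per edge traversal, so the exponent $r$ and the prefactor $d_{\mathrm{min}}$ come out identical. The only minor subtlety is the asymmetry of $\mpass_{\mathrm{rw}}$, but since we are only using an entrywise upper bound (not a spectral one), this plays no role in the argument.
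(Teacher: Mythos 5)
There is a genuine gap in your argument, centered on the claim that the $k=0$ summand of \eqref{eq:main_thm_bound} ``collapses to $\specw^{2m}(c_2\mpass)^r_{vu}$'' and that this collapse is ``purely combinatorial''. The $k=0$ summand is $\specw^{2m}\bigl((\gso^m)^\top\gso^m\bigr)_{vu}$, and rewriting $(\gso^m)^\top\gso^m$ as $\gso^{2m}$ is an algebraic identity that requires $\gso$ to be symmetric. It holds when $\mpass=\mpass_{\mathrm{sym}}$ (which is why the paper's proof of Theorem \ref{thm:bounded_depth_case} can pass to $(\mpass^r)_{vu}$), but fails for $\mpass_{\mathrm{rw}}=\mathbf{D}^{-1}\mathbf{A}$, which is not symmetric. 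The quantity that actually appears with $\mpass=\mpass_{\mathrm{rw}}$ is $\bigl((\mpass_{\mathrm{rw}}^m)^\top\mpass_{\mathrm{rw}}^m\bigr)_{vu}$, which is not equal to $(\mpass_{\mathrm{rw}}^{2m})_{vu}$. Your closing remark that ``since we are only using an entrywise upper bound\dots the asymmetry plays no role'' does not cure this, because the reduction to a single power of $\mpass$ is a structural identity and not an entrywise inequality; and your per-walk estimate $(\mpass_{\mathrm{rw}}^r)_{vu}\leq q/d_{\mathrm{min}}^r$, while correct, bounds the wrong matrix entry.

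The gap is, however, easily repaired, and the conclusion stands. Write $\bigl((\mpass_{\mathrm{rw}}^m)^\top\mpass_{\mathrm{rw}}^m\bigr)_{vu}=\sum_j (\mpass_{\mathrm{rw}}^m)_{jv}(\mpass_{\mathrm{rw}}^m)_{ju}$. Each non-zero summand corresponds to a pair of length-$m$ walks $j\to v$ and $j\to u$; concatenating the reversed first with the second gives a length-$r$ walk $v\to u$, necessarily a shortest path, and there are $q$ of them. Along a walk $v=v_0,\dots,v_m=j,\dots,v_{2m}=u$ the accumulated degree factors are $d_{v_1}^{-1}\cdots d_{v_{m-1}}^{-1}\,d_j^{-2}\,d_{v_{m+1}}^{-1}\cdots d_{v_{2m-1}}^{-1}$, a product of exactly $r$ reciprocal degrees (with the midpoint counted twice and the endpoints omitted), each at most $d_{\mathrm{min}}^{-1}$. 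Summing over the $q$ walks again gives the bound $q/d_{\mathrm{min}}^r$, from which the same lower bound on $\specw$ follows. So the bookkeeping of degree factors is genuinely different from the symmetric case, even though the total count of $r$ factors, and hence the final inequality, is the same; this is the step your proof glossed over.
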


\subsection{Spectral bounds}

Next, we study the case of fixed, bounded operator norm of the weights, but variable depth, since we are interested in showing that {\em over-squashing hinders the expressive power of MPNNs for tasks requiring high-mixing of features associated with nodes at high commute time}. We first provide a characterization of the maximal mixing (and hence of the over-squashing measure) in terms of the graph-Laplacian and its pseudo-inverse. 

\textbf{Convention.} In the proofs below we usually deal with matrices with nonnegative entries. Accordingly, we introduce the following convention: we write that $\mathbf{A} \leq \mathbf{B}$ if $A_{ij}\leq B_{ij}$ for all entries $1\leq i,j \leq n$.

\begin{theorem}\label{app:thm:pseudo_inverse}
    Let $\gamma:= \sqrt{\tfrac{d_{\mathrm{max}}}{d_{\mathrm{min}}}}$ and set $\mpass = \mpass_{\mathrm{sym}}$ or $\mpass = \mpass_{\mathrm{rw}}$.  Consider an $\MPNN$ as in Thm.~\ref{thm:main_hessian_bound} with depth $m$, $\max\{\specw, \specomega/\specw + c_{1}\gamma + c_{2}\}\leq 1$. Define $\boldsymbol{\mathsf{Z}} := \mathbf{I} - c_{2}\boldsymbol{\Delta}$. Then the maximal mixing of nodes $v,u$ generated by such $\MPNN$ after $m$ layers is
\begin{align*}
\mixing_{\final}(v,u)&\leq \gamma^k\Big(m\frac{\sqrt{d_vd_u}}{2\lvert\mathsf{E}\rvert}\Big(1 + 2c^{(2)}(1 + \gamma^s)\Big) + \frac{1}{c_{2}}\Big(\boldsymbol{\mathsf{Z}}^2(\mathbf{I} - \boldsymbol{\mathsf{Z}}^{2m})(\mathbf{I} + \boldsymbol{\mathsf{Z}})^{-1}\boldsymbol{\Delta}^\dagger\Big)_{vu}\Big) \\ 
&+2\frac{c^{(2)}}{c_2}\gamma^k\Big(\Big((1 + \gamma^s)\mathbf{I} - \boldsymbol{\Delta}\Big)(\mathbf{I}- \boldsymbol{\mathsf{Z}}^{2m})(\mathbf{I} + \boldsymbol{\mathsf{Z}})^{-1}\boldsymbol{\Delta}^\dagger\Big)_{vu},
\end{align*}
where $k = s = 1$ if $\mpass = \mpass_{\mathrm{sym}}$ or $k = 4, s = 2$ if  $\mpass = \mpass_{\mathrm{rw}}$.
\end{theorem}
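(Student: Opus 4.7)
The plan is to start from Theorem~\ref{thm:main_hessian_bound}, discard the prefactors $(c_\sigma\specw)^{2m-k-1}\le 1$, dominate the propagation operator $\gso$ by $\boldsymbol{\mathsf{Z}} = \mathbf{I} - c_{2}\boldsymbol{\Delta}$ entrywise, and then use spectral calculus on $\boldsymbol{\mathsf{Z}}$---which commutes with $\mpass_{\mathrm{sym}}$ and $\boldsymbol{\Delta}$---to convert the finite geometric sums that appear into closed forms involving $\boldsymbol{\Delta}^\dagger$. Two entrywise inequalities are central, both worked out for $\mpass = \mpass_{\mathrm{sym}}$: first, $\gso \le \boldsymbol{\mathsf{Z}}$, which follows from $(\mpass_{\mathrm{sym}}\mathbf{1})_i = \sum_{j\sim i}1/\sqrt{d_id_j}\le \sqrt{d_i/d_{\mathrm{min}}}\le \gamma$ combined with the hypothesis $\specomega/\specw + c_1\gamma + c_2\le 1$; and second, the uniform bound $(\gso^k\mathbf{1})_i\le \gamma$ for every $k,i$, which follows by expanding $\boldsymbol{\mathsf{Z}}^k$ with the binomial theorem and using the identity $\mpass_{\mathrm{sym}}^l = \mathbf{D}^{1/2}(\mathbf{D}^{-1}\mathbf{A})^l\mathbf{D}^{-1/2}$ together with the fact that the random-walk matrix $\mathbf{D}^{-1}\mathbf{A}$ is row-stochastic.

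Plugging these into the first summand of Theorem~\ref{thm:main_hessian_bound} and reindexing $\ell = m-k$ gives
\[
\sum_{k=0}^{m-1}\bigl(\gso^{m-k}\mathrm{diag}(\gso^k\mathbf{1})\gso^{m-k}\bigr)_{vu} \;\le\; \gamma\sum_{\ell=1}^{m}(\boldsymbol{\mathsf{Z}}^{2\ell})_{vu}.
\]
To turn this into the form claimed, I would decompose $\boldsymbol{\mathsf{Z}}^{2\ell} = \phi_0\phi_0^\top + \boldsymbol{\mathsf{Z}}^{2\ell}(\mathbf{I} - \phi_0\phi_0^\top)$, then factor $\mathbf{I}-\boldsymbol{\mathsf{Z}}^{2m} = (\mathbf{I}-\boldsymbol{\mathsf{Z}})(\mathbf{I}+\boldsymbol{\mathsf{Z}})(\mathbf{I}+\boldsymbol{\mathsf{Z}}^{2}+\cdots+\boldsymbol{\mathsf{Z}}^{2m-2})$, and use $\mathbf{I}-\boldsymbol{\mathsf{Z}} = c_2\boldsymbol{\Delta}$ together with $\boldsymbol{\Delta}\boldsymbol{\Delta}^\dagger = \mathbf{I}-\phi_0\phi_0^\top$. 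This identifies $\sum_{\ell=1}^m \boldsymbol{\mathsf{Z}}^{2\ell}(\mathbf{I}-\phi_0\phi_0^\top) = \tfrac{1}{c_2}\boldsymbol{\mathsf{Z}}^2(\mathbf{I}-\boldsymbol{\mathsf{Z}}^{2m})(\mathbf{I}+\boldsymbol{\mathsf{Z}})^{-1}\boldsymbol{\Delta}^\dagger$, and since $(\phi_0\phi_0^\top)_{vu} = \sqrt{d_vd_u}/(2|\mathsf{E}|)$, the two pieces together reproduce the first (non-$c^{(2)}$) block of the RHS.

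For the $c^{(2)}$ contribution I would bound the three summands of $\boldsymbol{\mathsf{Q}}_k$ separately. For $\boldsymbol{\mathsf{P}}_k$ I would write $\boldsymbol{\mathsf{P}}_k \le \gamma\,\boldsymbol{\mathsf{Z}}^{m-k-1}\mpass_{\mathrm{sym}}\boldsymbol{\mathsf{Z}}^{m-k-1}$ by applying the two entrywise bounds to the outer blocks while \emph{keeping the central $\mpass_{\mathrm{sym}}$ intact}; commutativity of $\boldsymbol{\mathsf{Z}}$ and $\mpass_{\mathrm{sym}}$ then collapses this to $\gamma\mpass_{\mathrm{sym}}\boldsymbol{\mathsf{Z}}^{2(m-k-1)}$, and the same holds for $\boldsymbol{\mathsf{P}}_k^\top$ by symmetry. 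For the third summand, $((\mathrm{diag}(\mpass_{\mathrm{sym}}\mathbf{1}) + \mpass_{\mathrm{sym}})\gso^k\mathbf{1})_i \le 2\gamma^2$ yields an entrywise bound by $2\gamma^2\boldsymbol{\mathsf{Z}}^{2(m-k-1)}$. Setting $\mathbf{M} := \sum_{\ell=0}^{m-1}\boldsymbol{\mathsf{Z}}^{2\ell}$ and using the algebraic identity $(1+\gamma)\mathbf{I}-\boldsymbol{\Delta} = \gamma\mathbf{I}+\mpass_{\mathrm{sym}}$, summing over $k$ yields
\[
c^{(2)}\sum_{k=0}^{m-1}(\boldsymbol{\mathsf{Q}}_k)_{vu} \;\le\; 2c^{(2)}\gamma\bigl(((1+\gamma)\mathbf{I}-\boldsymbol{\Delta})\mathbf{M}\bigr)_{vu}.
\]
Applying the same geometric identity as in the first step to $\mathbf{M}(\mathbf{I}-\phi_0\phi_0^\top)$ and using $\boldsymbol{\Delta}\phi_0 = 0$ then rewrites this as the remaining $c^{(2)}$ terms in the claim.

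The hardest part is the preservation step just highlighted: the naive simplification $\mpass_{\mathrm{sym}} \le \boldsymbol{\mathsf{Z}}/c_2$ would inflate the bound by $1/c_2$ and destroy the clean $((1+\gamma)\mathbf{I}-\boldsymbol{\Delta})$ factor that appears in the target, so the combinatorial structure of $\boldsymbol{\mathsf{P}}_k$ (one $\mpass_{\mathrm{sym}}$ sandwiched between two blocks of $\gso$) must be preserved and then collapsed via commutativity. The case $\mpass = \mpass_{\mathrm{rw}}$ reduces to the symmetric one through the similarity $\mpass_{\mathrm{rw}}^l = \mathbf{D}^{-1/2}\mpass_{\mathrm{sym}}^l\mathbf{D}^{1/2}$, which re-expresses every occurrence of $\mpass_{\mathrm{rw}}$ in the relevant sums in terms of $\mpass_{\mathrm{sym}}$ at the price of boundary factors $\mathbf{D}^{\pm 1/2}$; tracking these through both the first-order estimate and the $c^{(2)}$-estimate inflates the overall prefactor by extra powers of $\gamma$, upgrading $\kappa$ and $s$ from $1$ to $4$ and $2$ respectively.
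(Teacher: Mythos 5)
Your proposal is correct and follows essentially the same route as the paper's proof: start from \cref{thm:main_hessian_bound}, discard the $(c_\sigma\specw)$-prefactors, dominate $\gso$ entrywise by $\boldsymbol{\mathsf{Z}}$ and $(\gso^k\mathbf{1})_i$ by $\gamma$ (with $\gamma^2$ in the random-walk case), bound $\boldsymbol{\mathsf{P}}_k$ while keeping the central $\mpass$ intact so that $(1+\gamma)\mathbf{I}-\boldsymbol{\Delta}$ emerges, and then sum the resulting geometric series to reach the $\boldsymbol{\Delta}^\dagger$ form. The only presentational difference is that you close the geometric sums by the operator identity $\mathbf{I}-\boldsymbol{\mathsf{Z}}^{2} = c_2\boldsymbol{\Delta}(\mathbf{I}+\boldsymbol{\mathsf{Z}})$ together with $\boldsymbol{\Delta}\boldsymbol{\Delta}^\dagger = \mathbf{I}-\phi_0\phi_0^\top$, whereas the paper writes out the eigenvalue sums explicitly before recognizing the pseudo-inverse; and for $\mpass_{\mathrm{rw}}$ you invoke the similarity $\mpass_{\mathrm{rw}}^\ell = \mathbf{D}^{-1/2}\mpass_{\mathrm{sym}}^\ell\mathbf{D}^{1/2}$ while the paper bounds column sums of $\gso^k$ directly — both yield the stated $\gamma^4,\gamma^2$ inflation, and the two treatments are equivalent.
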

\begin{proof}
    We first focus on the symmetrically normalized case $\mpass = \mpass_{\mathrm{sym}} = \mathbf{D}^{-1/2}\mathbf{A}\mathbf{D}^{-1/2}$, which we recall that we can rewrite as $\mpass_{\mathrm{sym}} = \mathbf{I} - \DELta$, where $\DELta$ is the (normalized) graph Laplacian \eqref{app:eq:laplacian:decomposition}. Since $\specw \leq 1$ and the message-passing matrix is symmetric, by Theorem \ref{thm:main_hessian_bound}, we can bound the maximal mixing of an $\MPNN$ as in the statement by
\[
\mixing_{\final}(v,u) \leq \sum_{k=0}^{m-1}\Big(\gso^{m-k}\mathrm{diag}(\gso^k\mathbf{1})\gso^{m-k} + c^{(2)}\boldsymbol{\mathsf{Q}}_{k}\Big)_{vu}.
\]
\noindent We focus on the first sum. Since $\mpass = \mathbf{D}^{1/2}(\mathbf{D}^{-1}\mathbf{A})\mathbf{D}^{-1/2}$, where $\mathbf{D}^{-1}\mathbf{A}$ is a row-stochastic matrix, we see that 
\[
\mathsf{S}_{ij} \leq \frac{\specomega}{\specw}\delta_{ij} + c_1\gamma\delta_{ij} + c_2\mathsf{A}_{ij},
\]
meaning that we can write $\gso \leq \alpha \mathbf{I} + c_2\mpass$, where $\alpha = \specomega/\specw + c_1\gamma$, using the convention introduced above. Accordingly, we can estimate the row-sum of the powers of $\gso$ by using $(\mpass^p\mathbf{1})_{i}\leq \gamma$ as
\[
(\gso^k\mathbf{1})_i \leq \sum_{p = 0}^{k}\binom{k}{p}\alpha^{k-p}c_2^p(\mpass^p\mathbf{1})_{i} \leq \gamma\sum_{p = 0}^{k}\binom{k}{p}\alpha^{k-p}c_2^p = \gamma(\alpha + c_2)^{k} \leq \gamma,
\]
\noindent where the last inequality simply follows from the assumptions. Therefore, we find
\begin{equation}\label{app:eq:helper_ct_1}
\Big(\sum_{k=0}^{m-1}\gso^{m-k}\mathrm{diag}(\gso^k\mathbf{1})\gso^{m-k}\Big)_{vu} \leq \gamma\sum_{k=0}^{m-1}(\gso^{2(m-k)})_{vu} = \gamma\sum_{k=1}^{m}(\gso^{2k})_{vu}.
\end{equation}
By the assumptions on the regularity of the message-functions, we can estimate $\gso$ from above by $\gso \leq \alpha\mathbf{I} + c_2\mpass = (\alpha + c_2)\mathbf{I} - c_2\DELta \leq \boldsymbol{\mathsf{Z}}$, and derive
\[
\Big(\sum_{k=0}^{m-1}\gso^{m-k}\mathrm{diag}(\gso^k\mathbf{1})\gso^{m-k}\Big)_{vu} \leq  \gamma\sum_{k=1}^{m}(\boldsymbol{\mathsf{Z}}^{2k})_{vu}.
\]
\noindent From the spectral decomposition of the graph-Laplacian in \eqref{app:eq:laplacian:decomposition} and the properties that $\lambda_0 = 0$ and $\phi_0(v) = \sqrt{d_v/2\lvert\mathsf{E}\rvert}$, we find
\begin{align*}
\sum_{k=1}^{m}(\boldsymbol{\mathsf{Z}}^{2k})_{vu} &= \sum_{k=1}^{m}\sum_{\ell = 0}^{n-1}(1-c_2\lambda_\ell)^{2k}\phi_\ell(v)\phi_\ell(u) \\
&= m\frac{\sqrt{d_v d_u}}{2\lvert \mathsf{E}\rvert} + \sum_{\ell = 1}^{n-1}\Big(\frac{1 - (1-c_2\lambda_\ell)^{2(m+1)}}{1 - (1-c_2\lambda_\ell)^{2}} - 1\Big)\phi_\ell(v)\phi_\ell(u) \\
&= m\frac{\sqrt{d_v d_u}}{2\lvert \mathsf{E}\rvert} + \sum_{\ell = 1}^{n-1}\Big(\frac{(1 - c_2\lambda_\ell)^2(1-(1-c_2\lambda_\ell)^{2m})}{(2-c_2\lambda_\ell)c_2\lambda_\ell}\Big)\phi_\ell(v)\phi_\ell(u).
\end{align*}
Since $c_2 \leq 1$ and $\gph$ is not bipartite, we derive that $(\mathbf{I} + \boldsymbol{\mathsf{Z}}) = 2\mathbf{I} - c_2\DELta$ is invertible and hence that the following decomposition holds:
\[
(\mathbf{I} + \boldsymbol{\mathsf{Z}})^{-1} = \sum_{\ell \geq 0}\frac{1}{2 - c_2\lambda_\ell}\phi_\ell\phi_\ell^\top.
\]
\noindent Therefore, we can rely on the spectral-decomposition of the pseudo-inverse of the graph-Laplacian in \eqref{app:eq:pseudoinverse} to get 
\begin{equation}\label{app:eq:pseudo_inverse_1}
    \Big(\sum_{k=0}^{m-1}\gso^{m-k}\mathrm{diag}(\gso^k\mathbf{1})\gso^{m-k}\Big)_{vu} \leq \gamma \Big( m\frac{\sqrt{d_v d_u}}{2\lvert \mathsf{E}\rvert} + \frac{1}{c_{2}}\Big(\boldsymbol{\mathsf{Z}}^2(\mathbf{I} - \boldsymbol{\mathsf{Z}}^{2m})(\mathbf{I} + \boldsymbol{\mathsf{Z}})^{-1}\boldsymbol{\Delta}^\dagger\Big)_{vu}\Big).
\end{equation}
It now remains to bound the term $c^{(2)}\sum_{k=0}^{m-1}(\boldsymbol{\mathsf{Q}}_k)_{vu}$. First, we note that by the symmetry of $\mpass$ and the estimate $(\gso^k\mathbf{1})_{i} \leq \gamma$, that we derived above, we obtain
\[
c^{(2)}\sum_{k=0}^{m-1}(\boldsymbol{\mathsf{Q}}_k)_{vu} \leq 2c^{(2)}\gamma\Big(\sum_{k=0}^{m-1}(\mpass\boldsymbol{\mathsf{Z}}^{2(m-k-1)})_{vu} + \gamma(\boldsymbol{\mathsf{Z}}^{2(m-k-1)})_{vu}\Big).
\]
\noindent Then we can use the identity $\mpass = \mathbf{I} - \DELta$, to find
\begin{equation}\label{app:eq:commute_time_helper}
c^{(2)}\sum_{k=0}^{m-1}(\boldsymbol{\mathsf{Q}}_k)_{vu} \leq 2c^{(2)}\gamma \sum_{k = 0}^{m-1}\Big(((1+\gamma)\mathbf{I} -\DELta)\boldsymbol{\mathsf{Z}}^{2(m-k-1)}\Big)_{vu}.
\end{equation}
\noindent By relying on the spectral decomposition as above, we finally get
\begin{align*}
    c^{(2)}\sum_{k=0}^{m-1}(\boldsymbol{\mathsf{Q}}_k)_{vu} &\leq 2c^{(2)}\gamma\Big( m \frac{\sqrt{d_v d_u}}{2\lvert \mathsf{E}\rvert}(1 + \gamma)\Big) \\
    &+ 2c^{(2)}\gamma\Big(\sum_{\ell > 0}(1 + \gamma - \lambda_\ell)\frac{1 - (1-c_2\lambda_\ell)^{2m}}{(2-c_2\lambda_\ell)c_2\lambda_\ell}\phi_\ell(v)\phi_\ell(u)\Big).
\end{align*}
As done previously, we can rewrite the last terms via $(\mathbf{I} + \boldsymbol{\mathsf{Z}})^{-1}$ as 
\begin{align}
    c^{(2)}\sum_{k=0}^{m-1}(\boldsymbol{\mathsf{Q}}_k)_{vu} &\leq 2c^{(2)}\gamma\Big( m \frac{\sqrt{d_v d_u}}{2\lvert \mathsf{E}\rvert}(1 + \gamma)\Big) 
    \notag \\&+  2\frac{c^{(2)}}{c_2}\gamma\Big((1+\gamma)\mathbf{I} - \DELta)(\mathbf{I} - \boldsymbol{\mathsf{Z}}^{2m})(\mathbf{I} + \boldsymbol{\mathsf{Z}})^{-1}\DELta^\dagger\Big)_{vu}. \label{app:eq:pseudO_inverse_2}
\end{align}
\noindent We can then combine \eqref{app:eq:pseudo_inverse_1} and \eqref{app:eq:pseudO_inverse_2} and derive the bound we claimed, when $\mpass = \mpass_{\mathrm{sym}}$. For the case $\mpass = \mpass_{\mathrm{rw}} = \mathbf{D}^{-1}\mathbf{A}$, it suffices to notice that $\gso \leq \alpha'\mathbf{I} + c_2\mpass$, where $\alpha' = \specomega/\specw + c_1$ and that
\[
(\mathbf{1}^\top\gso^k)_{i} \leq \sum_{j}\sum_{p = 0}^{k}\binom{k}{p}(\alpha')^{k-p}c_2^{p}((\mathbf{D}^{-1}\mathbf{A})^p)_{ji} \leq \frac{d_{\mathrm{max}}}{d_{\mathrm{min}}}(\alpha' + c_2)^k \leq \gamma^2,
\]
where we have used that by assumption $\alpha' + c_2 \leq 1$. Similarly, we get $(\gso^{m-k})^\top \gso^{(m-k)} \leq \gamma^2\boldsymbol{\mathsf{Z}}^{2(m-k)}$. Finally, the $\boldsymbol{\mathsf{Q}}_k$-term can be bounded by
\[
c^{(2)}\sum_{k = 0}^{m-1}(\boldsymbol{\mathsf{Q}}_k)_{vu} \leq 2c^{(2)}\Big(\sum_{k = 0}^{m-1}\gamma^4\mpass_{\mathrm{sym}}\boldsymbol{\mathsf{Z}}^{2(m-k-1)} + \gamma^6\boldsymbol{\mathsf{Z}}^{2(m-k-1)}\Big),
\]
and we can follow the previous steps in the symmetric case to complete the proof.
\end{proof}

\begin{corollary}
    Under the assumptions of Theorem \ref{app:thm:pseudo_inverse}, if the message functions in \eqref{eq:mpnn_message_functions} are linear -- as for GCN, SAGE, or GIN -- then the maximal mixing induced by such an $\MPNN$ of $m$ layers is
\[
\mixing_{\final}(v,u)\leq \Big(\frac{d_{\mathrm{max}}}{d_{\mathrm{min}}}\Big)^k\Big(m\frac{\sqrt{d_vd_u}}{2\lvert\mathsf{E}\rvert} + \frac{1}{c_{2}}\Big(\boldsymbol{\mathsf{Z}}^2(\mathbf{I} - \boldsymbol{\mathsf{Z}}^{2m})(\mathbf{I} + \boldsymbol{\mathsf{Z}})^{-1}\boldsymbol{\Delta}^\dagger\Big)_{vu} 
\]
\end{corollary}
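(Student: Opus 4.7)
The plan is to observe that for the standard architectures listed (GCN, SAGE, GIN), each message function $\psi^{(t)}(\mathbf{h}_v,\mathbf{h}_u)$ is \emph{affine} in its two arguments --- typically of the form $\psi^{(t)}(x,y) = c_1 x + c_2 y$ up to a constant offset. Consequently, the Hessian $\nabla^2 \psi^{(t)}$ vanishes identically, and so the bound $\|\nabla^2\psi^{(t)}\|\leq c^{(2)}$ used throughout Section~\ref{app:sec:b} can be taken with $c^{(2)} = 0$. This is the only fact that needs to be justified, since the rest of the argument is a mechanical specialization of an already established bound.

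Next, I would simply invoke Theorem~\ref{app:thm:pseudo_inverse} with the choice $c^{(2)} = 0$. The theorem's upper bound consists of two contributions: a first line that is multiplied by an $\bigl(1+2c^{(2)}(1+\gamma^s)\bigr)$ factor together with the $\boldsymbol{\mathsf{Z}}^2(\mathbf{I}-\boldsymbol{\mathsf{Z}}^{2m})(\mathbf{I}+\boldsymbol{\mathsf{Z}})^{-1}\boldsymbol{\Delta}^\dagger$ term, and a second line with an explicit prefactor $2c^{(2)}/c_2$. With $c^{(2)}=0$, the prefactor of the second line is zero, so that whole contribution disappears. In the first line the correction factor collapses to $1$, leaving exactly
\[
\mixing_{\final}(v,u)\leq \gamma^k\Big(m\frac{\sqrt{d_vd_u}}{2|\mathsf{E}|} + \frac{1}{c_2}\bigl(\boldsymbol{\mathsf{Z}}^2(\mathbf{I}-\boldsymbol{\mathsf{Z}}^{2m})(\mathbf{I}+\boldsymbol{\mathsf{Z}})^{-1}\boldsymbol{\Delta}^\dagger\bigr)_{vu}\Big),
\]
which is the claimed inequality (under the convention $\gamma^k = (d_{\mathrm{max}}/d_{\mathrm{min}})^{k/2}$, or relabelling the exponent to absorb the square root).

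There is essentially no obstacle: the only subtle point is checking that the architectures enumerated in the statement do indeed fit into the linear-$\psi$ regime. For GCN and GIN the message is literally $\psi(x,y) = y$; for GraphSAGE the aggregator before the learnable linear transformation $\mathbf{W}^{(t)}$ is again linear in the neighbor features. Since \eqref{eq:mpnn_message_functions} places the outer weight $\mathbf{W}^{(t)}$ \emph{outside} the aggregation, the nonlinearity coming from $\mathbf{W}^{(t)}$ does not enter $\psi$ itself, and $c^{(2)}=0$ holds. The remainder of the argument is a direct specialization of the proof of Theorem~\ref{app:thm:pseudo_inverse}, so no new estimates (spectral, combinatorial, or otherwise) are required.
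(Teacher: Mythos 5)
Your proof is correct and takes exactly the same route as the paper: set $c^{(2)}=0$ because the Hessian of a linear message function vanishes, then specialize Theorem~\ref{app:thm:pseudo_inverse}. You even correctly flag the cosmetic exponent discrepancy in the statement ($\gamma^k=(d_{\mathrm{max}}/d_{\mathrm{min}})^{k/2}$ versus the written $(d_{\mathrm{max}}/d_{\mathrm{min}})^k$), which the paper glosses over; since $(d_{\mathrm{max}}/d_{\mathrm{min}})^k\geq\gamma^k$ the stated bound is still valid, just slightly loosened.
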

\begin{proof}
    This follows from Theorem \ref{app:thm:pseudo_inverse} simply by noticing that if the message-function $\mess$ in \eqref{eq:mpnn_message_functions} is linear, then the upper bound for the norm of the Hessian can be taken to be zero, i.e. $c^{(2)} = 0$.
\end{proof}

\subsection{Proof of Theorem \ref{thm:commute_time}}\label{app:sub:proof:theorem:4.3}

We now expand the previous results to derive the minimal number of layers required to induce mixing in the case of bounded weights, showing that the depth may need to grow with the commute time of nodes. %, in terms of the commute time. 
We recall that $\gamma$ is $\sqrt{d_{\mathrm{max}}/d_{\mathrm{min}}}$ while $0=\lambda_0 < \lambda_1\leq \ldots \leq \lambda_{n-1}$ are the eigenvalues of the symmetrically normalized graph Laplacian \eqref{app:eq:laplacian:decomposition}. We restate Theorem \ref{thm:commute_time} below. % and provide exact values of the constants below in \eqref{app:eq:exact_constants}

\commute*
\begin{proof}
From now on we let $r$ be the shortest-walk distance between $v$ and $u$. If $m < r/2$, then we incur the under-reaching issue and hence we get zero mixing among the features associated with nodes $v,u$. Accordingly, we can choose $m \geq r/2$.
We need to provide an estimate on the maximal mixing induced by an $\MPNN$ as in the statement. We focus on the bound in Theorem \ref{thm:main_hessian_bound}, and recall that the first sum can be bounded as in \eqref{app:eq:helper_ct_1} by
\[
\Big(\sum_{k=0}^{m-1}\gso^{m-k}\mathrm{diag}(\gso^k\mathbf{1})\gso^{m-k}\Big)_{vu} \leq \gamma\sum_{k=1}^{m}(\gso^{2k})_{vu} \leq  \gamma\sum_{k=1}^{m}(\boldsymbol{\mathsf{Z}}^{2k})_{vu},
\]
\noindent where $\boldsymbol{\mathsf{Z}} := \mathbf{I} - c_2\DELta$. We can then bound the geometric sum by accounting for the odd powers too. %and (ii) by noticing that the entry $vu$ of any matrix $\boldsymbol{\mathsf{Z}}^{2k}$ is zero if $2k < r = d_\gph(v,u)$. 
Therefore, %the {\em assumptions imply that} $m \geq r/2$ -- otherwise we would have zero-mixing due to under-reaching. Accordingly, 
we get 
\[
\gamma\sum_{k=1}^{m}(\boldsymbol{\mathsf{Z}}^{2k})_{vu} \leq \gamma\sum_{k=0}^{2m}(\boldsymbol{\mathsf{Z}}^{k})_{vu} = \gamma\sum_{k=0}^{2m}\sum_{\ell \geq 0}(1-c_2\lambda_\ell)^{k}\phi_\ell(v)\phi_\ell(u).
\]
As for the proof of Theorem \ref{app:thm:pseudo_inverse}, we separate the contribution of the zero-eigenvalue and that of the positive ones, so we find that
\begin{align}
\sum_{k=0}^{2m}(\boldsymbol{\mathsf{Z}}^{k})_{vu} &\leq  (2m+1)\frac{\sqrt{d_vd_u}}{2\lvert\mathsf{E}\rvert} + \sum_{\ell >0}\Big( \frac{1 - (1-c_2\lambda_\ell)^{2m+1}}{c_2\lambda_\ell}\Big)\phi_\ell(v)\phi_\ell(u) \notag \\
&= (2m+1)\frac{\sqrt{d_vd_u}}{2\lvert\mathsf{E}\rvert} + \sum_{\ell >0}\frac{1}{c_2\lambda_\ell}\phi_\ell(v)\phi_\ell(u)  - \sum_{\ell > 0}\frac{(1-c_2\lambda_\ell)^{2m+1}}{c_2\lambda_\ell}\phi_\ell(v)\phi_\ell(u). \label{app:eq:q_expansion}
\end{align}
Thanks to the characterization of commute-time provided in \eqref{app:eq:lovasz}, we derive
\begin{align}
\sum_{\ell >0}\frac{1}{c_2\lambda_\ell}\phi_\ell(v)\phi_\ell(u) &= -\frac{\tau(v,u)}{4c_2\lvert\mathsf{E}\rvert}\sqrt{d_v d_u} + \frac{1}{2c_2}\sum_{\ell > 0}\frac{1}{\lambda_\ell}\Big(\phi_\ell^2(v)\sqrt{\frac{d_u}{d_v}} + \phi_\ell^2(u)\sqrt{\frac{d_v}{d_u}}\Big) \notag \\
&\leq -\frac{\tau(v,u)}{4c_2\lvert\mathsf{E}\rvert}\sqrt{d_v d_u} + \frac{1}{2c_2\lambda_1}\Big(\sqrt{\frac{d_v}{d_u}} + \sqrt{\frac{d_u}{d_v}} - \frac{\sqrt{d_v d_u}}{\lvert\mathsf{E}\rvert}\Big) \label{app:eq:combine_sum_1}
\end{align}
where in the last inequality we have used that $\sum_{\ell > 0}\phi^2_\ell(v) = 1 - \phi^2_0(v)$ since $\{\phi_\ell\}$ constitute an orthonormal basis, with $\phi_0(v) = \sqrt{d_v/2\lvert\mathsf{E}\rvert}$, and that $\lambda_\ell \geq \lambda_1$, for all $\ell > 0$. Next, we estimate the second sum in \eqref{app:eq:q_expansion}, and we note that $\lambda^\ast$ in the statement is either $\lambda_1$ or $\lambda_{n-1}$: 
\begin{align}
    -\sum_{\ell > 0}\frac{(1-c_2\lambda_\ell)^{2m + 1}}{c_2\lambda_\ell}\phi_\ell(v)\phi_\ell(u) &\leq \sum_{\ell > 0}\frac{|1-c_2\lambda^\ast|^{2m+1}}{c_2\lambda_\ell}\lvert \phi_\ell(v)\phi_\ell(u)\rvert \notag \\
    &\leq \frac{|1-c_2\lambda^\ast|^{2m+1}}{2c_2\lambda_1}\sum_{\ell > 0}(\phi_\ell^2(v) + \phi_\ell^2(u)) \notag \\
    &\leq  \frac{|1-c_2\lambda^\ast|^{r}}{2c_2\lambda_1}\Big(2 - \frac{d_v}{2\lvert\mathsf{E}\rvert} - \frac{d_u}{2\lvert\mathsf{E}\rvert}\Big), \label{app:eq:combine_sum_1_2}
    \end{align}
where in the last inequality we have used that $|1 - c_2\lambda^\ast| < 1$ and that $m\geq r/2$ (otherwise we would have zero-mixing due to under-reaching). Therefore, by combining \eqref{app:eq:combine_sum_1} and \eqref{app:eq:combine_sum_1_2}, we derive that the first sum on the right hand side of \eqref{eq:main_thm_bound} can be bounded from above by
\begin{align}
 \Big(\sum_{k=0}^{m-1}\gso^{m-k}\mathrm{diag}(\gso^k\mathbf{1})\gso^{m-k}\Big)_{vu} &\leq \gamma \Big((2m+1)\frac{\sqrt{d_vd_u}}{2\lvert\mathsf{E}\rvert} - \frac{\tau(v,u)}{4c_2\lvert\mathsf{E}\rvert}\sqrt{d_v d_u}\Big) \notag \\
 &+ \frac{\gamma}{2c_2\lambda_1}\Big(\sqrt{\frac{d_v}{d_u}} + \sqrt{\frac{d_u}{d_v}} - \frac{\sqrt{d_v d_u}}{\lvert\mathsf{E}\rvert}\Big) \notag \\ &+ \gamma\,\frac{|1-c_2\lambda^\ast|^{r}}{2c_2\lambda_1}\Big(2  - \frac{d_v}{2\lvert\mathsf{E}\rvert} - \frac{d_u}{2\lvert\mathsf{E}\rvert} \Big). \label{app:eq:final_ct_1}
\end{align}
Next, we continue by estimating the second sum entering the right hand side of \eqref{eq:main_thm_bound}. We recall that by \eqref{app:eq:commute_time_helper}, we have 
\begin{align*}
c^{(2)}\sum_{k=0}^{m-1}(\boldsymbol{\mathsf{Q}}_k)_{vu} &\leq 2c^{(2)}\gamma \sum_{k = 0}^{m-1}\Big(((1+\gamma)\mathbf{I} -\DELta)\boldsymbol{\mathsf{Z}}^{2(m-k-1)}\Big)_{vu} = 2c^{(2)}\gamma \sum_{k = 0}^{m-1}\Big(((1+\gamma)\mathbf{I} -\DELta)\boldsymbol{\mathsf{Z}}^{2k}\Big)_{vu} \\
&\leq 2c^{(2)}\gamma \sum_{k = 0}^{2(m-1)}\Big(((1+\gamma)\mathbf{I} -\DELta)\boldsymbol{\mathsf{Z}}^{k}\Big)_{vu} \\
&= 2c^{(2)}\gamma \sum_{k = 0}^{2(m-1)}\sum_{\ell \geq 0}((1+\gamma) - \lambda_\ell)(1-c_2\lambda_\ell)^k\phi_\ell(v)\phi_\ell(u).
\end{align*}
We then proceed as above, and separate the contributions associated with the kernel of the Laplacian, to find
\begin{align}
    \sum_{k=0}^{m-1}(\boldsymbol{\mathsf{Q}}_k)_{vu} &\leq  2\gamma\Big((1+\gamma)(2m-1)\frac{\sqrt{d_v d_u}}{2\lvert\mathsf{E}\rvert}\Big) \notag \\ 
&+ 2\gamma(1+\gamma)\sum_{\ell > 0}\frac{1}{c_2\lambda_\ell}(1 - (1-c_2\lambda_\ell)^{2m-1})\phi_\ell(v)\phi_\ell(u) \label{app:eq:final_commute_helper_1} \\
&- \frac{2\gamma}{c_2}\sum_{\ell > 0}(1 - (1-c_2\lambda_\ell)^{2m-1})\phi_\ell(v)\phi_\ell(u) \label{app:eq:final_commute_helper_2}.
\end{align}
For the term in \eqref{app:eq:final_commute_helper_1}, we can apply the same estimate as for the case of \eqref{app:eq:q_expansion}. Similarly, we can bound \eqref{app:eq:final_commute_helper_2} by
\[
- \frac{2\gamma}{c_2}\sum_{\ell > 0}(1 - (1-c_2\lambda_\ell)^{2m-1})\phi_\ell(v)\phi_\ell(u)\leq \frac{2\gamma}{c_2}\sum_{\ell > 0}\frac{1}{2}\Big(\phi_{\ell}^2(v) + \phi_\ell^2(u)\Big) \leq \frac{\gamma}{c_2}\Big(2 - \frac{d_v}{2\lvert\mathsf{E}\rvert} - \frac{d_u}{2\lvert\mathsf{E}\rvert}\Big).
\]
Therefore, we can finally bound the $\boldsymbol{\mathsf{Q}}_k$-terms in \eqref{eq:main_thm_bound} by 
\begin{align}
    c^{(2)}\sum_{k=0}^{m-1}(\boldsymbol{\mathsf{Q}}_k)_{vu} &\leq 2c^{(2)}\gamma\Big((1+\gamma)(2m-1)\frac{\sqrt{d_v d_u}}{2\lvert\mathsf{E}\rvert} - (1+\gamma)\frac{\tau(v,u)}{4c_2\lvert\mathsf{E}\rvert}\sqrt{d_v d_u}\Big) \notag \\
    &+ c^{(2)}\gamma\Big(\frac{1 + \gamma}{c_2\lambda_1}\Big(\sqrt{\frac{d_v}{d_u}} + \sqrt{\frac{d_u}{d_v}} - \frac{\sqrt{d_v d_u}}{\lvert\mathsf{E}\rvert}\Big) \notag \\
    &c^{(2)}\gamma \frac{1+\gamma}{c_2\lambda_1}|1-c_2\lambda^\ast|^{r-1}\Big(2  - \frac{d_v}{2\lvert\mathsf{E}\rvert} - \frac{d_u}{2\lvert\mathsf{E}\rvert}\Big) \Big) \notag \\
    &+ \frac{c^{(2)}\gamma}{c_2}\Big(2  - \frac{d_v}{2\lvert\mathsf{E}\rvert} - \frac{d_u}{2\lvert\mathsf{E}\rvert}\Big) . \label{app:eq:ct:final_2}
\end{align}
We can the combine \eqref{app:eq:final_ct_1} and \eqref{app:eq:ct:final_2}, to find that the maximal mixing induced by an $\MPNN$ of $m$ layers as in the statement of Theorem \ref{thm:commute_time}, is
\begin{align}
    \mixing_{\final}(v,u) &\leq \gamma\sqrt{d_v d_u}\Big(\frac{m}{\lvert\mathsf{E}\rvert}\mu + \frac{1}{2\lvert\mathsf{E}\rvert} - \frac{\tau(v,u)}{4c_2\lvert\mathsf{E}\rvert}\mu\Big) \notag \\
    &+ \gamma\frac{\mu}{2c_2\lambda_1}\Big(\sqrt{\frac{d_v}{d_u}} + \sqrt{\frac{d_u}{d_v}} - \frac{\sqrt{d_v d_u}}{\lvert\mathsf{E}\rvert}\Big) \notag \\
    &+\gamma\frac{\mu}{2c_2\lambda_1}|1-c_2\lambda^\ast|^{r-1}\Big(2  - \frac{d_v}{2\lvert\mathsf{E}\rvert} - \frac{d_u}{2\lvert\mathsf{E}\rvert}\Big)\notag \\ &+\frac{\gamma c^{(2)}}{c_2}\Big(2  - \frac{d_v}{2\lvert\mathsf{E}\rvert} - \frac{d_u}{2\lvert\mathsf{E}\rvert}\Big), \label{app:eq:ct_pre}    
\end{align}
where $\mu := 1 + 2c^{(2)}(1+\gamma)$ and we have removed the term $-2c^{(2)}\gamma(1+\gamma)\sqrt{d_v d_u}/2|\mathsf{E}| \leq 0$. Moreover, since $\lambda_1 < 1$ unless $\gph$ is the complete graph (and if that was the case, then we could take the distance $r$ below to simply be equal to 1) and $c_2\leq 1$, we find
\[
\gamma\sqrt{d_v d_u}\frac{1}{2\lvert\mathsf{E}\rvert}\Big(1 - \frac{\mu}{c_2\lambda_1}\Big(1 + \frac{|1-c_2\lambda^\ast|^{r-1}}{2}\Big(\sqrt{\frac{d_v}{d_u}} + \sqrt{\frac{d_u}{d_v}}\Big)\Big)  \leq 0.
\]
Accordingly, we can simplify \eqref{app:eq:ct_pre} as 
\begin{align*}
    \mixing_{\final}(v,u) &\leq \gamma\sqrt{d_v d_u}\Big(\frac{m}{\lvert\mathsf{E}\rvert}\mu - \frac{\tau(v,u)}{4c_2\lvert\mathsf{E}\rvert}\mu\Big)
    + \frac{\gamma\mu}{2c_2\lambda_1}\Big(\sqrt{\frac{d_v}{d_u}} + \sqrt{\frac{d_u}{d_v}}\Big) \notag \\
    &+\frac{\gamma\mu}{c_2\lambda_1}|1-c_2\lambda^\ast|^{r-1} +2\frac{\gamma c^{(2)}}{c_2}.
\end{align*}
We can now rearrange the terms and obtain
\begin{align}
    m &\geq \frac{\tau(v,u)}{4c_2} + \frac{|\mathsf{E}|}{\sqrt{d_v d_u}}\Big(\frac{\mixing_{y_\gph}(v,u)}{\gamma\mu} -\frac{1}{2c_2\lambda_1}\Big( \sqrt{\frac{d_v}{d_u}} + \sqrt{\frac{d_u}{d_v}}\Big) -\frac{1}{c_2\lambda_1}|1-c_2\lambda^\ast|^{r-1} - 2\frac{c^{(2)}}{c_2}\frac{1}{\mu}\Big) \notag \\
    &\geq \frac{\tau(v,u)}{4c_2} + \frac{|\mathsf{E}|}{\sqrt{d_v d_u}}\Big(\frac{\mixing_{y_\gph}(v,u)}{\gamma\mu} -\frac{1}{2c_2\lambda_1}(2\gamma) -\frac{1}{c_2\lambda_1}|1-c_2\lambda^\ast|^{r-1} - 2\frac{c^{(2)}}{c_2}\frac{1}{\mu}\Big) \notag \\
    &\geq \frac{\tau(v,u)}{4c_2} + \frac{|\mathsf{E}|}{\sqrt{d_v d_u}}\Big(\frac{\mixing_{y_\gph}(v,u)}{\gamma\mu} -\frac{1}{c_2}\Big(\frac{\gamma +|1-c_2\lambda^\ast|^{r-1}}{\lambda_1} + 2\frac{c^{(2)}}{\mu}\Big)\Big), \notag 
\end{align}
\noindent which completes the proof.
\end{proof}
We note that the case of $\mpass = \mpass_{\mathrm{rw}}$ follows easily since one can adapt the previous argument exactly as in the proof of Theorem \ref{app:thm:pseudo_inverse}, which lead to the same bounds once we replace $\gamma$ with $\gamma' = d_{\mathrm{max}}/d_{\mathrm{min}}$.

First, we note that the bounds again simplify further and become sharper if the message-functions $\mess$ in \eqref{eq:mpnn_message_functions} are linear.
\begin{corollary}
    If the assumptions of Theorem \ref{thm:commute_time} are satisfied, and the message-functions $\psi$ are linear -- as for GCN, GIN, GraphSAGE -- then 
    \[
    m \geq \frac{\tau(v,u)}{4c_2} + \frac{|\mathsf{E}|}{\sqrt{d_v d_u}}\Big(\frac{\mixing_{y_\gph}(v,u)}{\gamma} -\frac{1}{c_2\lambda_1}\Big(\gamma +|1-c_2\lambda^\ast|^{r-1}\Big)\Big). 
    \]
In fact, if the graph is regular with degree $d$, then
\[ 
m \geq \frac{\tau(v,u)}{4c_2} + \frac{|\mathsf{E}|}{d}\Big(\mixing_{y_\gph}(v,u) -\frac{1}{c_2\lambda_1}\Big(1 +|1-c_2\lambda^\ast|^{r-1}\Big)\Big).
\]
\end{corollary}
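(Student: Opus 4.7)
The plan is to combine the Hessian upper bound from Theorem \ref{thm:main_hessian_bound} with the spectral representation of commute time due to \cite{lovasz1993random}. Since the MPNN generates mixing $\mixing_{y_\gph}(v,u)$, this quantity must be bounded by the right hand side of \eqref{eq:main_thm_bound}; I will therefore expand that upper bound in terms of $\tau(v,u)$ and the spectrum of the normalized Laplacian, and then invert the resulting inequality to extract the lower bound on $m$. Since $c_\sigma\leq 1$ and $\specw\leq 1$, all prefactors $(c_\sigma\specw)^{2m-k-1}$ disappear, leaving a sum of quadratic forms in $\gso$ and the $\boldsymbol{\mathsf{Q}}_k$ terms.

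Next, using the assumption $\specomega/\specw + c_1\gamma + c_2\leq 1$ together with the fact that $\mpass_{\mathrm{sym}}$ has row-sums bounded by $\gamma$, I would establish the pointwise bound $\gso\leq \boldsymbol{\mathsf{Z}} := \mathbf{I}-c_2\boldsymbol{\Delta}$ and the uniform bound $(\gso^k\mathbf{1})_i\leq \gamma$. Plugging these into \eqref{eq:main_thm_bound} reduces the first sum to $\gamma\sum_{k=1}^m(\boldsymbol{\mathsf{Z}}^{2k})_{vu}$, which I would then dominate by the full geometric sum $\gamma\sum_{k=0}^{2m}(\boldsymbol{\mathsf{Z}}^k)_{vu}$ (adding odd powers only makes things larger since entries are nonnegative). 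The $\boldsymbol{\mathsf{Q}}_k$ terms admit a parallel treatment: using $\mpass = \mathbf{I}-\boldsymbol{\Delta}$, they can be bounded by $2c^{(2)}\gamma\sum_{k=0}^{2(m-1)}(((1+\gamma)\mathbf{I}-\boldsymbol{\Delta})\boldsymbol{\mathsf{Z}}^k)_{vu}$, which has the same structure up to the prefactor $(1+\gamma)\mathbf{I}-\boldsymbol{\Delta}$.

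The heart of the proof is then spectral. Decomposing $\boldsymbol{\mathsf{Z}}^k = \sum_\ell (1-c_2\lambda_\ell)^k \phi_\ell\phi_\ell^\top$ and summing the geometric series, I would separate the $\ell=0$ contribution, which yields the term $(2m+1)\sqrt{d_vd_u}/(2|\mathsf{E}|)$ using $\phi_0(v)=\sqrt{d_v/2|\mathsf{E}|}$, from the $\ell>0$ contribution, which splits as $\sum_{\ell>0}\frac{1}{c_2\lambda_\ell}\phi_\ell(v)\phi_\ell(u) - \sum_{\ell>0}\frac{(1-c_2\lambda_\ell)^{2m+1}}{c_2\lambda_\ell}\phi_\ell(v)\phi_\ell(u)$. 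To the first piece I would apply the polarization identity $2\phi_\ell(v)\phi_\ell(u)/\sqrt{d_vd_u} = \phi_\ell^2(v)/d_v + \phi_\ell^2(u)/d_u - (\phi_\ell(v)/\sqrt{d_v}-\phi_\ell(u)/\sqrt{d_u})^2$ and invoke the Lovász formula \eqref{app:eq:lovasz} to produce the $-\tau(v,u)/(4c_2|\mathsf{E}|)\sqrt{d_vd_u}$ term, bounding the leftover via $\lambda_\ell\geq\lambda_1$ and Parseval. The second piece, being a spectral sum of sign-indefinite terms, is controlled by $|1-c_2\lambda^\ast|^{2m+1}$ times a Parseval estimate, and since we may assume $m\geq r/2$ (else $\widetilde{\osq}$ is infinite by under-reaching), the exponent can be replaced with $r-1$ to give the $|1-c_2\lambda^\ast|^{r-1}$ factor appearing in the statement.

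Finally I would repeat the spectral expansion for the $\boldsymbol{\mathsf{Q}}_k$ contribution, which contributes the extra factor $2c^{(2)}(1+\gamma)$ that combines with $1$ to form $\mu$. Collecting everything, the mixing bound becomes $\gamma\sqrt{d_vd_u}(m\mu/|\mathsf{E}| - \tau(v,u)\mu/(4c_2|\mathsf{E}|))$ plus error terms proportional to $1/\lambda_1$ and $c^{(2)}/c_2$; rearranging yields \eqref{eq:lower_bound_m}. The main obstacle is bookkeeping: one must carefully cancel sign-indefinite Parseval residues against the negative $\tau(v,u)$ term, and verify that the various nonpositive leftovers (such as $\gamma\sqrt{d_vd_u}/(2|\mathsf{E}|) - \gamma\mu/(2c_2\lambda_1)\cdot\sqrt{d_vd_u}/|\mathsf{E}|$) may be discarded because $\lambda_1\leq 1$ and $\mu\geq 1$, so that the resulting inequality retains precisely the form claimed.
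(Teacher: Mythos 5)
Your proposal never actually proves the corollary: it re-derives Theorem \ref{thm:commute_time} from scratch, which the corollary already takes as given. The statement to be proved is a specialization, and the whole argument is a one-liner. When the message functions $\psi$ are linear (GCN, GIN, GraphSAGE), their Hessian vanishes, so one may take $c^{(2)}=0$ in the bound. Substituting $c^{(2)}=0$ in \eqref{eq:lower_bound_m} forces $\mu = 1 + 2c^{(2)}(1+\gamma) = 1$, and the term $2c^{(2)}/\mu$ drops out, giving exactly the first displayed inequality of the corollary. For the regular case, observe $d_{\mathrm{max}}=d_{\mathrm{min}}=d$ so $\gamma = 1$ and $\sqrt{d_v d_u} = d$, which gives the second display. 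None of this requires going through the Lov\'asz formula or the spectral decomposition again.

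Your sketch, interpreted as a proof of Theorem \ref{thm:commute_time}, is essentially the paper's argument and looks sound at the level of bookkeeping you describe. But as a proof of the corollary it has a genuine gap: it never invokes the linearity hypothesis, never observes $c^{(2)}=0\Rightarrow\mu=1$, and ends with \eqref{eq:lower_bound_m} (still containing $\mu$ and the $c^{(2)}/\mu$ error term), which is a strictly weaker inequality than the one claimed. The regular-graph statement, which requires noting $\gamma=1$, is also not addressed. You have redone the hard part the corollary was meant to reuse and skipped the easy step that the corollary actually asks for.
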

% Finally, we highlight that one can derive sharper lower bounds for the number of layers provided we can access the full spectral decomposition of the graph Laplacian. Below, we simply report the bound for the case of $GCN$, yet the other MPNN cases follow similarly by copying the proof above to bound $\boldsymbol{\mathsf{Q}}_k$. % In fact, under the same assumptions as in Theorem \ref{thm:commute_time}, we 
% \begin{corollary}
%     If we have GCN satisfying the assumptions of Theorem \ref{thm:commute_time}, a tighter lower bound for the number of layers is
%     \[
%     (m +1) \geq \frac{| \mathsf{E} |}{\sqrt{d_v d_u}}\Big(\frac{\mixing_{y_{\mathsf{G}}}(v,u)}{\gamma} -(\mathbf{I} - (\mathbf{I}-\boldsymbol{\Delta})^{2m+1} \Big) 
%     \]
% \end{corollary}
\subsection{The case of the unnormalized adjacency matrix}\label{app:sub:unnormalized}
In this Section we extend the analysis on the depth required to induce mixing, to the case of the unnormalized adjacency matrix $\mathbf{A}$. When $\mpass=\mathbf{A}$, the aggregation in \eqref{eq:mpnn_message_functions} is simply a sum over the neighbours, a case that covers the classical GIN-architecture. In this way, the messages are no longer scaled down by the degree of (either) the endpoints of the edge, which means that, {\em in principle}, the whole GNN architecture is more sensitive but independent of where we are in the graph. First, we generalize Theorem \ref{thm:commute_time} to this setting. We note that the same conclusions hold, provided that the maximal operator norm of the weights is smaller than the maximal degree $d_{\mathrm{max}}$; this is not surprising, since it accounts for the lack of the normalization of the messages.

\begin{corollary}
Consider an MPNN as in \eqref{eq:mpnn_message_functions} with $\mpass = \mathbf{A}$. If $\specomega/(\specw d_{\mathrm{max}}) + c_1 + c_2 \leq 1$ and $\specw d_{\mathrm{max}}\leq 1$, then the minimal depth $m$ satisfies the same lower bound as in Theorem \ref{thm:commute_time} with $\gamma =1$. 
\end{corollary}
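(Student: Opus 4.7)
The plan is to reduce the unnormalized-adjacency case to the symmetric-normalized one of Theorem \ref{thm:commute_time} by absorbing the degree factor into an effective weight $\specw d_{\mathrm{max}}$. The first step is to produce two entrywise estimates for $\gso = (\specomega/\specw)\mathbf{I} + c_{1}\mathbf{D} + c_{2}\mathbf{A}$. Since $\mathbf{A}_{ij} = \sqrt{d_i d_j}(\mpass_{\mathrm{sym}})_{ij}\leq d_{\mathrm{max}}(\mpass_{\mathrm{sym}})_{ij}$ and $\mathbf{D}\leq d_{\mathrm{max}}\mathbf{I}$, one obtains
\[
\gso \;\leq\; d_{\mathrm{max}}\bigl[(\specomega/(\specw d_{\mathrm{max}}) + c_{1})\mathbf{I} + c_{2}\mpass_{\mathrm{sym}}\bigr] \;\leq\; d_{\mathrm{max}}\boldsymbol{\mathsf{Z}},
\]
where the last inequality uses $\mpass_{\mathrm{sym}} = \mathbf{I}-\boldsymbol{\Delta}$ together with the hypothesis $\specomega/(\specw d_{\mathrm{max}}) + c_{1}+c_{2}\leq 1$. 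A parallel induction starting from the identity $\gso\mathbf{1} = (\specomega/\specw)\mathbf{1} + (c_{1}+c_{2})\mathbf{D}\mathbf{1}$ and the same hypothesis yields the row-sum estimate $(\gso^k\mathbf{1})_i \leq d_{\mathrm{max}}^k$.

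I would then substitute both bounds into the inequality of Theorem \ref{thm:main_hessian_bound} and analyse the right-hand side termwise. In the outer-product summands $\specw^{2m-k}(\gso^{m-k})^{\top}\mathrm{diag}(\mathbf{1}^{\top}\gso^{k})\gso^{m-k}$, the $2(m-k)$ factors of $\gso$ contribute $d_{\mathrm{max}}^{2(m-k)}\boldsymbol{\mathsf{Z}}^{2(m-k)}$ and the diagonal contributes $d_{\mathrm{max}}^k$; combined with the $\specw^{2m-k}$ prefactor this gives a total scaling $(\specw d_{\mathrm{max}})^{2m-k}\leq 1$, leaving $(\boldsymbol{\mathsf{Z}}^{2(m-k)})_{vu}$. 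For each of the three pieces of $\boldsymbol{\mathsf{Q}}_k$ — $\boldsymbol{\mathsf{P}}_k$, $\boldsymbol{\mathsf{P}}_k^{\top}$, and the diagonal term sandwiched by $\gso^{m-k-1}$ — identical bookkeeping using $(\mpass\mathbf{1})_i = d_i\leq d_{\mathrm{max}}$ and $\mathbf{A}\leq d_{\mathrm{max}}\mpass_{\mathrm{sym}}$ produces the factor $(\specw d_{\mathrm{max}})^{2m-k-1}\leq 1$ together with the same expressions in $\boldsymbol{\mathsf{Z}}$ and $\mpass_{\mathrm{sym}}$ that appeared in the proof of Theorem \ref{thm:commute_time}; in particular $\mathrm{diag}(\mpass\mathbf{1})+\mpass = \mathbf{D}+\mathbf{A}\leq d_{\mathrm{max}}(2\mathbf{I}-\boldsymbol{\Delta})$ plays the role of $(1+\gamma)\mathbf{I}-\boldsymbol{\Delta}$ there with $\gamma$ replaced by $1$.

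From here the remainder of the proof of Theorem \ref{thm:commute_time} carries over verbatim: expand $\sum_{k=1}^{m}(\boldsymbol{\mathsf{Z}}^{2k})_{vu}$ in the eigenbasis of $\boldsymbol{\Delta}$, invoke Lov\'asz's identity to recognise $\sum_{\ell>0}\lambda_\ell^{-1}\phi_\ell(v)\phi_\ell(u)$ as a commute-time term plus controlled spectral residuals, and rearrange to isolate $m$. The key observation is that every symbolic occurrence of $\gamma$ in the original derivation traces back either to $(\gso_{\mathrm{sym}}^k\mathbf{1})_i\leq\gamma$, to $(\mpass_{\mathrm{sym}}\mathbf{1})_i\leq\gamma$, or to $\mathrm{diag}(\mpass_{\mathrm{sym}}\mathbf{1})\leq\gamma\mathbf{I}$; in the present setting each of these degree-ratio contributions has already been absorbed into a $(\specw d_{\mathrm{max}})^{\bullet}\leq 1$ factor, so in the final inequality on $m$ every $\gamma$ is replaced by $1$. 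The main, albeit mild, obstacle is precisely this termwise bookkeeping, since $\boldsymbol{\mathsf{P}}_k$, $\boldsymbol{\mathsf{P}}_k^{\top}$, the outer product, and the remaining diagonal piece of $\boldsymbol{\mathsf{Q}}_k$ each carry a different count of $\gso$- and $\mpass$-factors, and one must verify case-by-case that the accumulated power of $d_{\mathrm{max}}$ always pairs with a matching power of $\specw$.
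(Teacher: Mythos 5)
Your proposal is correct and follows essentially the same route as the paper: both bound $\gso \leq d_{\mathrm{max}}\boldsymbol{\mathsf{Z}}$ and the row-sums $(\gso^k\mathbf{1})_i \leq d_{\mathrm{max}}^k$, absorb the degree factors into $(\specw d_{\mathrm{max}})^{\bullet}\leq 1$ term by term (including the $\boldsymbol{\mathsf{Q}}_k$ pieces via $\mathbf{D}+\mathbf{A}\leq d_{\mathrm{max}}(2\mathbf{I}-\boldsymbol{\Delta})$), and then repeat the proof of Theorem \ref{thm:commute_time} with $\gamma=1$. Your explicit accounting of where $\gamma$ appears in the original derivation and why each instance is replaced by $1$ is a useful clarification of what the paper leaves implicit.
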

\begin{proof}
    First, we note that in this case 
    \[
    \mathsf{S}_{ij} \leq \frac{\specomega}{\specw}\delta_{ij} + c_1d_{\mathrm{max}}\delta_{ij} + c_2 A_{ij}   \leq d_{\mathrm{max}}\Big(\alpha \delta_{ij} + c_2 (\mpass_{\mathrm{sym}})_{ij} \Big),
    \]
where $\alpha = \specomega/(\specw d_{\mathrm{max}}) + c_1$. In particular, we find that 
\[
(\gso^k\mathbf{1})_{i} \leq \sum_{p=0}^{k}\binom{k}{p}\alpha^{k-p}c^p_2(\mathbf{A}^p\mathbf{1})_{i}\leq (d_{\mathrm{max}})^k(\alpha + c_2)^k.
\]
Accordingly, we can bound the first sum in \eqref{eq:main_thm_bound} as
\[
\Big(\sum_{k=0}^{m-1}\specw^{2m-k}(d_{\mathrm{max}})^k(\alpha + c_2)^k (d_{\mathrm{max}})^{2(m-k)}\Big(\alpha\mathbf{I} + c_2\mpass_{\mathrm{sym}}\Big)^{2(m-k)}\Big)_{vu} \leq \Big(\sum_{k=0}^{m-1}\boldsymbol{\mathsf{Z}}^{2(m-k)}\Big)_{vu}, 
\]
where we have used the assumptions $\alpha + c_2 \leq 1$, and $\specw d_{\mathrm{max}} \leq 1$, and the definition $\boldsymbol{\mathsf{Z}} := \mathbf{I} - c_2\DELta$ in Theorem \ref{app:thm:pseudo_inverse}. Since this term is the same one entering the argument in the proof of Theorem \ref{thm:commute_time} (once we set $\gamma =1$) we can proceed in the same way to estimate it. A similar argument works for the sum of the $\boldsymbol{\mathsf{Q}}_k$ terms, which, thanks to our assumptions, can still be bounded as in \eqref{app:eq:commute_time_helper} with $\gamma =1$ so that we can finally simply copy the proof of Theorem \ref{thm:commute_time}.
\end{proof}

\textbf{A relative measurement for $\widetilde{\osq}$.} To account for the fact that different message-passing matrices $\mpass$ may lead to inherently quite distinct scales (think of the case where the aggregation is a mean vs when it is a sum), one could modify the over-squashing characterization in Definition \ref{def:osq} as follows:

\begin{definition}
    Given an MPNN with capacity $(m,\specw)$, we define the {\bf relative over-squashing} of nodes $v,u$ as 
\[
\widetilde{\osq}^{\mathsf{rel}}_{v,u}(m,\specw) := \left(\frac{\sum_{k=0}^{m-1}\specw^{2m-k-1}\Big(\specw(\gso^{m-k})^\top\mathrm{diag}(\mathbf{1}^\top\gso^k)\gso^{m-k} + c^{(2)}\boldsymbol{\mathsf{Q}}_k\Big)_{vu}}{\max_{i,j\in\mathsf{V}}\sum_{k=0}^{m-1}\specw^{2m-k-1}\Big(\specw(\gso^{m-k})^\top\mathrm{diag}(\mathbf{1}^\top\gso^k)\gso^{m-k} + c^{(2)}\boldsymbol{\mathsf{Q}}_k\Big)_{ij}}\right)^{-1}.
\]
\end{definition}
The normalization proposed here is similar to the idea of relative score introduced in \citep{xu2018representation}. This way, a larger scale induced by a certain choice of the message-passing matrix $\mpass$, is naturally accounted for by the relative measurement. In particular, the relative over-squashing is now quantifying the maximal mixing among a certain pair of nodes $v,u$ compared to the  maximal mixing that the same MPNN over the same graph can generate among {\em any} pair of nodes. In our theoretical development in Section \ref{sec:4} we have decided to rely on the absolute measurement since our analysis depends on the derivation of the maximal mixing induced by an MPNN (i.e. upper bounds) which translate into necessary criteria for an MPNN to generate a given level of mixing. In principle, to deal with relative measurements, one would also need some form of lower bound on the maximal mixing and hence address also whether the conditions provided are indeed sufficient. We reserve a thorough investigation of this angle to future work.

\section{The case of node-level tasks}\label{app:sub:node_level}

In this Section we discuss how one can extend our analysis to node-level tasks and further comment on the novelty of our approach compared to existing results in \citep{black2023understanding,di2023over}. First, we emphasize that the analysis on the Jacobian of node features carried over in \cite{black2023understanding,di2023over} cannot be extended to graph-level functions and that in fact, {\em our notion of mixing is needed} to assess how two different node-features are communicating when the target is a graph-level function. 

From now on, let us consider the case where the function we need to learn is $\mathbf{Y}:\R^{n\times d}\rightarrow \R^{n\times d}$, and as usual we assume it to be equivariant with respect to permutations of the nodes. A natural attempt to connect the results in \cite{black2023understanding,di2023over} and the expressivity of the MPNNs -- in the spirit of our Section \ref{sec:3} -- could be to characterize the \textbf{\em first-order interactions} (or mixing of order 1) of the features associated with nodes $v,u$ with respect to the underlying node-level task $\mathbf{Y}$ as
\[
\mixing^{(1)}_{\mathbf{Y}}(v,u) = \max_{\mathbf{X}}\max_{1\leq \alpha,\beta \leq d}\left\vert \frac{\partial (\mathbf{Y}(\mathbf{X}))_v^\alpha}{\partial x^\beta_u} \right\vert,
\]
where $(\mathbf{Y}(\mathbf{X}))_v\in\R^{d}$ is the value of the node-level map at $v$. Accordingly, one can then use Theorem \ref{app:thm:first_derivative} to derive upper bounds on the maximal first-order interactions that MPNNs \eqref{eq:mpnn_message_functions} can induce among nodes. As a consequence of this approach, we would still find that MPNNs struggle to learn functions with large $\mixing^{(1)}_{\mathbf{Y}}(v,u)$ if nodes $v,u$ have large commute time. In particular, in light of Theorem \ref{app:thm:first_derivative}, we can extend the measure of over-squashing to the case of first-order interactions for node-level tasks. Once again, below we tacitly assume that the non-linear activation $\sigma$ satisfies $|\sigma'|\leq 1$, although it is straightforward to extend the formulation to the general case.

\begin{definition}\label{def:true_osq_1} Given an $\MPNN$ as in \eqref{eq:mpnn_message_functions} with capacity $(m,\specw)$,
we define the {\bf first-order over-squashing} of $v,u$ as
\[
\osq^{(1)}_{v,u}(m,\specw) := \Big(\mixing^{(1)}_{\mathbf{Y}}(v,u)\Big)^{-1}.
\]    
    
\end{definition}
As for the case of graph-level tasks, we can then study a proxy (lower bound) for the node-level over-squashing of order 1 by:

\begin{definition}\label{def:osq_order_1} Given an $\MPNN$ as in \eqref{eq:mpnn_message_functions} with capacity $(m,\specw)$ and $\gso$ defined in \eqref{eq:def_gso},
we approximate the {\bf first-order over-squashing} of $v,u$ as
\[
\widetilde{\osq}^{(1)}_{v,u}(m,\specw) := \Big((c_\sigma\specw)^m (\gso^m)_{vu}\Big)^{-1}.
\]    
\end{definition}
It follows then from Theorem \ref{app:thm:first_derivative}, that a necessary condition for an MPNN to learn a node-level function $\mathbf{Y}$ with first-order mixing $\mixing^{(1)}_{\mathbf{Y}}(v,u)$ is
\[
\widetilde{\osq}^{(1)}_{v,u}(m,\specw) < \Big(\mixing^{(1)}_{\mathbf{Y}}(v,u)\Big)^{-1}.
\]
It is straightforward to argue as in Theorem \ref{thm:commute_time} and \citep{black2023understanding} for example, to derive that nodes at higher effective resistance will incur higher first-order over-squashing. Accordingly:

{\em An MPNN as in \eqref{eq:mpnn_message_functions} with bounded capacity, cannot learn node-level functions with high first-order interactions among nodes $v,u$ with high effective resistance.}

\textbf{Building a hierarchy of measures.} Although first-order derivatives might be enough to capture some form of over-squashing for node-level tasks, even in this scenario we can study the pairwise mixing induced {\em at a specific node}, and hence consider the curvature (or Hessian) of the node-level function $\mathbf{Y}$ -- which is more expressive than the first-order Jacobian. Accordingly, for a node-level function $\mathbf{Y}:\R^{n\times d}\rightarrow\R^{n\times d}$, we say that it has \textbf{\em second-order interactions} (or mixing of order 2) $\mixing_{\mathbf{Y}}^{(2)}(i,v,u)$ of the features associated with nodes $v,u$ at a given node $i$ when
\[
\mixing_{\mathbf{Y}}^{(2)}(i,v,u) = \max_{\mathbf{X}}\left\| \frac{\partial^2(\mathbf{Y}(\mathbf{X}))_i}{\partial \x_u \partial \x_v}\right\|.
\]
We can then restate Theorem \ref{app:thm:second_derivative} as follows -- we let $\mathbf{Y}^{(m)}$ be the node-level function computed by an MPNN after $m$ layers.

\begin{corollary}
    Given $\MPNN$s as in \eqref{eq:mpnn_message_functions}, let $\sigma$ and $\mess^{(t)}$ be $\mathcal{C}^{2}$ functions and assume $\lvert \sigma^\prime\rvert,\lvert\sigma^{\prime\prime}\rvert \leq c_{\sigma}$, $\| \OMEga^{(t)}\| \leq \specomega$, $\| \W^{(t)}\| \leq \specw$, $\| \nabla_1\mess^{(t)}\| \leq c_{1}$, $\| \nabla_2\mess^{(t)}\| \leq c_{2}$, $\| \nabla^2\mess^{(t)}\| \leq c^{(2)}$.
\noindent Let $\gso\in\R^{n\times n}$ be defined as in \eqref{eq:def_gso}. Given nodes $i,v,u\in\mathsf{V}$, if $\mathsf{P}^{(\ell)}_{(vu)}\in\R^{n}$ is as in \eqref{app:eq:auxiliary_P} and $m$ is the number of layers, then the maximal mixing of order 2 of the MPNN at node $i$ satisfies
\begin{align}
    \mixing_{\mathbf{Y}^{(m)}}^{(2)}(i,v,u) &\leq \sum_{k=0}^{m-1}\sum_{j \in \mathsf{V}}(c_{\sigma}\specw)^{2m-k-1}\,\specw(\gso^{m-k})_{jv}(\gso^k)_{ij}(\gso^{m-k})_{ju} \notag \\ &+ c^{(2)}\sum_{\ell = 0}^{m-1}(c_{\sigma}\specw)^{m+\ell}(\gso^{m-1-\ell}\mathsf{P}^{(\ell)}_{(vu)})_{i}.
\end{align}
\end{corollary}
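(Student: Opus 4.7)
The plan is to recognize that this corollary is an almost immediate restatement of Theorem \ref{app:thm:second_derivative} once we identify the node-level prediction with the iterated hidden state. Specifically, for an $\MPNN$ as in \eqref{eq:mpnn_message_functions} run for $m$ layers, the node-level output at node $i$ is, by construction, $(\mathbf{Y}^{(m)}(\mathbf{X}))_i = \h_i^{(m)}$. Consequently, differentiating twice,
\[
\frac{\partial^2(\mathbf{Y}^{(m)}(\mathbf{X}))_i}{\partial \x_u \partial \x_v} \;=\; \nabla^2_{uv}\h_i^{(m)},
\]
and the definition of $\mixing^{(2)}_{\mathbf{Y}^{(m)}}(i,v,u)$ reduces to the maximum over input features of the operator norm $\|\nabla^2_{uv}\h_i^{(m)}\|$.

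The second step is to observe that the upper bound on $\|\nabla^2_{uv}\h_i^{(m)}\|$ provided by Theorem \ref{app:thm:second_derivative} is uniform over the input $\mathbf{X}$: the estimate is expressed purely in terms of the regularity constants $c_\sigma, c_1, c_2, c^{(2)}$ and the operator-norm bounds $\specomega, \specw$ on the weights, together with the fixed matrix $\gso$ defined in \eqref{eq:def_gso} and the topological quantity $\mathsf{P}^{(\ell)}_{(vu)}$ defined in \eqref{app:eq:auxiliary_P}. None of these quantities depend on the specific choice of $\mathbf{X}$, because in the induction carried out to prove Theorem \ref{app:thm:second_derivative} the activation and message-function derivatives are bounded pointwise by the respective constants. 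Therefore, taking the supremum over $\mathbf{X}$ does not increase the bound, and the desired inequality follows.

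Putting these two observations together, the proof amounts to a direct substitution into the conclusion of Theorem \ref{app:thm:second_derivative}, with no new estimation required. There is no substantive obstacle: the only care needed is to note that the node-level mixing here is defined through the operator norm of the Hessian (rather than the entry-wise maximum used for graph-level tasks in Definition \ref{def:mixing}), and this is precisely the quantity already controlled by Theorem \ref{app:thm:second_derivative}. Hence the corollary is established as an immediate reformulation of the second-order Jacobian estimate in the node-level setting.
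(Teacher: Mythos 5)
Your proposal is correct and is essentially identical to the paper's own treatment: the paper explicitly introduces this corollary as a ``restatement'' of Theorem \ref{app:thm:second_derivative}, identifying $(\mathbf{Y}^{(m)}(\mathbf{X}))_i=\h_i^{(m)}$ and noting that the bound of that theorem is uniform in $\mathbf{X}$, which is exactly your argument.
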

Similarly to Definition \ref{def:osq}, we can use the maximal mixing (at the node-level) to characterize the over-squashing of order two at a specific node as follows: as usual, for simplicity we assume that $c_\sigma = 1$.

\begin{definition}
    Given an $\MPNN$ as in \eqref{eq:mpnn_message_functions} with capacity $(m,\specw)$ and $\gso$ defined in \eqref{eq:def_gso},
we approximate the {\bf second-order over-squashing} of $v,u$ at node $i$ as
\begin{align*}
\widetilde{\osq}^{(2)}_{i,v,u}(m,\specw) &:= \Big(\sum_{k=0}^{m-1}\sum_{j \in \mathsf{V}}\specw^{2m-k}(\gso^{m-k})_{jv}(\gso^k)_{ij}(\gso^{m-k})_{ju} \\
&+ c^{(2)}\sum_{\ell = 0}^{m-1}\specw^{m+\ell}(\gso^{m-1-\ell}\mathsf{P}^{(\ell)}_{(vu)})_{i}\Big)^{-1}.
\end{align*}
\end{definition}
It is then straightforward to extend our theoretical analysis to derive how $\widetilde{\osq}^{(2)}$ prevents MPNNs from learning node-level functions with high-mixing at some specific node $i$ of features associated with nodes $v,u$ at large commute time. To support our claim, %for simplicity we focus on the case with linear-message functions (which includes common architectures like GCN, GIN, GraphSAGE), so that we can take $c^{(2)} = 0$. Moreover, as 
consider the setting in Theorem \ref{thm:commute_time} and hence let $\mpass = \mpass_{\mathrm{sym}} = \mathbf{D}^{-1/2}\mathbf{A}\mathbf{D}^{-1/2}$.
% Then we can rewrite the bound on the maximal mixing of order two induced at node $i$ as
% \[
% \mixing_{\mathbf{Y}^{(m)}}^{(2)}(i,v,u) \leq  \sum_{k=0}^{m-1}\sum_{j \in \mathsf{V}}\specw^{2m-k}(\gso^{m-k})_{vj}(\gso^k)_{ij}(\gso^{m-k})_{ju}.
% \]
Under the same assumptions of Theorem \ref{thm:commute_time}, we find
\[
(\gso^k)_{ij} \leq 1.
\]
We can then simply copy the proof of Theorem \ref{thm:commute_time} once we set $\gamma =1$ and extend its conclusions as follows:
\begin{corollary}
    Consider an MPNN as in \eqref{eq:mpnn_message_functions} and let the assumptions of Theorem \ref{thm:commute_time} hold. If the MPNN generates second-order mixing $\mixing_{\mathbf{Y}}^{(2)}(i,v,u)$ at node i, with respect to the features associated with nodes $v,u$, then the number of layers $m$ satisfy:
    \[
    m \geq \frac{\tau(v,u)}{4c_2} + \frac{\lvert\mathsf{E}\rvert}{\sqrt{d_v d_u}}\Big(\frac{\mixing_{y_\gph}(v,u)}{\mu} - \frac{1}{c_2}\Big(\frac{1 + |1 - c_2\lambda^\ast|^{r-1}}{\lambda_1} + 2\frac{c^{(2)}}{\mu}\Big)\Big),
    \]
where $\mu = 1 + 4c^{(2)}$.
\end{corollary}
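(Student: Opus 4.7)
The plan is to adapt the proof of Theorem \ref{thm:commute_time} to the node-level setting, working with the second-order mixing at a single target node $i$ rather than the aggregated graph-level quantity obtained after readout. I would start from the upper bound on $\mixing^{(2)}_{\mathbf{Y}^{(m)}}(i,v,u)$ stated in the preceding corollary. The key structural substitution is that the graph-level proof uses the row-sum $(\mathbf{1}^\top \gso^k)_j \leq \gamma$ (which arises because $\mathsf{SUM}$ readout sums over all receiving nodes), whereas in the node-level case we have the factor $(\gso^k)_{ij}$ at the distinguished node $i$, and this entry is bounded by $1$ under the assumptions of Theorem \ref{thm:commute_time}: since $\gso \leq \boldsymbol{\mathsf{Z}} = \mathbf{I} - c_2\boldsymbol{\Delta}$ entry-wise and $\boldsymbol{\mathsf{Z}}$ has spectrum in $[-1,1]$, applying Cauchy--Schwarz to $(\boldsymbol{\mathsf{Z}}^k)_{ij} = \sum_\ell (1-c_2\lambda_\ell)^k \phi_\ell(i)\phi_\ell(j)$ yields $(\gso^k)_{ij} \leq (\boldsymbol{\mathsf{Z}}^k)_{ij} \leq 1$.

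With this replacement in hand, the first sum $\sum_{k=0}^{m-1}\specw^{2m-k}\sum_j (\gso^{m-k})_{jv}(\gso^k)_{ij}(\gso^{m-k})_{ju}$ collapses to $\sum_{k=1}^m (\boldsymbol{\mathsf{Z}}^{2k})_{vu}$, using the symmetry of $\gso$ (recall $\mpass = \mpass_{\mathrm{sym}}$) to recognise $\sum_j (\gso^{m-k})_{jv}(\gso^{m-k})_{ju} = (\gso^{2(m-k)})_{vu}$ together with $\gso \leq \boldsymbol{\mathsf{Z}}$. I would then reproduce verbatim the spectral step from the proof of Theorem \ref{thm:commute_time}: isolating the $\phi_0$ contribution to extract the linear-in-$m$ term $m\sqrt{d_v d_u}/(2|\mathsf{E}|)$ and applying the Lov\'asz identity \eqref{app:eq:lovasz} to the $\ell>0$ block to produce the commute-time term $-\tau(v,u)\sqrt{d_v d_u}/(4c_2|\mathsf{E}|)$, with error contributions controlled by $1/(c_2\lambda_1)$ and $|1-c_2\lambda^\ast|^{r-1}/(c_2\lambda_1)$.

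The curvature-of-$\psi$ correction $c^{(2)}\sum_\ell \specw^{m+\ell}(\gso^{m-1-\ell}\mathsf{P}^{(\ell)}_{(vu)})_i$ is handled identically: since $(\gso^{m-1-\ell})_{ij}\leq 1$, the same manipulation as in \eqref{app:eq:commute_time_helper} reduces it to a sum of $((1+1)\mathbf{I}-\boldsymbol{\Delta})\boldsymbol{\mathsf{Z}}^{2(m-k-1)}$-type terms whose $(v,u)$-entries expand spectrally in the same way as before. Combining the two contributions and collecting constants, every appearance of $\gamma$ in the graph-level bound is replaced by $1$, which turns the original $\mu = 1 + 2c^{(2)}(1+\gamma)$ into $\mu = 1 + 4c^{(2)}$. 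Rearranging the resulting inequality $\mixing^{(2)}_{\mathbf{Y}^{(m)}}(i,v,u) \leq \ldots$ for $m$ then yields the claimed lower bound.

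The main obstacle is purely bookkeeping: I have to verify that every place where the proof of Theorem \ref{thm:commute_time} used the row-sum bound $(\gso^k\mathbf{1})_i \leq \gamma$ is safely replaced by the entry-wise bound $(\gso^k)_{ij}\leq 1$, without disturbing the spectral algebra that singles out commute time as the dominant topological quantity. No new estimates are required beyond those already established, which is exactly why the excerpt advertises this as ``copying the proof with $\gamma = 1$''.
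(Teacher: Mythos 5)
Your proposal matches the paper's own argument: the appendix simply asserts $(\gso^k)_{ij}\leq 1$ under the hypotheses of Theorem~\ref{thm:commute_time} and then declares that one can ``copy the proof of Theorem~\ref{thm:commute_time} once we set $\gamma=1$,'' which is precisely the substitution you make, replacing the row-sum bound $(\gso^k\mathbf{1})_i\leq\gamma$ by the single-entry bound $(\gso^k)_{ij}\leq 1$ at the distinguished node $i$. Your Cauchy--Schwarz derivation of $(\boldsymbol{\mathsf{Z}}^k)_{ij}\leq 1$ supplies a justification the paper states without proof, and the rest of the spectral bookkeeping is indeed identical.

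One caveat, shared by the paper's own corollary and therefore not a defect specific to your write-up: the $\gamma$ appearing inside $(1+\gamma)$ in \eqref{app:eq:commute_time_helper} comes from the column sums of $\mathrm{diag}(\mpass\mathbf{1})+\mpass$ (a property of $\mpass$ alone, roughly $\sqrt{d_{\mathrm{max}}/d_{\mathrm{min}}}$), not from the now-replaced quantity $(\gso^k\mathbf{1})_i$; consequently it does not collapse to $1$ on irregular graphs, and a scrupulous transplant would leave $\mu = 1 + 2c^{(2)}(1+\gamma)$ rather than $1+4c^{(2)}$. Since both you and the paper apply the $\gamma\to 1$ substitution wholesale, you land on the same stated constant, but it is worth being aware that the ``copy with $\gamma=1$'' slogan is slightly stronger than what the argument literally delivers off the complete/regular case.
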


Accordingly, in this Section we have adapted our results from graph-level tasks to node-level tasks and proved that:

\textbf{The message of Section \ref{app:sub:node_level}.} {\em An MPNN of bounded capacity $(m,\specw)$, cannot learn node-level functions that, at some node $i$, induce high (first order or second order) mixing of features associated with nodes $v,u$ whose commute time is large}.

\section{Additional details of experiments and further ablations \ref{sec:5}}\label{app:sec:exp}
\subsection{Computing the commute time}
The commute time $\tau$ between two nodes $u,v\in \mathsf{V}$ on a graph $\gph$ can be efficiently computed via the effective resistance $\ER$, with $\tau(u,v) = 2|\mathsf{E}|\ER(u,v)$.
In order to compute the effective resistance $\ER$, we introduce the (non-normalized) Laplacian matrix $\mathbf{L}=\mathbf{D} - \mathbf{A}$, where $\mathbf{D}$ is the degree matrix. The effective resistance can then be computed by 
\begin{equation*}
    \ER(u,v) = \Gamma_{uu} + \Gamma_{vv} - 2\Gamma_{uv},
\end{equation*}
where $\boldsymbol{\Gamma}$ is the the Moore-Penrose inverse of 
\begin{equation*}
    \mathbf{L} + \frac{1}{|V|}\mathbf{1}_{|V|\times|V|},
\end{equation*}
with $\mathbf{1}_{|V|\times|V|} \in \mathbb{R}^{|V|\times|V|}$ being a matrix with all entries set to one.

\subsection{On the training error}
\label{app:sec:exp:train_error}
In this section, we report the training error of the MPNNs trained in section \ref{sec:5}. \fref{fig:eff_resist_train} shows the training MAE corresponding to the experiment in section \ref{sec:5_ct}, while \fref{fig:depth_syn_ZINC_train} shows the training MAE corresponding to section \ref{sec:5_depth}. We can see that in both cases, the training MAE exhibits the same qualitative behavior as the reported test MAE in the main paper, i.e., the training MAE increases for increasing levels of commute time $\tau$, while it decreases for increasing number of MPNN layers, which further validates our claim that {\em over-squashing hinders the expressive power of MPNNs.}
\begin{figure}[ht!]
\centering
\begin{minipage}[t]{.475\textwidth}
\includegraphics[width=1.\textwidth]{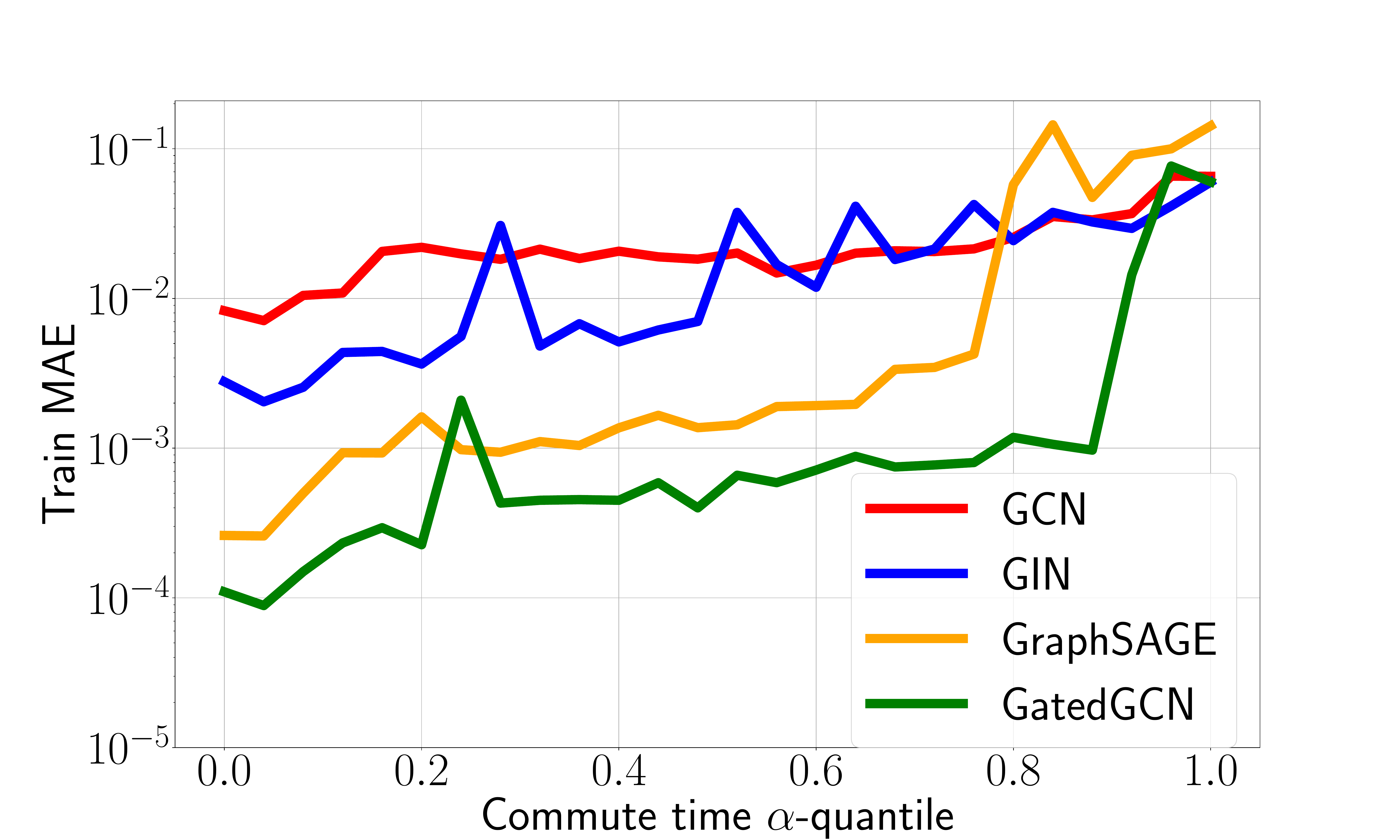}
\caption{Train MAE of GCN, GIN, GraphSAGE, and GatedGCN on synthetic ZINC, where the commute time of the underlying mixing is varied, while the $\MPNN$ architecture is fixed (e.g., depth, number of parameters), i.e., mixing according to increasing values of the $\alpha$-quantile of the $\tau$-distribution over the ZINC graphs.
}
\label{fig:eff_resist_train}
\end{minipage}
\hfill
\begin{minipage}[t]{.475\textwidth}
\includegraphics[width=1.\textwidth]{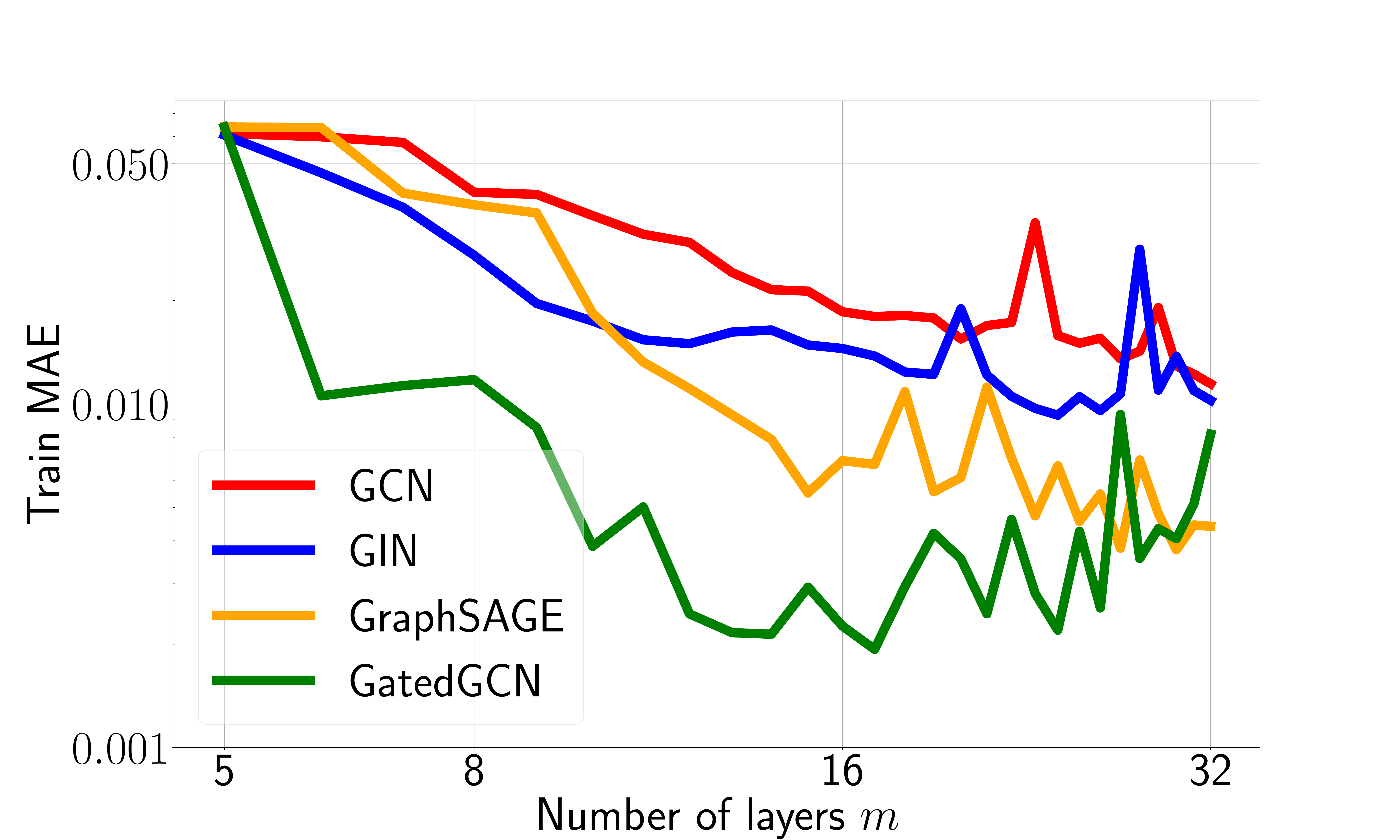}
\caption{Train MAE of GCN, GIN, GraphSAGE, and GatedGCN on synthetic ZINC, where the commute time is fixed to be high (i.e., at the level of the $0.8$-quantile), while only the depth of the underlying $\MPNN$ is varied between $5$ and $32$ (all other architectural components are fixed).
}
\label{fig:depth_syn_ZINC_train}
\end{minipage}
\end{figure}

\subsection{On the over-squashing measure}
\label{sec:SM_exp_OSQ}
In this section, we examine how the over-squashing measure $\widetilde{\osq}$ (as of Definition \ref{def:osq}) depends on the commute time $\tau$ as well as on the depth of the underlying MPNN. To this end, we follow the experimental setup of section \ref{sec:5_ct} and \ref{sec:5_depth}, but instead of training the models and presenting their performance in terms of the test MAE, we compute $\widetilde{\osq}$ of the underlying models. We can see in \fref{fig:OSQ_vs_CT} that $\widetilde{\osq}$ increases for increasing values of the $\alpha$-quantile of the $\tau$-distribution for all MPNNs considered here. Moreover, we can see in \fref{fig:OSQ_vs_depth} that $\widetilde{\osq}$ decreases for increasing number of layers for all considered models.
\begin{figure}[ht!]
\centering
\begin{minipage}[t]{.475\textwidth}
\includegraphics[width=1.\textwidth]{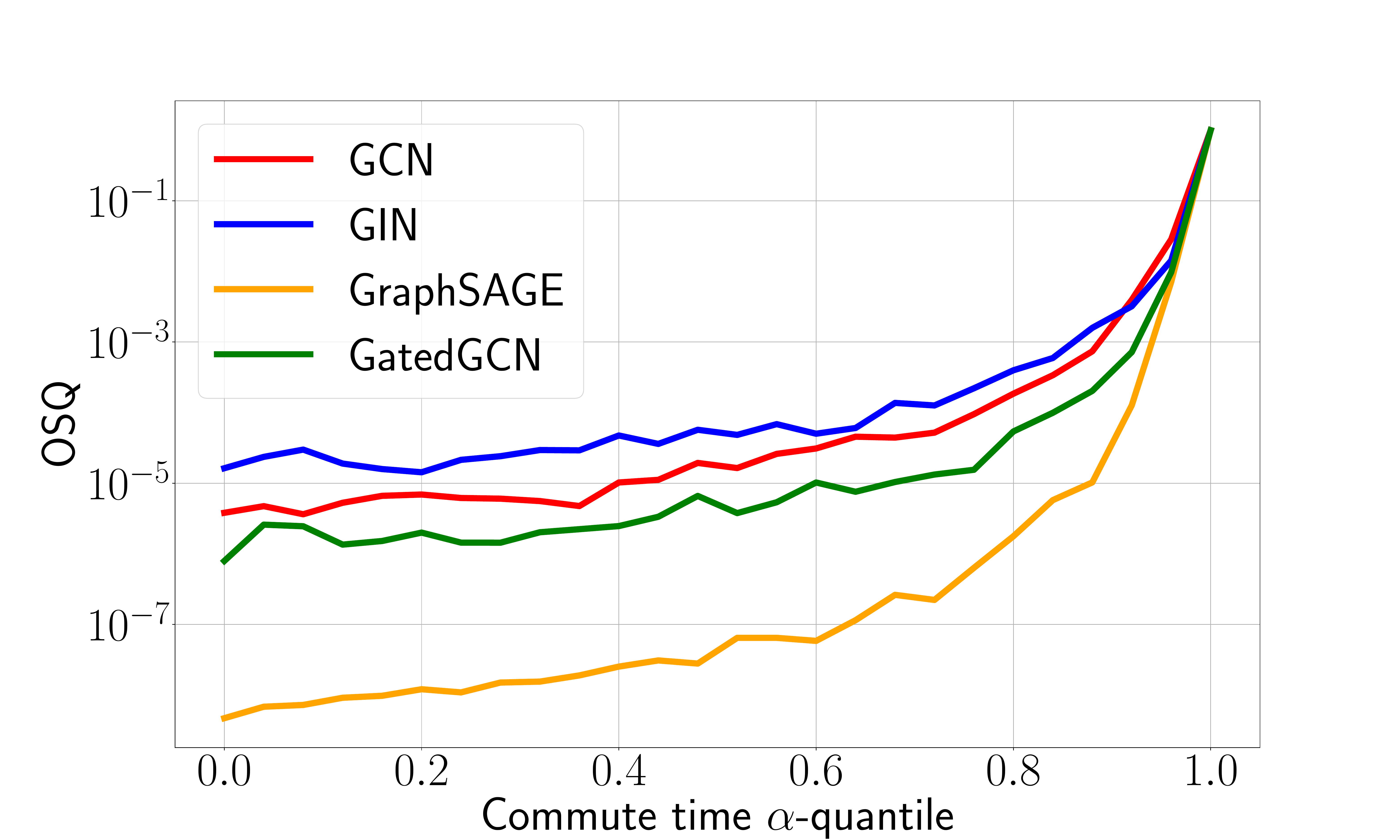}
\caption{$\widetilde{\osq}$ (Definition \ref{def:osq}) of GCN, GIN, GraphSAGE, and GatedGCN on synthetic ZINC, where the commute time of the underlying mixing is varied, while the $\MPNN$ architecture is fixed (e.g., depth, number of parameters), i.e., mixing according to increasing values of the $\alpha$-quantile of the $\tau$-distribution over the ZINC graphs.
}
\label{fig:OSQ_vs_CT}
\end{minipage}
\hfill
\begin{minipage}[t]{.475\textwidth}
\includegraphics[width=1.\textwidth]{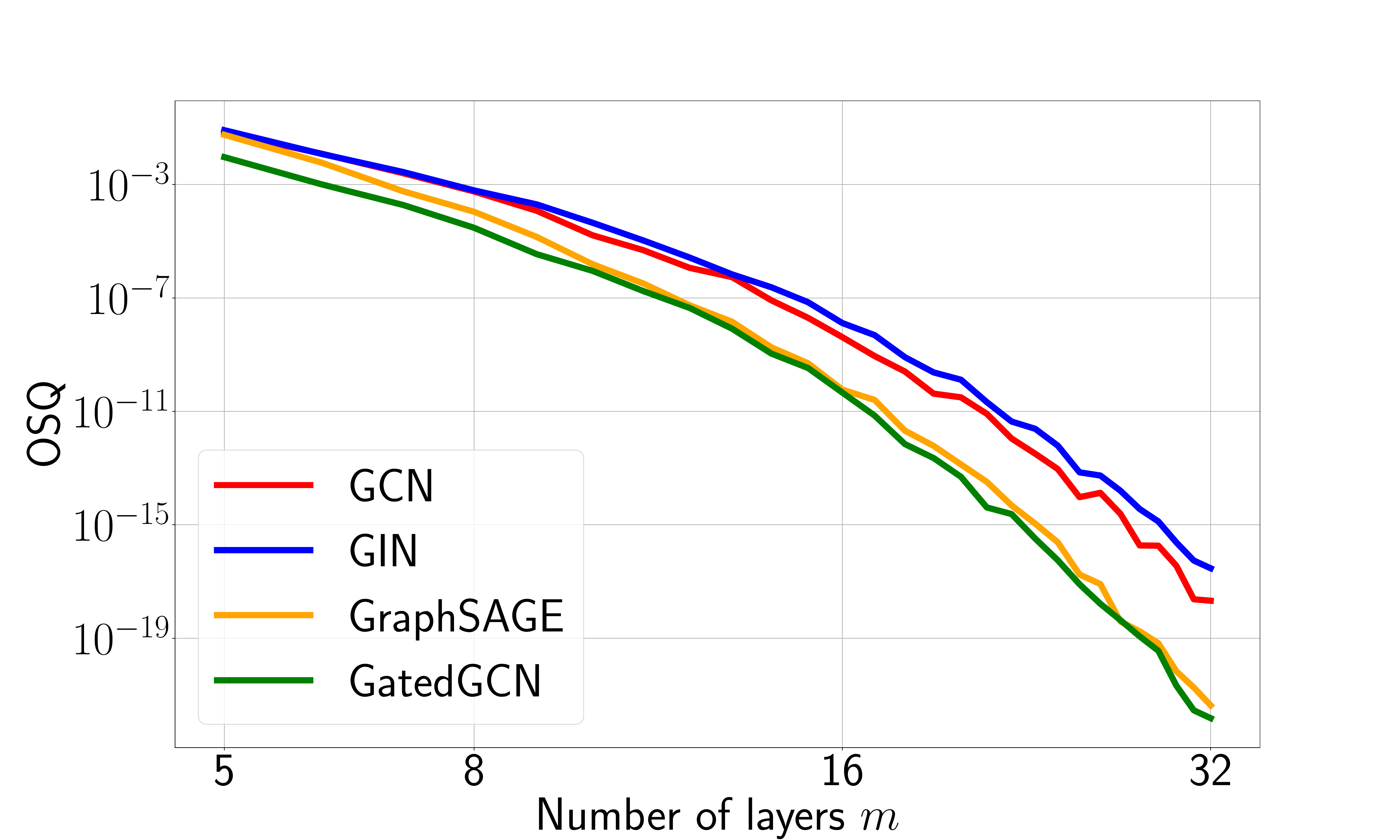}
\caption{$\widetilde{\osq}$ (Definition \ref{def:osq}) of GCN, GIN, GraphSAGE, and GatedGCN on synthetic ZINC, where the commute time is fixed to be high (i.e., at the level of the $0.8$-quantile), while only the depth of the underlying $\MPNN$ is varied between $5$ and $32$ (all other architectural components are fixed).
}
\label{fig:OSQ_vs_depth}
\end{minipage}
\end{figure}

\end{document}